\author{%
  Yuxing Liu\thanks{Equal contribution.},
  \quad
  Rui Pan\footnotemark[1],
  \quad
  Tong Zhang
  \\
  \\
  University of Illinois Urbana-Champaign \\
  \small\texttt{yuxing6,ruip4,tozhang@illinois.edu}
}
\date{}
\definecolor{lightblue}{rgb}{0.88,1,1}
\theoremstyle{plain}
\newtheorem{theorem}{Theorem}[section]
\newtheorem{lemma}[theorem]{Lemma}
\theoremstyle{definition}
\newtheorem{assumption}[theorem]{Assumption}
\theoremstyle{remark}
\newtheorem{remark}[theorem]{Remark}
\theoremstyle{definition}
\newtheorem{example}[theorem]{Example}
\newcommand{\Norm}[1]{\left\|#1\right\|}
\newcommand{\dotprod}[2]{\left\langle#1,#2\right\rangle}
\newcommand{\Abs}[1]{\left\vert #1 \right\vert}
\newcommand{\tr}[1]{{\rm tr}\left( #1 \right)}
\newcommand{\cW}{\mathcal{W}}
\newcommand{\cF}{\mathcal{F}}
\newcommand{\cB}{\mathcal{B}}
\newcommand{\bw}{\mathbf{w}}
\newcommand{\bx}{\mathbf{x}}
\newcommand{\bI}{\mathbf{I}}
\newcommand{\bH}{\mathbf{H}}
\newcommand{\bv}{\mathbf{v}}
\newcommand{\bLa}{\mathbf{\Lambda}}
\newcommand{\bla}{\mathbf{\lambda}}
\newcommand{\bLat}{\mathbf{\Tilde{\Lambda}}}
\newcommand{\blat}{\mathbf{\Tilde{\lambda}}}
\newcommand{\bz}{\mathbf{z}}
\newcommand{\bn}{\mathbf{n}}
\newcommand{\bg}{\mathbf{g}}
\newcommand{\EE}{\mathbb{E}}
\newcommand{\cO}{\mathcal{O}}
\newcommand{\RR}{\mathbb{R}}
\newcommand{\bL}{\mathbf{L}}
\newcommand{\bsigma}{\bm{\sigma}}
\title{\textbf{AdaGrad under Anisotropic Smoothness}}
\begin{document}

\maketitle

\begin{abstract}
Adaptive gradient methods have been widely adopted in training large-scale deep neural networks, especially large foundation models. Despite the huge success in practice, their theoretical advantages over classical gradient methods with uniform step sizes across all coordinates (e.g. SGD) have not been fully understood, especially in the large batch-size setting commonly used in practice. This is because the only theoretical result that can demonstrate this benefit was obtained in the original paper of Adagrad for convex nonsmooth objective functions, which is insufficient for large batch algorithms. 
In this work, we attempt to resolve this gap between theory and practice
by proposing a novel anisotropic generalized smoothness assumption and providing corresponding analyses of Adagrad. It is shown that under anisotropic smoothness and noise conditions, AdaGrad can achieve faster convergence guarantees in terms of better dimensional dependence than algorithms with uniform step sizes across all coordinates.
Experiments in logistic regression and instruction following fine-tuning tasks provide strong evidence to support our novel assumption and theoretical analysis.
\end{abstract}

\section{Introduction}

To solve the stochastic optimization problem
\begin{align}\label{eq:problem_formulation}
    \min_{\bw\in\RR^d} f(\bw) \triangleq \EE_\xi[f(\bw;\xi)],
\end{align}
adaptive gradient methods~\citep{duchi2011adaptive,zeiler2012adadelta,tieleman2012lecture,kingma2014adam,loshchilov2017decoupled} are among the most popular methods. These methods have gained incredible importance from their superior efficiency, especially in training large foundation models, where large batch sizes are commonly employed. 
One of the most important features of adaptive gradient methods is the coordinate-wise anisotropic step size.
Take Adagrad~\citep{duchi2011adaptive,streeter2010less}, the first adaptive gradient method as an example, which writes
\begin{align*}
    \bw_{t+1} = \bw_t - \eta_t \bLa_t^{-1}\bg_t,
\end{align*}
where $\bg_t$ notes the stochastic gradient estimation obtained at $\bw_t$ and $\bLa_t \neq \bI_d$ is a diagonal matrix, representing the square root of a coordinate-wise summation of former gradient estimations. 
Despite the huge success in practice, as also noted in~\citet{li2021frequency,kunstner2023noise,li2024convergence}, the theoretical understanding of when and why adaptive gradient methods enjoy acceleration over classical gradient algorithms with uniform step size across all coordinates such as SGD, is still limited.
On the theory side, the original Adagrad paper~\cite{duchi2011adaptive,streeter2010less} shows the superiority of Adagrad over SGD in the convex non-smooth scheme, suggesting that if the gradients are sparse and the predictor is limited in an appropriate convex set, Adagrad can converge faster in terms of better dimensional dependence.
However, as the convergence rates in nonsmooth settings rely on the scale of stochastic gradients, their results can be insufficient for the smooth and large-batch training scheme, 
which is a realistic setting gaining extensive focus. This is because the scale of stochastic gradients does not decrease linearly as the batch size $M$ increases. Therefore, if we fix the stochastic gradient computation number $N=MT$, where $M$ is the batch size and $T$ is the total iteration number, such that increasing $M$ leads to a linear decrease in $T$, the original convergence rate $\cO( 1/\sqrt{T} )$ may be unsatisfying. 
Fine analysis in smooth settings can solve this problem as the obtained convergence results in this case depend on the gradient variance, which decreases linearly as the batch size $M$ increases.
Many theoretical papers have also conducted analysis of popular adaptive gradient methods (e.g. Adagrad, Adam) in smooth settings under smoothness and noise assumptions with respect to $\Norm{\cdot}_2$. 
However, to the best of our knowledge, their proven results have no better or even worse dimensional dependence than the standard convergence results of SGD in the same settings. The existence of this gap between theory and practice makes us wonder: \textit{Can we obtain better theoretical guarantees of adaptive gradient methods to explain their practical success, especially in large batch settings?}

To answer this question, it would be a good starting point to revisit the well-known insight of adaptive gradient methods.
Intuitively, Adagrad shines when the problem is highly imbalanced, i.e., coordinates of the gradients have very different magnitudes. It schedules a larger learning rate (compared to SGD) for coordinates with small gradients and thus converges faster.
This implies that the performance of adaptive gradient methods relies largely on the anisotropic structure of the problem. However, existing standard assumptions fail to describe this property. Take the standard smoothness~\citep{nesterov2018lectures} as an example, which assumes a constant $L>0$ such that $-L\bI_d \preceq \nabla^2 f(\bw) \preceq L\bI_d$ for all $\bw$. It is evident that this assumption is coordinate-wise equivalent and cannot reflect the imbalance between coordinates, thus the benefits of adaptive gradient methods are hidden, as in the results of~\citet{vaswani2020adaptive,defossez2020simple,wang2023convergence}. 

To better explore the provable benefits of adaptive gradient methods, it is necessary to employ appropriate assumptions that can better describe the structure of models.
In this paper, we consider the anisotropic smoothness and noise assumptions. Based on these assumptions, we give novel convergence analysis of Adagrad and show that Adagrad can adapt well to the problem's anisotropic nature. 
To extend our results to a more general setting, we additionally introduce a novel generalized anisotropic smoothness assumption and provide corresponding analysis of AdaGrad.
By comparisons between the convergence results of Adagrad and gradient methods with step sizes across all coordinates (SGD and AdaGrad-Norm~\citep{streeter2010less} as two representatives), we justify the power of adaptive gradient methods when the problem has highly anisotropic nature.
\footnote{Note that the comparison between upper bounds is commonly adopted in past literature, e.g.~\citep{bernstein2018signsgd,allen2018katyusha}, though theoretically it only suggests the worst-case performance comparison of algorithms.
}

Our contributions are summarized as follows:

\begin{enumerate}
    \item We present a fine-grained analysis of Adagrad under anisotropic smoothness and noise assumptions, leading to novel theoretical convergence guarantees for Adagrad. We further introduce a generalized form of anisotropic smoothness, extending these results to more practical settings.
    \item We discuss how the convergence results indicate the potential benefits of AdaGrad compared to algorithms with coordinate-wisely uniform step sizes such as SGD and AdaGrad-Norm.
    \item Experiments on logistic regressions on real-world datasets and instruction-following fine-tuning with GPT-2 provide concrete empirical evidence to support our claims.
\end{enumerate}

\section{Related Work}

\paragraph{Adaptive gradient methods.} Adaptive gradient methods are popular optimizers for training neural networks~\citep{choi2019empirical,vani2019experimental}. Among them, Adagrad~\citep{duchi2011adaptive,streeter2010less} is considered to be the first adaptive gradient method in this branch, which was originally proposed to solve non-smooth online convex optimization problems. Ever since its first appearance, numerous adaptive gradient methods have emerged, such as RMSProp~\citep{tieleman2012lecture}, AdaDelta~\citep{zeiler2012adadelta}, Adam~\citep{kingma2014adam}, SC-Adagrad~\citep{mukkamala2017variants}, AdamW~\citep{loshchilov2017decoupled}, WNGrad~\citep{wu2018wngrad}, AMSGrad~\citep{reddi2019convergence}, SAdam~\citep{wang2019sadam} to name a few. They have revolutionized the field of deep neural network training, and are still widely adopted in the literature of large language models~\citep{radford2019gpt2,touvron2023llama,touvron2023llama2}.

\paragraph{Convergence results of SGD.} Stochastic gradient descent (SGD), a popular optimizer for many real-world tasks, has been extensively studied in the literature~\citep{robbins1951stochastic,nemirovski1978cezari,nemirovskij1983problem,nemirovski2009robust,hazan2007logarithmic,rakhlin2011making,shamir2013stochastic}. 
In the smooth stochastic optimization scheme,~\citet{moulines2011non} and~\citet{ghadimi2013stochastic} gave analysis of SGD under standard assumptions in the convex and nonconvex settings separately. More recently, \citet{zhang2019gradient} introduced the $(L_0,L_1)$-smoothness to relax the standard smoothness assumption and studied the convergence of SGD and clipped SGD under this assumption. A line of following work has been conducted based on this assumption after that~\citep{zhang2020improved,chen2020understanding,gorbunov2020stochastic,qian2021understanding,koloskova2023revisiting}.

\paragraph{Convergence results of Adagrad.} As the pioneering work of Adagrad, \citet{duchi2011adaptive} provided an analysis for Adagrad's convergence guarantees in online convex optimization settings, showing acceleration over SGD when the gradients are sparse. However, the presented analysis is only for general nonsmooth objectives, which cannot explain Adagrad's effectiveness under large batch settings. 
\citet{levy2018online} proved Adagrad-Norm~\citep{streeter2010less}, a non-coordinate-wise variant of Adagrad, achieves the same convergence rate as SGD in the convex smooth optimization setting. \citet{vaswani2020adaptive} followed this result and studied Adagrad in the interpolation scheme. However, their results show no better or even worse dimensional dependence than Adagrad-Norm or SGD.
On another side, the convergence of non-convex Adagrad or its close variants in the smooth nonconvex setting has been extensively studied~\citep{li2019convergence,defossez2020simple,ward2020adagrad,faw2022power,faw2023beyond,wang2023convergence,kavis2022high,liu2023high,attia2023sgd,hong2024revisiting}. Among them, \citet{ward2020adagrad} obtained the $\cO(1/\sqrt{T})$ rate of Adagrad-Norm under global bounded gradient assumption. More recently, \citet{faw2023beyond,wang2023convergence} further improved the result to hold under the $(L_0,L_1)$-smoothness assumptions. 
There are also extensive studies focused on the convergence of Adam and its close variants, we list some of them here for reference: \citep{reddi2019convergence,de2018convergence,defossez2020simple,guo2021novel,zhang2022adam,wang2022provable,li2024convergence,wang2024closing,hong2024convergence}.

\paragraph{Theoretical understanding of adaptive gradient methods:}
Surprisingly, though the community tends to have a common sense that adaptive gradient methods converge much faster than algorithms with uniform step sizes in specific tasks, theoretical explanations on when and why this happens are relatively rare compared to extensive empirical studies. It is worth pointing out that among all the above-mentioned works on theoretical analysis of adaptive gradient methods, only the initial Adagrad analysis~\citep{duchi2011adaptive} clearly shows this acceleration in terms of possibly lower dimensional dependence in the online convex programming setting. 
How adaptive gradient methods can help accelerate convergence in smooth or large batch stochastic optimization still remains unclear.
We also notice that \citet{cesa2007improved,orabona2015scale,orabona2018scale,zhuang2022understanding} mentioned the intuition of scale-free algorithms and \cite{zhuang2022understanding} demonstrates the connection between scale-freeness of adaptive gradient methods and better condition number dependence. More recently, \citet{zhang2024transformers,das2024towards} also investigated why Adam is effective in certain tasks compared to SGD and gives theoretical results in quadratics settings.
However, their results are more intuitive with very restrictive analysis that are insufficient to explain real-world tasks.
In another line of work, \citet{bernstein2018signsgd,wu2020dissecting,kunstner2023noise,liu2023sophia} suggests a relation between the benefits of adaptive gradient methods and their sign-based nature. These intuitions may shed light on the theoretical understanding of adaptive gradient methods. In parallel with our work, a concurrent study~\citep{maladkar2024convergence} investigates a similar approach, compared to which we further generalize the assumptions to obtain a more broadly applicable analysis.

\paragraph{Large batch training:}
Large batch training enjoys extensive focus for its practical impact. It has been observed that large batch sizes are beneficial for accelerating large model training in practice~\citep{you2017large,you2017100,you2018imagenet,you2019large,pan2022extremebert}. 
Furthermore, large batch training is a valuable acceleration technique in distributed machine learning~\citep{verbraeken2020survey} and pretraining~\citep{zhou2023comprehensive}, where adaptive gradient methods are popular. In particular, it is a common practice in pretraining of large language models to combine large batch sizes with adaptive gradient methods~\citep{radford2019gpt2,touvron2023llama,touvron2023llama2}, where thousands of GPUs can be utilized and a fixed number of batch sizes will be assigned to each GPU, rendering the total batch size extremely large. 

\section{Preliminaries}
\subsection{Notations}
We use $\odot$ to denote the coordinate-wise product of vectors and without leading to confusion, $\sqrt{\cdot}$ is sometimes used to denote the coordinate-wise square root of a vector or diagonal matrix. 
Let $\bH\in \RR^{d\times d}$ be a symmetric positive definite matrix, we denote the vector norm induced by $\bH$ that
$
    \Norm{\bw}_\bH^2 \triangleq \bw^\top \bH \bw.
$
With a slightly abuse of notation, for a vector $\mathbf{h}\in \RR^d_+$, we denote $\Norm{\bw}_{\mathbf{h}}^2 = \sum_{j=1}^d \mathbf{h}_j \bw_j^2$. For a symmetric positive definite matrix $\bH\in \RR^{d\times d}$ and convex set $\mathcal{\cW}$ we introduce the $\bH$-based projection operator $\Pi^\bH_\cW(\cdot)$ such that
$
    \Pi^\bH_\cW(\bw) = {\text{argmin}}_{\bz\in\cW} \Norm{\bz - \bw}_\bH^2.
$
As discussed in~\citet{hazan2007logarithmic}, the projection is a convex program and can be solved efficiently.

Let us denote $\nabla_\bw f(\bw;\xi)$ the stochastic gradient oracle at $\bw$ and $\bg_t$ the gradient estimation employed at $\bw_t$. $\cF_t\triangleq\sigma(\bg_0,\cdots,\bg_{t-1})$ stands for the sigma field of the gradient estimators from the first iteration to the $t-1$ iteration. We use $\EE[\cdot]$ to denote total expectation over $\cF_{T}$ where $T$ is the maximum iteration number  and $\EE_t[\cdot]$ as an abbreviation of the conditional expectation $\EE[\cdot|\cF_t]$.

\subsection{Problem Settings and Assumptions}\label{sec:assumptions}
We study the stochastic optimization problem~\eqref{eq:problem_formulation},
where we can only access the stochastic gradient oracle $\nabla f(\bw;\xi)$ at $\bw$.
Throughout this paper, we consider the following assumptions.
\begin{assumption}[Convexity]\label{asm:convex_set}
    $f(\cdot)$ is convex. For convex cases, we search solution in a closed convex set $\cW\subseteq \RR^d$ such that there exists at least one optimal solution $\bw_*\in\cW$ and
    \begin{equation}\label{eq:asm_convex_set}
    \begin{aligned}
        &\max_{\bw, \bw'\in\cW} \Norm{\bw-\bw'}_\infty \le D_\infty \quad \text{and} \quad
        \max_{\bw, \bw'\in\cW} \Norm{\bw-\bw'}_2 \le D_2.
    \end{aligned}
    \end{equation}
\end{assumption}

\begin{assumption}[Lower bounded]\label{asm:f_lower_bound}
    There exists constant $f^*$ such that for $\bw \in \RR^d$, $f(\bw) \ge f^*$.
\end{assumption}

\begin{assumption}[Anisotropic $\bL$-smoothness]\label{asm:smooth}
    There exists a positive vector $\bL = [L_1,\dots, L_d] \in \RR_+^d$ such that $f(\cdot)$ is $\bL$-smooth, namely, for $\bw,\bw' \in \RR^d$,
    \begin{align}\label{eq:asm_smooth}
        \Norm{\nabla f(\bw) - \nabla f(\bw')}_{\bL^{-1}} \le \Norm{\bw - \bw'}_\bL .
    \end{align}
\end{assumption}

Assumption~\ref{asm:convex_set} and \ref{asm:f_lower_bound} are standard for convex and nonconvex problems, respectively.
Assumption~\ref{asm:smooth} is an anisotropic generalization of the smoothness condition, which has also been employed in a line of work on SignSGD~\citep{bernstein2018signsgd,bernstein2019signsgd}. Its intuition can be understood in the following manner. When the loss function $f(\cdot)$ is twice-differentiable, Assumption~\ref{asm:smooth} is equivalent to $\nabla^2 f \preceq \text{diag}(\bL)$ that implies the standard smoothness assumption by $L=\Norm{\bL}_\infty$.
When the Hessian of $f(\cdot)$ is imbalanced, namely, coordinates have very different scales, $\bL$ can be adapted to this imbalanced distribution and can describe a tighter upper bound of the Hessian of $f(\cdot)$, resulting in $\Norm{\bL}_1 \ll Ld$. This benefit is realistic as the highly imbalanced spectrum distribution of the Hessian has been widely observed in multiple circumstances~\citep{sagun2016eigenvalues,arjevani2020analytic,pan2021eigencurve}.
The power of this adaptation shines when adaptive gradient methods are employed.

We consider the standard stochastic approximation framework~\citep{kushner2012stochastic} and denote the gradient noise at $\bw$ to be
$
    \bn(\bw;\xi) \triangleq \nabla f(\bw) - \nabla_\bw f(\bw;\xi).
$
We assume the following assumptions on gradient noise throughout this paper.
\begin{assumption}[Unbiased Independent gradient]\label{asm:unbiased_gradient}
    Each $\nabla f(\bw;\xi)$ is independently drawn and 
    \begin{align}\label{eq:asm_unbiased_gradient}
        \EE\left[ \bn(\bw;\xi) \right] = 0.
    \end{align}
\end{assumption}

\begin{assumption}[Anisotropic noise]\label{asm:anisotropic_noise}
    There exists positive vector $\bsigma = [\bsigma_1,\dots,\bsigma_d]\in\RR_+^{d}$ such that
    \begin{align}\label{eq:asm_anisotropic_noise}
        \EE\left[ \bn_j(\bw;\xi)^2 \right] \le \bsigma_j^2 \quad \text{for all} \quad j\in[d] .
    \end{align}
\end{assumption}
Note that Assumption~\ref{asm:anisotropic_noise} implies the standard bounded noise assumption by $\EE[ \Norm{\bn(\bw;\xi)}_2^2 ] \le \Norm{\bsigma}_2^2$. Intuitively, it upper bounds all the coordinates of $\bn(\bw;\xi)$ instead of only the norm and gives more detailed information on the scale of noise. Generally, Combining Assumption~\ref{asm:smooth} and \ref{asm:anisotropic_noise} allows more fine-grained analysis, which can take the sparsity of the model into account. 
Note that the anisotropic noise assumption has also been explored in other lines of studies on sign-based methods~\citep{bernstein2018signsgd,bernstein2019signsgd,crawshaw2022robustness} and quadratics~\citep{dieuleveut2017harder,ge2019step,pan2021eigencurve,pan2023accelerated}. It is also closely related to Assumption 2 in~\citet{zhang2020adaptive}, where the authors attempt to model the heavy-tailedness of neural networks.

\begin{algorithm}[tb]
   \caption{Adagrad}
   \label{alg:adagrad}
\begin{algorithmic}[1]
   \STATE {\bfseries Input:} $\bw_0\in \RR^d$, $\{\eta_t\}_{t=0}^{T-1}\in\RR$, $\epsilon\in\RR$, and batch size $M\in \mathbb{N}$
   \STATE Initialize $\bv_{-1}=\epsilon^2 \mathbf{1}_d$
   \FOR{$t=0$ {\bfseries to} $T-1$}
   \STATE Sample mini-batch $\cB_t$ with $\Abs{\cB_t} \equiv M$ uniformly
   \STATE $\bg_t = \frac{1}{M} \sum_{\xi\in\cB_t} \nabla_\bw f(\bw_t;\xi)$
   \STATE $\bv_t = \bv_{t-1} + (\bg_t \odot \bg_t)$
   \STATE $\bLa_t = \text{diag}(\sqrt{\bv_t})$

   \STATE 
   \textbf{Option I: } 
   $\bw_{t+1} = \Pi^{\bLa_t}_\cW(\bw_t - \eta_t \bLa_t^{-1}\bg_t)$
   \STATE 
   \textbf{Option II: } 
   $\bw_{t+1} = \bw_t - \eta_t \bLa_t^{-1}\bg_t$
   \ENDFOR
   \STATE {\bfseries Output:} $1/T\sum_{t=0}^{T-1}\bw_t$
\end{algorithmic}
\end{algorithm}

\section{AdaGrad with Anisotropic Assumptions}\label{sec:AdaGrad}
How to accelerate the convergence of SGD has been a fundamental problem in training machine learning models. One possible approach is to lower the implicit dependence of dimension $d$. 
As discussed in~\citet{nguyen2019tight}, in smooth strongly convex settings, the lower bound of SGD can be $d$ times larger than a wider class of algorithms including adaptive gradient methods.
The original Adagrad paper~\citep{duchi2011adaptive} showed that Adagrad has better dimension dependence than SGD when the stochastic gradients are generally sparse in the non-smooth convex scheme.
However, in the stochastic smooth optimization scheme, to the best of our knowledge, existing theoretical results are insufficient to account for the benefit of adaptive gradient methods.
We attempt to fill the gap, equipped with the anisotropic Assumptions~\ref{asm:smooth}~and~\ref{asm:anisotropic_noise}.

\subsection{Convex Cases}
For a warmup, we present results in the convex case to first gain some intuition on how the anisotropic assumptions can better describe the convergence of AdaGrad in the large batch setting. 

\begin{theorem}[Convex convergence of Adagrad]\label{thm:smooth_convex_adagrad}
    Under Assumptions~\ref{asm:convex_set}, \ref{asm:smooth}, \ref{asm:unbiased_gradient}, \ref{asm:anisotropic_noise} with $\bL_1 = 0$, for the sequence $\{\bw_t\}_{t=1}^T$ generated by Adagrad (Algorithm~\ref{alg:adagrad} with option I) with constant step size $\eta_t\equiv\eta = D_\infty$,
    it holds that for $\Bar{\bw}_T=(1/T)\sum_{t=0}^{T-1} f(\bw_t)$,
    \begin{align*}
        \EE\left[ f(\bar{\bw}_T) - f(\bw_*) \right] =
        \cO\left( \frac{D_\infty \Norm{\bsigma}_1}{\sqrt{MT}} + \frac{\Norm{\bL}_1 D_\infty^2}{T} \right) 
        + \cO\left( \frac{\epsilon D_2^2}{D_\infty T} \right) .
    \end{align*}
\end{theorem}

Note that $\epsilon$ is employed mainly for numerical stability and is commonly very small (in order $10^{-10}$ by default).
Theorem~\ref{thm:smooth_convex_adagrad} shows that the convergence of AdaGrad depends on $\Norm{\bL}_1$ and $\Norm{\bsigma}_1$, which require Assumptions~\ref{asm:smooth} and \ref{asm:anisotropic_noise} to describe. In contrast, if we only use the standard $L$-smooth and $\sigma_2^2$-bounded gradient variance assumption, explicit dimensional dependence will be inevitably involved, resulting in worse results than coordinate-wise uniform step sizes like~\citet{vaswani2020adaptive} 
To better see how Theorem~\ref{thm:adagrad_nonconvex} can describe the potential benefits of AdaGrad, we also include the convergence rates of SGD and AdaGrad-Norm as representatives of algorithms with coordinate-wisely uniform step sizes here\footnote{We also include the standard proof of SGD in Appendix~\ref{sec:sgd} for completeness. The result of AdaGrad-Norm can be found in \citet{levy2018online}.}:
\begin{align}\label{eq:sgd_convex_rate}
    \text{SGD \& AdaGrad-Norm:}& \quad \EE\left[ f(\bw) - f(\bw_*) \right] = \cO\left( \frac{D_2 \Norm{\bsigma}_2}{\sqrt{MT}} + \frac{\Norm{\bL}_\infty D_2^2}{T}  \right) .
\end{align}
By comparing the results in Theorem~\ref{thm:adagrad_nonconvex} and \eqref{eq:sgd_convex_rate}, we can find that whether AdaGrad is better than SGD or AdaGrad-Norm largely relies on the ratios $\frac{D_\infty \Norm{\bsigma}_1}{D_2\Norm{\bsigma}_2}$ and $\frac{\Norm{\bL}_1 D_\infty^2}{\Norm{\bL}_\infty D_2^2}$, which reflect the sparsity of the curvature, noise, and the geometry of $\cW$. 
When $M$ is small such that the variance term is dominant, our conclusion is consistent with that presented in \citet{duchi2011adaptive} for nonsmooth cases. Generally, when (1) $\bsigma$ is sparse, i.e. has very different scales in different coordinates, which implies that $\Norm{\bsigma}_1 \ll \sqrt{d} \Norm{\bsigma}_2$; (2) $\cW$ satisfies that $D_2$ is close to $\sqrt{d} D_\infty$, which can be satisfied by setting $\cW$ to be a hypercube, the variance term of Adagrad might be much smaller than that of SGD. 
When a large batch size is employed, the bias term can also be important and thus the superior performance of Adagrad in this case additionally requires that (3) $\bL$ is sparse such that $\Norm{\bL}_1 \ll d \Norm{\bL}_\infty$. 
Following~\citet{duchi2011adaptive}, we also provide a concrete example for better understanding the quantities in Appendix~\ref{appendix:example_convex}. 
\begin{remark}
    It is also worth pointing out that the ratio between bias terms of Theorem~\ref{thm:smooth_convex_adagrad} and \eqref{eq:sgd_convex_rate} can be in order $\Theta(1/d)$ in extreme cases, while the variance ratio can only be $\Theta(1/\sqrt{d})$. This suggests an even sharper possible gap when $M$ is large and might provide some intuition on the observation that adaptive gradient methods benefit more from large batch size than SGD~\citep{kunstner2023noise}.
\end{remark}

\subsection{Nonconvex Cases}

Next we consider the more general nonconvex scheme.
\begin{theorem}[Nonconvex convergence of Adagrad]\label{thm:adagrad_nonconvex}
    Under Assumptions~\ref{asm:f_lower_bound}, \ref{asm:smooth}, \ref{asm:unbiased_gradient}, \ref{asm:anisotropic_noise}, for the sequence $\{\bw_t\}_{t=1}^{T-1}$ generated by Adagrad (Algorithm~\ref{alg:adagrad} with option II) with constant step size $\eta_t \equiv \eta = \sqrt{ \frac{\Norm{\bL}_1}{\Delta} } $, it holds that
    \begin{align*}
        \frac{1}{T} \left( \EE\left[ \sqrt{\sum_{t=0}^{t-1} \Norm{\nabla f(\bw_t)}_1^2 } \right] \right)^2  =& \Tilde{\cO}\left( \frac{\sqrt{\Norm{\bL}_1 \Delta } \Norm{\bsigma}_1 }{\sqrt{MT}} + \frac{\Norm{\bsigma}_1^2}{M\sqrt{T}} + \frac{ \Norm{\bL}_1 \Delta }{T} \right) 
        \\
        &+ \Tilde{\cO}\left( \frac{d\epsilon \sqrt{\Norm{\bL}_1 \Delta} }{T} + \frac{d\epsilon \Norm{\bsigma}_1 }{\sqrt{M}T}\right) ,
    \end{align*}
    where $\Delta = f(\bw_0) - f^*$ and we use $\Tilde{\cO}(\cdot)$ to hide logarithmic factors.
\end{theorem}
It is worth pointing out that Theorem~\ref{thm:adagrad_nonconvex} obtains convergence of $\Norm{\nabla f(\bw)}_1$ instead of the common $\Norm{\nabla f(\bw)}_2$ by algorithms with coordinate-wise uniform step sizes like SGD, which indicates at most $\sqrt{d}$ times tighter results when the gradients are generally dense. 
Similar to the convex case, the introduction of the anisotropic assumptions \ref{asm:smooth} and \ref{asm:anisotropic_noise} removes the explicit dependence on dimension $d$ compared to existing results on adaptive gradient methods like \citet{defossez2020simple,liu2023high,zhang2022adam,wang2022provable}, rendering AdaGrad at least comparable to algorithms with coordinate-wisely uniform step sizes like SGD, which has the following results:
\begin{align}\label{eq:sgd_nonconvex_rate}
    \text{SGD:} \quad \EE\left[ \Norm{\nabla f(\bw)}_2^2 \right]  \le \cO\left( \frac{\sqrt{\Norm{\bL}_\infty \Delta } \Norm{\bsigma}_2 }{\sqrt{MT}} +  \frac{ \Norm{\bL}_\infty \Delta }{T}  \right)
\end{align}
We can find that when the batch size is large enough such that $M \ge \Norm{\bsigma}_1^2 / ( \Norm{\bL}_1 \Delta )$, the comparison between Theorem~\ref{thm:adagrad_nonconvex} and \eqref{eq:sgd_nonconvex_rate} is generally consistent with the comparison between SGD and SignSGD~\citep{bernstein2018signsgd}. It mainly relies on the sparsity of $\nabla f(\bw_t)$, $\bL$, and $\bsigma$, which can determine the ratio between the upper bounds. Generally speaking, when $\bL$ and $\bsigma$ are sparse and the gradients $\nabla f(\bw_t)$ are relatively dense, Adagrad shines compared to SGD in terms of having a tighter convergence guarantee. 
We note that this can be a realistic case, as $\bL$ can be extremely sparse based on many observations on multiple scenarios~\citep{sagun2016eigenvalues,arjevani2020analytic,pan2021eigencurve}, and on the other side, $\frac{\Norm{\bsigma}_1}{\Norm{\bsigma}_2}$ and $\frac{\Norm{\nabla f(\bw_t)}_1}{\Norm{\nabla f(\bw_t)}_2}$ can be mild constants as examined by \citet{bernstein2018signsgd}. Therefore, with all these conditions satisfied, we show the potentially faster convergence of AdaGrad compared to SGD.
Also, the consistency between the convergence results of AdaGrad and SignSGD provides theoretical insights into the close relation between adaptive gradient methods and sign-based methods.

\begin{table*}[t]
\caption{We summarize the convergence rates of algorithms under the nonconvex smooth settings, with large enough batch size such that $M \ge \Norm{\bsigma}_1^2 / ( \Norm{\bL}_1 \Delta )$. Note that we omit logarithmic terms here. The convergence of SGD in the smooth setting is a standard result, and we include the theorem in the appendix for reference. Also, we leave the convergence of SGD under generalized smoothness blank as it is generally incomparable with other results, which we will discuss in Section~\ref{sec:AdaGrad_generalized_smooth}. 
}
  \label{tab:comparison_bounds}
  \centering
  \small
  \begin{tabular}{cccc}
    \addlinespace
    \hline 
    Algorithms & \makecell{ Convergence \\ Objective } & \makecell{ Anisotropic Smooth } & \makecell{ Anisotropic Generalized Smooth} \\\hline\hline\addlinespace
    \makecell{ SGD } & \makecell{ $\EE\left[ \Norm{\nabla f(\bw)}^2_{2} \right]$ } & \makecell{ $\frac{\sqrt{\Norm{\bL}_\infty \Delta  } \Norm{\bsigma}_2 }{\sqrt{MT}} +  \frac{ \Norm{\bL}_\infty \Delta }{T}$ \\ {\scriptsize Theorem~\ref{thm:sgd_nonconvex_upper}} } & \makecell{ -- }  \\ \addlinespace
    \hline \addlinespace
    \makecell{ AdaGrad-Norm } & \makecell{ $\left(\EE\left[ \Norm{\nabla f(\bw)}_{\color{red} \mathlarger{\mathbf{2}}} \right]\right)^2$ } & \makecell{ $\frac{\sqrt{\Norm{\bL}_\infty \Delta } \Norm{\bsigma}_{2} }{\sqrt{MT}} +  \frac{ \Norm{\bL}_\infty \Delta }{T}$ \\ {\scriptsize \citep{faw2023beyond}} } & \makecell{ $\frac{\sqrt{L_0 \Delta } \Norm{\bsigma}_2 }{\sqrt{MT}} + \frac{ L_0 \Delta }{T}$ \\ $ + \frac{ L_1 \Delta \Norm{\bsigma}_2 }{\sqrt{MT}} + \frac{L_1^2 \Delta^2 }{T}$ \\ {\scriptsize \citep{faw2023beyond}} }  \\ \addlinespace
    \hline \addlinespace
    \makecell{AdaGrad  } & \makecell{  $ \left(\EE\left[ \Norm{\nabla f(\bw)}_{{ \color{blue} \mathlarger{\mathbf{1}}} } \right]\right)^2 $   } 
    & \makecell{ $\frac{\sqrt{\Norm{\bL}_1 \Delta } \Norm{\bsigma}_1 }{\sqrt{MT}} + \frac{ \Norm{\bL}_1 \Delta }{T} $ \\ {\scriptsize Theorem~\ref{thm:adagrad_nonconvex}} } 
    & \makecell{$\frac{\sqrt{\Norm{\bL_0}_1 \Delta } \Norm{\bsigma}_1 }{\sqrt{MT}} + \frac{ \Norm{\bL_0}_1 \Delta }{T}$ \\ $ + \frac{ \Norm{\bL_1}_\infty \Delta \Norm{\bsigma}_1 }{\sqrt{MT}} + \frac{\Norm{\bL_1}_\infty^2 \Delta^2 }{T} $ \\ {\scriptsize Theorem~\ref{thm:adagrad_generalized_nonconvex}} }  
    \\ \addlinespace
    \hline 
  \end{tabular}
\end{table*}

\section{AdaGrad with Generalized Anisotropic Smoothness}\label{sec:AdaGrad_generalized_smooth}
In previous sections, we have discussed how AdaGrad can potentially outperform algorithms with uniform step sizes under the anisotropic smoothness settings. However, the loss function in practice can commonly dissatisfy the smoothness assumption, with local smoothness potentially unbounded. 
To better describe the complicated real cases, \citet{zhang2019gradient} introduce the $(L_0,L_1)$-smoothness as a generalization of the standard $L$-smoothness, which can be written as
\begin{align}\label{eq:standard_generalized_smooth}
    \Norm{\nabla f(\bw) - \nabla f(\bw')}_2 \le \left( L_0 + L_1 \Norm{\nabla f(\bw)}_2 \right) \Norm{\bw - \bw'}_2
\end{align}
for all $\Norm{\bw - \bw'} \le 1/L_1$~\citep{zhang2020improved}. By involving the gradient term to describe the local smoothness, this assumption can even be applicable to describe neural networks~\citep{zhang2019gradient,crawshaw2022robustness}.
In this section, we discuss an extension of both the anisotropic $\bL$-smoothness and the $(L_0,L_1)$-smoothness and the corresponding convergence results of AdaGrad.
\begin{assumption}[Anisotropic $(\bL_0,\bL_1)$-smoothness]\label{asm:generalized_smooth}
    There exists positive vectors $\bL_0 = [\bL_{0,1},\dots, \bL_{0,d}] \in \RR_+^d$ and $\bL_1 = [\bL_{1,1},\dots, \bL_{1,d}] \in \RR_+^d$ such that $f(\cdot)$ is ($\bL_0$,$\bL_1$)-smooth, namely, for $\bw,\bw' \in \cW$ such that $\Norm{\bw - \bw'}_{\bL_1} \le \sqrt{d}$, it holds that
    \begin{align}\label{eq:asm_generalized_smooth}
        \Norm{\nabla f(\bw) - \nabla f(\bw')}_{(\bL(\bw))^{-1}} \le \Norm{\bw - \bw'}_{\bL(\bw)} ,
    \end{align}
    where $\bL(\bw) = [[\bL(\bw)]_{1}, \dots, [\bL(\bw)]_{d}] \in \RR^d$ and $[\bL(\bw)]_{j} = \bL_{0,j} + \bL_{1,j} \Abs{\partial_j f(\bw)}$ for all $j \in [d]$.
\end{assumption}
For one thing, Assumption~\ref{asm:generalized_smooth} is a natural generalization of Assumption~\ref{asm:smooth} and can directly imply it by setting $\bL_1 = 0$, hence also enjoys the nice properties we have discussed for Assumption~\ref{asm:smooth}.
Also, in the spirits of \eqref{eq:standard_generalized_smooth}, we include the gradient magnitudes to describe the local smoothness. Instead of the gradient 2-norm, we use the absolute value of each coordinate of the gradient, which is intuitively more relevant to the anisotropic nature of curvature. We also conducted numerical experiments to verify the validity of the assumption, as shown in Figure~\ref{figs:exp_assumption_verification}.

\begin{remark}
    
It is also worth noticing that a similar but different coordinate-wise generalized smoothness assumption has been considered in \citet{crawshaw2022robustness}:
\begin{align}\label{eq:crawshaw_generalized_smooth}
    \Abs{\partial_j f(\bw) - \partial_j f(\bw')} \le \left( \bL_{0,j} + \bL_{1,j} \Abs{\partial_j f(\bw)} \right) \Norm{\bw - \bw'}_2
\end{align}
for all $\Norm{\bw - \bw'}_2 \le 1/\Norm{\bL_1}_\infty$ and all coordinates $j \in [d]$.
Compared to this assumption, Assumption~\ref{asm:generalized_smooth} offers several evident advantages: (1) Assumption~\ref{asm:generalized_smooth} does not require conditions for each coordinate; (2) Experiments show that Assumption~\ref{asm:generalized_smooth} can well describe the real anisotropic curvature; (3) Technically, it is difficult to avoid explicit existence of $d$ in the final convergence bound of AdaGrad if Eqn.~\eqref{eq:crawshaw_generalized_smooth} is employed, for instance, even the convergence of the Generalized SignSGD obtained in \citet{crawshaw2022robustness} has explicit dependence on $d$.

\end{remark}

\begin{theorem}[Convergence of Adagrad with generalized smoothness]\label{thm:adagrad_generalized_nonconvex}
    Under Assumptions~\ref{asm:f_lower_bound}, \ref{asm:unbiased_gradient}, \ref{asm:anisotropic_noise}, \ref{asm:generalized_smooth}, for the sequence $\{\bw_t\}_{t=1}^{T-1}$ generated by Adagrad (Algorithm~\ref{alg:adagrad} with option II) with constant step size $\eta_t \equiv \eta = \min\left\{ \frac{1}{4\Norm{\bL_1}_\infty}, \sqrt{ \frac{\Norm{\bL_0}_1}{\Delta} } \right\}$, it holds that
    \begin{align*}
        \frac{1}{T} \left( \EE\left[ \sqrt{\sum_{t=0}^{t-1} \Norm{\nabla f(\bw_t)}_1^2 } \right] \right)^2  =& \Tilde{\cO}\left( \frac{\sqrt{\Norm{\bL_0}_1 \Delta } \Norm{\bsigma}_1 }{\sqrt{MT}} + \frac{\Norm{\bsigma}_1^2}{M\sqrt{T}} + \frac{ \Norm{\bL_0}_1 \Delta }{T} \right) 
        \\
        &+ \Tilde{\cO}\left( \frac{ \Norm{\bL_1}_\infty \Delta \Norm{\bsigma}_1 }{\sqrt{MT}} + \frac{\Norm{\bL_1}_\infty^2 \Delta^2 }{T} \right)
        \\
        &+ \Tilde{\cO}\left( \frac{d\epsilon \left( \sqrt{\Norm{\bL_0}_1 \Delta}  + \Norm{\bL_1}_\infty \Delta \right) }{T} + \frac{d\epsilon \Norm{\bsigma}_1 }{\sqrt{M}T}\right) ,
    \end{align*}
    where $\Delta = f(\bw_0) - f^*$ and we use $\Tilde{\cO}(\cdot)$ to hide logarithmic factors.
\end{theorem}

Note that Theorem~\ref{thm:adagrad_generalized_nonconvex} is a generalization of Theorem~\ref{thm:adagrad_nonconvex} based on the relation between Assumptions \ref{asm:generalized_smooth} and \ref{asm:smooth}, which can directly imply Theorem~\ref{thm:adagrad_nonconvex} by simply setting $\bL_1 = 0$. 
Intuitively, the interpretation of the theorem leads to the following implications.

\textbf{Comparison with SGD. } \citet{zhang2019gradient} show that SGD generally requires an additional assumption on universally bounded gradients to obtain convergence under the generalized $(L_0,L_1)$-smoothness, and thus clipping should be introduced. The theoretical properties of clipped SGD has been extensively studied after that. However, to the best of our knowledge, existing results on clipped SGD either require restrictive noise assumptions~\citep{zhang2019gradient,zhang2020improved,chen2020understanding,qian2021understanding} or can only shows worse rate on $T$ and $M$~\citep{koloskova2023revisiting,gorbunov2020stochastic}. These results are generally incomparable with Theorem~\ref{thm:adagrad_nonconvex}, showing the superiority of adaptive gradient methods, as also found in existing results~\citep{wang2023convergence,li2024convergence}. 

\textbf{Comparison with AdaGrad-Norm. }
One can also look at the convergence results of AdaGrad-Norm~\citep{streeter2010less} assuming \eqref{eq:standard_generalized_smooth}, a scalar step size variant of AdaGrad:
\begin{align}\label{eq:adanorm_nonconvex}
    \text{AdaGrad-Norm:}& \quad \left( \EE\left[\Norm{\nabla f(\bw)}_2\right] \right)^2 \le \Tilde{\cO}\left( \frac{\sqrt{L_0 \Delta } \Norm{\bsigma}_2 }{\sqrt{MT}} + \frac{ L_0 \Delta }{T} + \frac{ L_1 \Delta \Norm{\bsigma}_2 }{\sqrt{MT}} + \frac{L_1^2 \Delta^2 }{T} \right) ,
\end{align}
which is obtained by taking $\eta$ to optimize the upper bound in Theorem 3 in \citet{faw2023beyond}. It is worth noticing that in the case $\bL_1 = 0$, the comparison between AdaGrad and AdaGrad-Norm is generally consistent with that between AdaGrad and SGD discussed in Section~\ref{sec:AdaGrad}, showing AdaGrad's effectiveness for anisotropic problems.
On the other hand, when it comes to the general $(\bL_0,\bL_1)$-smoothness assumption, things become more complicated given the indeterminate relationship between the anisotropic $\bL_1$ and $L_1$ in \eqref{eq:standard_generalized_smooth}. 
This relation is mainly about whether Assumption~\ref{asm:generalized_smooth} or the initial $(L_0,L_1)$-smoothness assumption can better fit the common training settings, such as large-scale language model pre-training, which is an intriguing topic worth exploring in the future.

\begin{remark}
    \textit{(Technical Contribution)} From the technical perspective, our proof generally considers a similar main line to \citet{defossez2020simple} but involves multiple novel proof techniques to remove the restrictive assumption on globally bounded $\Norm{\nabla f(\bw)}$ and enable the presence of generalized smoothness. Also, Theorem~\ref{thm:adagrad_nonconvex} can directly imply a comparable convergence result of AdaGrad-Norm with \citet{faw2023beyond,wang2023convergence} while it avoids the complicated proof in \citet{faw2023beyond} and the heavy dependence on $1/\epsilon$ in \citet{wang2023convergence}. On the other hand, however, \citet{faw2023beyond,wang2023convergence} allows relaxed noise assumption, which may be of interest and we leave it for possible future work.
\end{remark}

\section{Experimental Results}\label{sec:experiments}

\subsection{Convex Case}

\begin{table}[b]
\caption{Statistics on logistic regression. $D_2 = \max_t \|\bw_t - \bw_*\|_2$ and $D_{\infty} = \max_t \|\bw_t - \bw_*\|_{\infty}$ are estimated by the maximum value under all searched settings without loss explosion. Smaller values in $C_{\textbf{var}} \triangleq D_\infty / D_2$ represent better theoretical bounds for Adagrad when compared with SGD. It is evident that in large sparse datasets, $D_\infty$ is much smaller than $D_2$, verifying the empirical gains of Adagrad implied by our theory.}
\label{tab:quantity}
\vskip 0.15in
\begin{center}
\begin{small}
\begin{sc}
\begin{tabular}{lccccc}
\toprule
Dataset & $D_{\infty}$ & $D_2$ & $\Norm{\bL}_1$ & $\Norm{\bL}_\infty$ & $C_{\textbf{var}}$ 
\\
\midrule
 \texttt{a4a} & 9.73  & 32.24 & 14.87 & 7.26 & \textbf{0.30} 
\\
\midrule
\texttt{a6a} & 8.88  & 28.57 & 14.87 & 7.26 & \textbf{0.31} 
\\
\midrule
\texttt{a9a} & 9.79 & 29.43 & 14.87 & 7.27 & \textbf{0.32} 
\\
\midrule
\texttt{real-sim} & 34.70 & 729.50 & 2.00 & 1.01 & \textbf{0.05}
\\
\midrule
\texttt{rcv1.bin} & 15.32 & 635.47 & 2.00 & 1.02 & \textbf{0.02}
\\
\bottomrule
\end{tabular}
\end{sc}
\end{small}
\end{center}
\vskip -0.1in
\end{table}

To verify the aforementioned theoretical results, we conduct experiments on logistics regressions, whose loss functions are generally convex smooth. Specifically, we utilize real-world datasets \texttt{a4a}, \texttt{a6a}, \texttt{a9a}, \texttt{real-sim} and \texttt{rcv1.binary} from \texttt{libsvm}~\citep{chang2011libsvm}, which comprises of $N=4781$, $11220$, $32561$, $20242$ and $72309$ samples respectively. Within the former three datasets, each sample has a feature of $d=123$ dimensions, which are generally sparse given only 14 non-zero-valued dimensions in average for each sample. For the latter two large datasets, each sample possesses $d=47,236$ and $d=20,958$ feature dimensions individually in each dataset, where only $51.29$ and $74.05$ dimensions in average are non-zero. More details of the experimental setup are available in Appendix~\ref{appendix:exp_details}.

As shown by the results presented in Table~\ref{tab:logistic_regression}, for cases when the variance term dominates, e.g. $\texttt{a9a}$, Adagrad demonstrates similar convergence behaviors for varied batch sizes. This behavior cannot be explained by previous nonsmooth theories, since $T = N/M$ should provide worse convergence guarantees $\cO(1/\sqrt{T})$ when batch size $M$ increases. Furthermore, Adagrad constantly provides faster convergence than SGD, which verifies the superiority suggested in our theorems. In addition, when the batch size increases and the bias term becomes dominant, Adagrad is affected less than SGD, showing its robustness against different batch sizes.

\begin{table*}[t]
\caption{Training losses of SGD and Adagrad on logistic regression with dataset \texttt{a4a}, \texttt{a6a} and \texttt{a9a}. We report statistics over 3 seeds with different batch sizes $M$. Note that Adagrad's convergence behavior is generally unaffected by the batch size when $M \le \sqrt{N}$.}
\label{tab:logistic_regression}
\vskip 0.15in
\begin{center}
\begin{small}
\begin{sc}
\resizebox{\linewidth}{!}{
\begin{tabular}{lccccccc}
\toprule
\multirow{2}{*}{Dataset (Small)} & \multirow{2}{*}{Method} & \multicolumn{6}{c}{$(f(w) - f(w_*)) \times 10^{-3}$}
\\
\cmidrule{3-8}
& & $M = 1$ & $M=4$ & $M=16$ & $M=64$ & $M=256$ & $M=1024$ \\
\midrule
\multirow{2}{*}{\texttt{a4a}} & SGD & 
2.66$\pm$0.08 & 2.32$\pm$0.34 & 3.47$\pm$0.07 & 2.24$\pm$0.14 & 4.61$\pm$0.03 & 8.47$\pm$0.05 \\
& Adagrad & 
 \cellcolor{lightblue}0.22$\pm$0.03 & \cellcolor{lightblue}0.22$\pm$0.03 & \cellcolor{lightblue}0.25$\pm$0.03 & \cellcolor{lightblue}0.24$\pm$0.03 & \cellcolor{lightblue}0.29$\pm$0.03 & \cellcolor{lightblue}0.51$\pm$0.08 \\
\midrule
\multirow{2}{*}{\texttt{a6a}} & SGD & 
0.87$\pm$0.09 & 1.40$\pm$0.01 & 0.82$\pm$0.09 & 1.56$\pm$0.03 & 1.03$\pm$0.01 & 2.53$\pm$0.03 \\
& Adagrad & 
\cellcolor{lightblue}0.16$\pm$0.00 & \cellcolor{lightblue}0.16$\pm$0.00 & \cellcolor{lightblue}0.16$\pm$0.01 & \cellcolor{lightblue}0.21$\pm$0.05 & \cellcolor{lightblue}0.17$\pm$0.01 & \cellcolor{lightblue}0.20$\pm$0.02 \\
\midrule
\multirow{2}{*}{\texttt{a9a}} & SGD & 
0.77$\pm$0.08 & 0.47$\pm$0.01 & 0.48$\pm$0.05 & 0.58$\pm$0.01 & 0.52$\pm$0.06 & 0.76$\pm$0.01 \\
& Adagrad & 
\cellcolor{lightblue}0.14$\pm$0.01 & \cellcolor{lightblue}0.14$\pm$0.00 & \cellcolor{lightblue}0.15$\pm$0.00 & \cellcolor{lightblue}0.16$\pm$0.01 & \cellcolor{lightblue}0.20$\pm$0.01 & \cellcolor{lightblue}0.12$\pm$0.02 \\
\\[-2ex]\hline 
\hline \\[-2ex]
\multirow{2}{*}{Dataset (Large)} & \multirow{2}{*}{Method} & \multicolumn{6}{c}{$(f(w) - f(w_*)) \times 10^{-1}$}
\\
\cmidrule{3-8}
& & $M = 1$ & $M=4$ & $M=16$ & $M=64$ & $M=256$ & $M=1024$ \\
\midrule
\multirow{2}{*}{\texttt{real-sim}} & SGD & 0.42$\pm$0.08 & 0.27$\pm$0.08 & 0.52$\pm$0.06 & 0.92$\pm$0.03 & 1.57$\pm$0.02 & 2.68$\pm$0.01 \\
& Adagrad &
\cellcolor{lightblue}0.14$\pm$0.00 & \cellcolor{lightblue}0.14$\pm$0.00 & \cellcolor{lightblue}0.14$\pm$0.00 & \cellcolor{lightblue}0.14$\pm$0.00 & \cellcolor{lightblue}0.15$\pm$0.00 & \cellcolor{lightblue}0.19$\pm$0.00 \\
\midrule
\multirow{2}{*}{\texttt{rcv1.binary}} & SGD &
0.48$\pm$0.02 & 0.28$\pm$0.07 & 0.68$\pm$0.04 & 1.33$\pm$0.04 & 2.55$\pm$0.21 & 5.02$\pm$0.16 \\
& Adagrad &
\cellcolor{lightblue}0.10$\pm$0.00 & \cellcolor{lightblue}0.10$\pm$0.00 & \cellcolor{lightblue}0.10$\pm$0.00 & \cellcolor{lightblue}0.11$\pm$0.00 & \cellcolor{lightblue}0.14$\pm$0.01 & \cellcolor{lightblue}0.20$\pm$0.01 \\
\bottomrule
\end{tabular}
}
\end{sc}
\end{small}
\end{center}
\vskip -0.1in
\end{table*}

\subsection{Nonconvex Case}

\begin{wraptable}{r}{0.4\textwidth}
\vspace{-0.27in}
\caption{Training losses of SGD and Adagrad on instruction following tasks with dataset Alpaca and GPT2 model.}
\label{tab:nonconvex_gpt2}
\vskip 0.15in
\centering
\scriptsize
\begin{sc}
\begin{tabular}{cccc}
\toprule
\multirow{2}{*}{Method} & \multicolumn{3}{c}{Training Loss $f(w)$}
\\
\cmidrule{2-4}
& $M = 32$ & $64$ & $128$ \\
\midrule
SGD & 2.20 & 2.24 & 2.29\\
Adagrad &
 \cellcolor{lightblue}2.14 & \cellcolor{lightblue}2.12 & \cellcolor{lightblue}2.11
 \\
\\[-2ex]\hline 
\hline \\[-2ex]
& $M=256$ & $512$ & $1024$ \\
\midrule
SGD & 2.36 & 2.45 & 2.57 \\
Adagrad &
  \cellcolor{lightblue}2.12 & \cellcolor{lightblue}2.14 & \cellcolor{lightblue}2.20
 \\
\bottomrule
\end{tabular}
\end{sc}
\vskip -0.1in
\end{wraptable}

For nonconvex cases, we check the instruction-following fine-tuning task on Alpaca~\citep{alpaca} dataset with GPT-2~\citep{radford2019gpt2} model. GPT-2 utilizes GELU~\citep{hendrycks2016gaussian} as its activation function. As shown in Table~\ref{tab:nonconvex_gpt2}, it can be observed that Adagrad still outperforms SGD with different batch sizes. The loss gap is especially salient under large batch sizes. This also confirms that Adagrad's convergence speed is not significantly affected by the large batch size, which matches the results in our theory. Full experimental details are available in Appendix~\ref{appendix:exp_details}.

We also examine Assumption~\ref{asm:generalized_smooth} under this non-convex setting by following the common practice of~\citep{zhang2019gradient,crawshaw2022robustness}. Due to the inherent dependency on dimensions in Assumption~\ref{asm:generalized_smooth}, we randomly sample a small portion of the parameters in GPT-2 to check. As shown in Figure~\ref{figs:exp_assumption_verification}, 
a linear bound of local smoothness with respect to the gradient value holds in all randomly chosen dimensions. Notice that the observed condition in Figure~\ref{figs:exp_assumption_verification} is even stronger, as it directly implies Assumption~\ref{asm:generalized_smooth} by summing up all the dimensions.

\begin{figure}[ht]
\vskip 0.2in
\centering
\includegraphics[width=0.4\columnwidth]{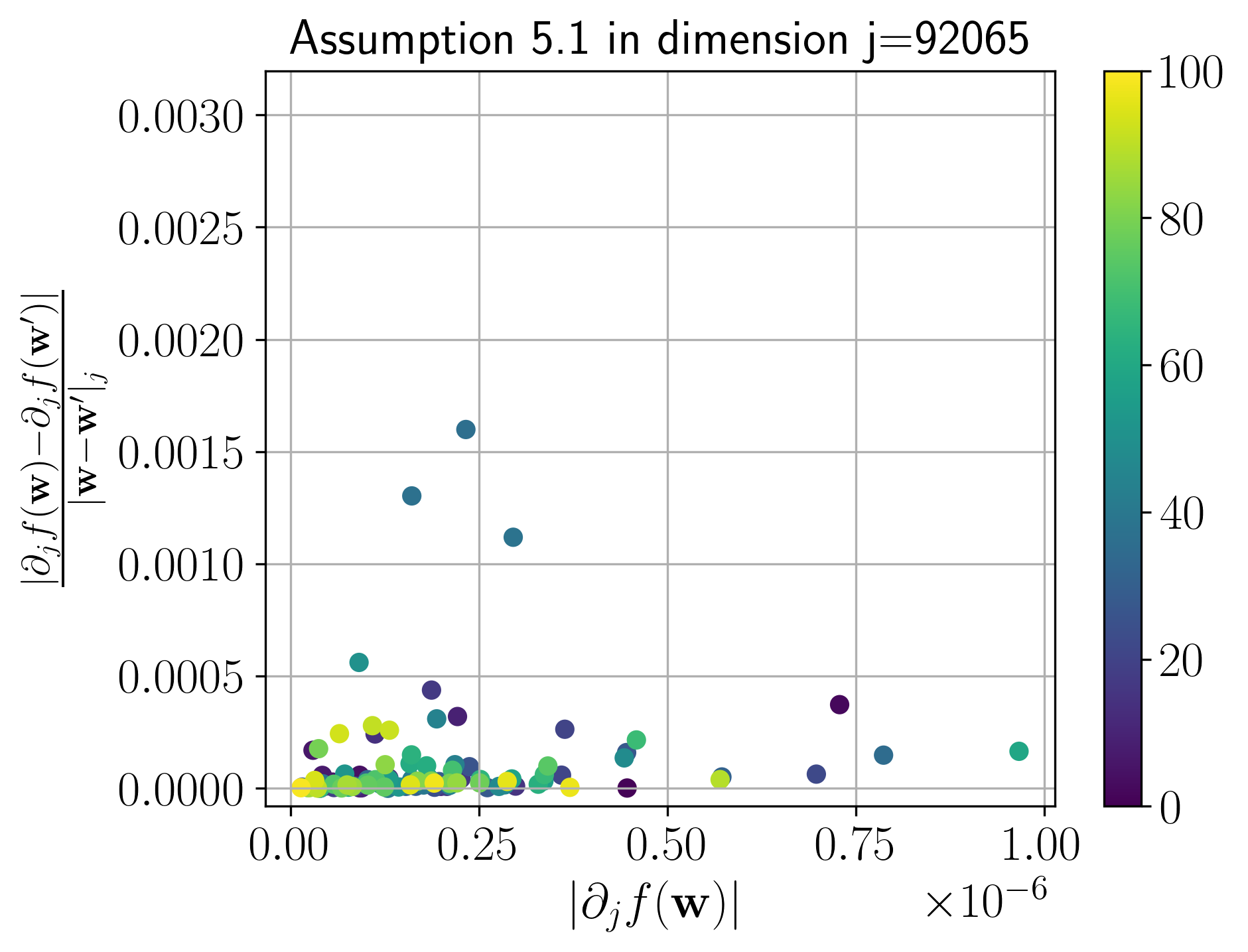}
\includegraphics[width=0.4\columnwidth]{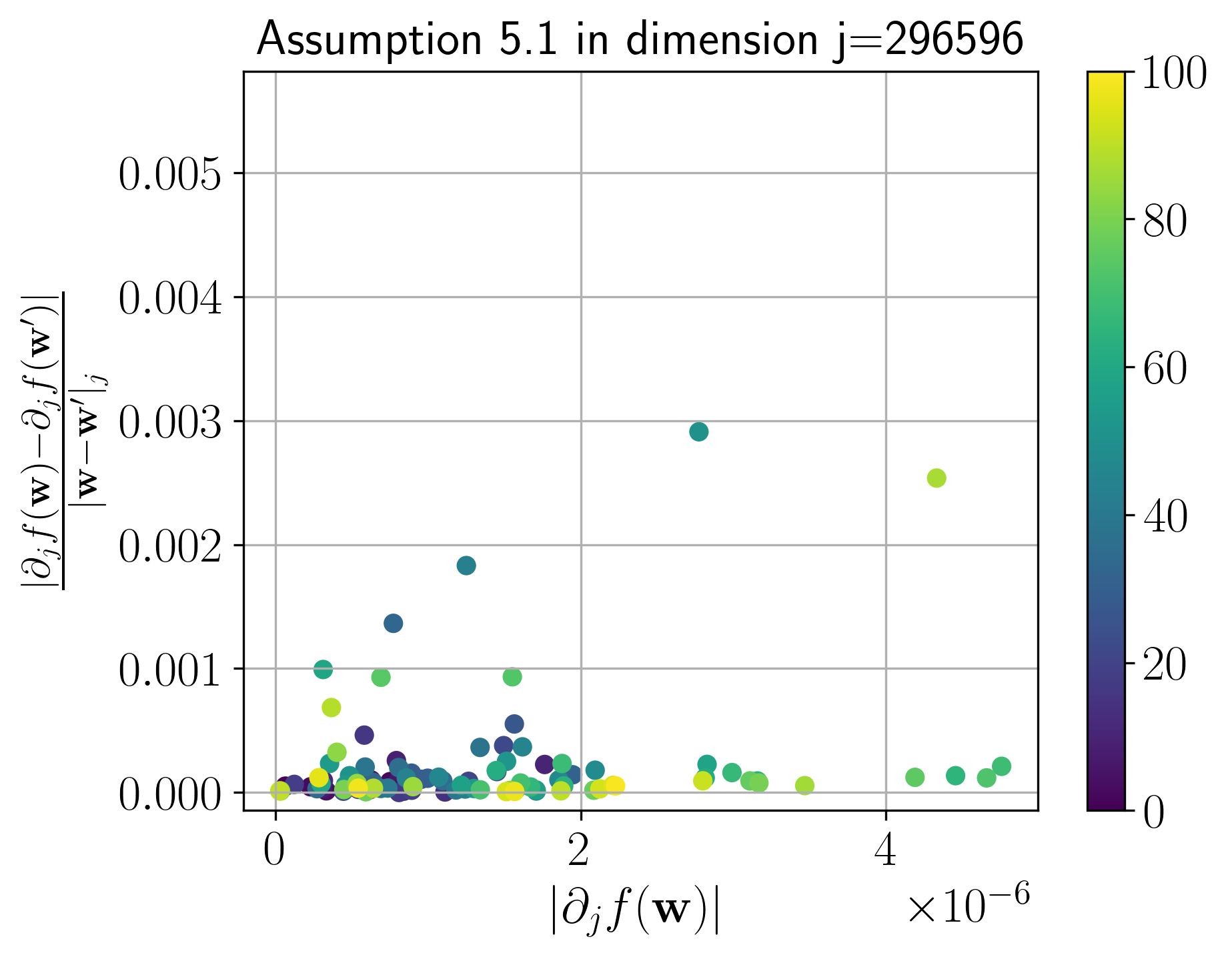}
\\
\includegraphics[width=0.4\columnwidth]{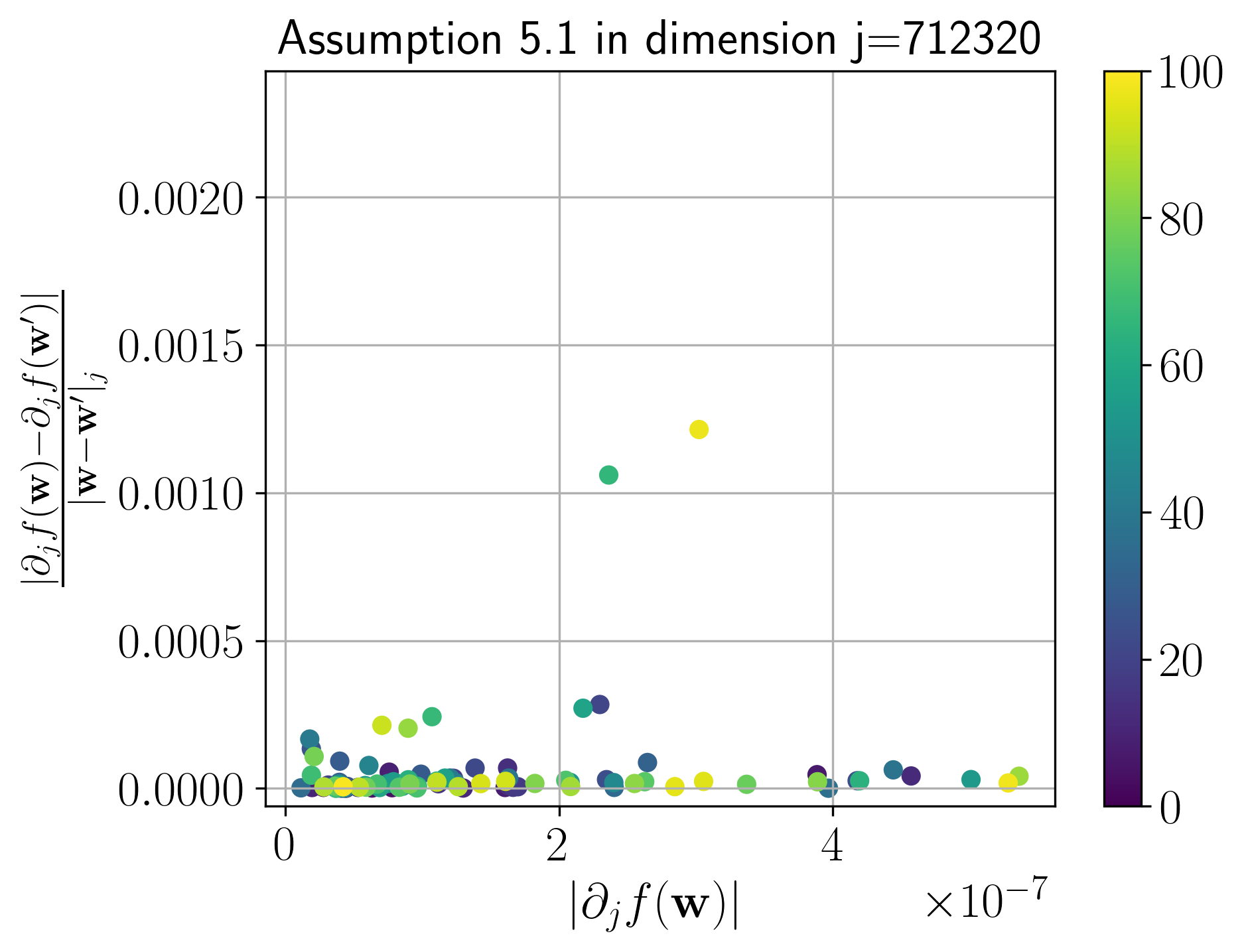}
\includegraphics[width=0.4\columnwidth]{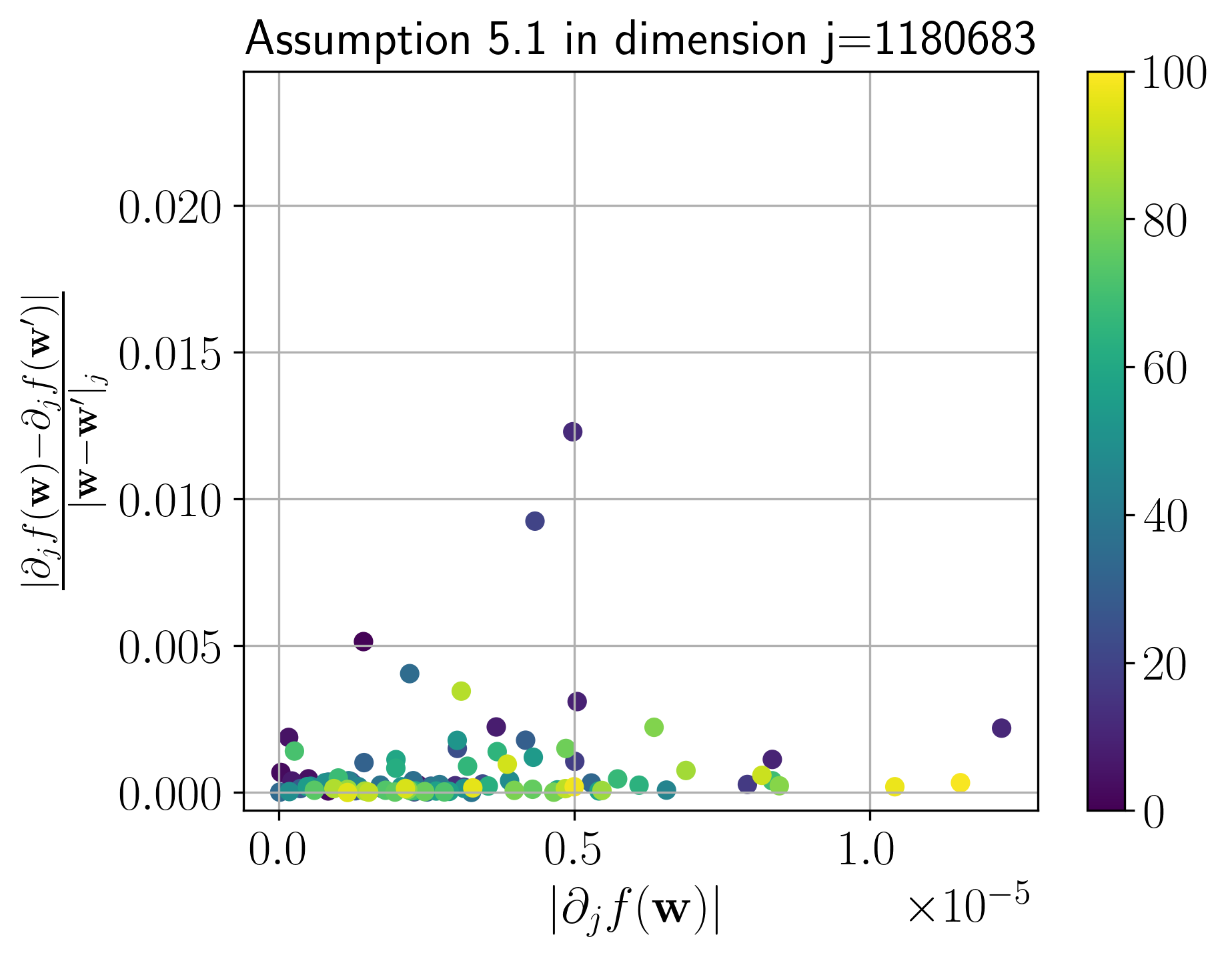}
\caption{Verification of Assumption~\ref{asm:generalized_smooth} in GPT-2 on Alpaca dataset. \textbf{x-axis}: $|\partial_j f(\bw)|$, \textbf{y-axis}: $|\partial_j f(\bw) - \partial_j f(\bw')| / |\bw - \bw'|_j$, where the color represents the iteration index of $\bw$. 
We run Adam with full gradients for 100 steps and randomly selected nearby points $\bw$ and $\bw$' along the trajectory to plot the scatter points.}
\label{figs:exp_assumption_verification}
\end{figure}

\section{Conclusion}
In this paper, we present the theoretical convergence results of Adagrad under anisotropic smoothness and noise assumptions. We further introduce a novel anisotropic generalized smoothness assumption to extend the aforementioned results in a more fine-grained analysis.
Based on the theorems, we conduct comparisons between the convergence rates of AdaGrad and SGD and AdaGrad-Norm in the large batch settings, which provides a deeper theoretical understanding of when and why adaptive gradient methods can outperform classical gradient algorithms with coordinate-wise uniform step sizes. 
Empirical studies offer strong evidence to support our proposed assumptions and theory.

\newpage
\bibliography{ref}

\begin{thebibliography}{92}
\providecommand{\natexlab}[1]{#1}
\providecommand{\url}[1]{\texttt{#1}}
\expandafter\ifx\csname urlstyle\endcsname\relax
  \providecommand{\doi}[1]{doi: #1}\else
  \providecommand{\doi}{doi: \begingroup \urlstyle{rm}\Url}\fi

\bibitem[Allen-Zhu(2018)]{allen2018katyusha}
Zeyuan Allen-Zhu.
\newblock Katyusha: The first direct acceleration of stochastic gradient methods.
\newblock \emph{Journal of Machine Learning Research}, 18\penalty0 (221):\penalty0 1--51, 2018.

\bibitem[Arjevani \& Field(2020)Arjevani and Field]{arjevani2020analytic}
Yossi Arjevani and Michael Field.
\newblock Analytic characterization of the hessian in shallow relu models: A tale of symmetry.
\newblock \emph{Advances in Neural Information Processing Systems}, 33:\penalty0 5441--5452, 2020.

\bibitem[Attia \& Koren(2023)Attia and Koren]{attia2023sgd}
Amit Attia and Tomer Koren.
\newblock Sgd with adagrad stepsizes: Full adaptivity with high probability to unknown parameters, unbounded gradients and affine variance.
\newblock \emph{arXiv preprint arXiv:2302.08783}, 2023.

\bibitem[Bernstein et~al.(2018{\natexlab{a}})Bernstein, Wang, Azizzadenesheli, and Anandkumar]{bernstein2018signsgd}
Jeremy Bernstein, Yu-Xiang Wang, Kamyar Azizzadenesheli, and Animashree Anandkumar.
\newblock signsgd: Compressed optimisation for non-convex problems.
\newblock In \emph{International Conference on Machine Learning}, pp.\  560--569. PMLR, 2018{\natexlab{a}}.

\bibitem[Bernstein et~al.(2018{\natexlab{b}})Bernstein, Zhao, Azizzadenesheli, and Anandkumar]{bernstein2019signsgd}
Jeremy Bernstein, Jiawei Zhao, Kamyar Azizzadenesheli, and Anima Anandkumar.
\newblock signsgd with majority vote is communication efficient and fault tolerant.
\newblock \emph{arXiv preprint arXiv:1810.05291}, 2018{\natexlab{b}}.

\bibitem[Cesa-Bianchi et~al.(2007)Cesa-Bianchi, Mansour, and Stoltz]{cesa2007improved}
Nicolo Cesa-Bianchi, Yishay Mansour, and Gilles Stoltz.
\newblock Improved second-order bounds for prediction with expert advice.
\newblock \emph{Machine Learning}, 66:\penalty0 321--352, 2007.

\bibitem[Chang \& Lin(2011)Chang and Lin]{chang2011libsvm}
Chih-Chung Chang and Chih-Jen Lin.
\newblock Libsvm: A library for support vector machines.
\newblock \emph{ACM transactions on intelligent systems and technology (TIST)}, 2\penalty0 (3):\penalty0 1--27, 2011.

\bibitem[Chen et~al.(2020)Chen, Wu, and Hong]{chen2020understanding}
Xiangyi Chen, Steven~Z Wu, and Mingyi Hong.
\newblock Understanding gradient clipping in private sgd: A geometric perspective.
\newblock \emph{Advances in Neural Information Processing Systems}, 33:\penalty0 13773--13782, 2020.

\bibitem[Choi et~al.(2019)Choi, Shallue, Nado, Lee, Maddison, and Dahl]{choi2019empirical}
Dami Choi, Christopher~J Shallue, Zachary Nado, Jaehoon Lee, Chris~J Maddison, and George~E Dahl.
\newblock On empirical comparisons of optimizers for deep learning.
\newblock \emph{arXiv preprint arXiv:1910.05446}, 2019.

\bibitem[Crawshaw et~al.(2022)Crawshaw, Liu, Orabona, Zhang, and Zhuang]{crawshaw2022robustness}
Michael Crawshaw, Mingrui Liu, Francesco Orabona, Wei Zhang, and Zhenxun Zhuang.
\newblock Robustness to unbounded smoothness of generalized signsgd.
\newblock \emph{Advances in neural information processing systems}, 35:\penalty0 9955--9968, 2022.

\bibitem[Das et~al.(2024)Das, Agarwal, Sanghavi, and Dhillon]{das2024towards}
Rudrajit Das, Naman Agarwal, Sujay Sanghavi, and Inderjit~S Dhillon.
\newblock Towards quantifying the preconditioning effect of adam.
\newblock \emph{arXiv preprint arXiv:2402.07114}, 2024.

\bibitem[De et~al.(2018)De, Mukherjee, and Ullah]{de2018convergence}
Soham De, Anirbit Mukherjee, and Enayat Ullah.
\newblock Convergence guarantees for rmsprop and adam in non-convex optimization and an empirical comparison to nesterov acceleration.
\newblock \emph{arXiv preprint arXiv:1807.06766}, 2018.

\bibitem[D{\'e}fossez et~al.(2020)D{\'e}fossez, Bottou, Bach, and Usunier]{defossez2020simple}
Alexandre D{\'e}fossez, L{\'e}on Bottou, Francis Bach, and Nicolas Usunier.
\newblock A simple convergence proof of adam and adagrad.
\newblock \emph{arXiv preprint arXiv:2003.02395}, 2020.

\bibitem[Dieuleveut et~al.(2017)Dieuleveut, Flammarion, and Bach]{dieuleveut2017harder}
Aymeric Dieuleveut, Nicolas Flammarion, and Francis Bach.
\newblock Harder, better, faster, stronger convergence rates for least-squares regression.
\newblock \emph{The Journal of Machine Learning Research}, 18\penalty0 (1):\penalty0 3520--3570, 2017.

\bibitem[Duchi et~al.(2011)Duchi, Hazan, and Singer]{duchi2011adaptive}
John Duchi, Elad Hazan, and Yoram Singer.
\newblock Adaptive subgradient methods for online learning and stochastic optimization.
\newblock \emph{Journal of machine learning research}, 12\penalty0 (7), 2011.

\bibitem[Faw et~al.(2022)Faw, Tziotis, Caramanis, Mokhtari, Shakkottai, and Ward]{faw2022power}
Matthew Faw, Isidoros Tziotis, Constantine Caramanis, Aryan Mokhtari, Sanjay Shakkottai, and Rachel Ward.
\newblock The power of adaptivity in sgd: Self-tuning step sizes with unbounded gradients and affine variance.
\newblock In \emph{Conference on Learning Theory}, pp.\  313--355. PMLR, 2022.

\bibitem[Faw et~al.(2023)Faw, Rout, Caramanis, and Shakkottai]{faw2023beyond}
Matthew Faw, Litu Rout, Constantine Caramanis, and Sanjay Shakkottai.
\newblock Beyond uniform smoothness: A stopped analysis of adaptive sgd.
\newblock In \emph{The Thirty Sixth Annual Conference on Learning Theory}, pp.\  89--160. PMLR, 2023.

\bibitem[Garrigos \& Gower(2023)Garrigos and Gower]{garrigos2023handbook}
Guillaume Garrigos and Robert~M Gower.
\newblock Handbook of convergence theorems for (stochastic) gradient methods.
\newblock \emph{arXiv preprint arXiv:2301.11235}, 2023.

\bibitem[Ge et~al.(2019)Ge, Kakade, Kidambi, and Netrapalli]{ge2019step}
Rong Ge, Sham~M Kakade, Rahul Kidambi, and Praneeth Netrapalli.
\newblock The step decay schedule: A near optimal, geometrically decaying learning rate procedure for least squares.
\newblock In \emph{Advances in Neural Information Processing Systems}, pp.\  14977--14988, 2019.

\bibitem[Ghadimi \& Lan(2013)Ghadimi and Lan]{ghadimi2013stochastic}
Saeed Ghadimi and Guanghui Lan.
\newblock Stochastic first-and zeroth-order methods for nonconvex stochastic programming.
\newblock \emph{SIAM Journal on Optimization}, 23\penalty0 (4):\penalty0 2341--2368, 2013.

\bibitem[Gorbunov et~al.(2020)Gorbunov, Danilova, and Gasnikov]{gorbunov2020stochastic}
Eduard Gorbunov, Marina Danilova, and Alexander Gasnikov.
\newblock Stochastic optimization with heavy-tailed noise via accelerated gradient clipping.
\newblock \emph{Advances in Neural Information Processing Systems}, 33:\penalty0 15042--15053, 2020.

\bibitem[Guo et~al.(2021)Guo, Xu, Yin, Jin, and Yang]{guo2021novel}
Zhishuai Guo, Yi~Xu, Wotao Yin, Rong Jin, and Tianbao Yang.
\newblock A novel convergence analysis for algorithms of the adam family.
\newblock \emph{arXiv preprint arXiv:2112.03459}, 2021.

\bibitem[Hazan et~al.(2007)Hazan, Agarwal, and Kale]{hazan2007logarithmic}
Elad Hazan, Amit Agarwal, and Satyen Kale.
\newblock Logarithmic regret algorithms for online convex optimization.
\newblock \emph{Machine Learning}, 69:\penalty0 169--192, 2007.

\bibitem[Hendrycks \& Gimpel(2016)Hendrycks and Gimpel]{hendrycks2016gaussian}
Dan Hendrycks and Kevin Gimpel.
\newblock Gaussian error linear units (gelus).
\newblock \emph{arXiv preprint arXiv:1606.08415}, 2016.

\bibitem[Hong \& Lin(2024{\natexlab{a}})Hong and Lin]{hong2024convergence}
Yusu Hong and Junhong Lin.
\newblock On convergence of adam for stochastic optimization under relaxed assumptions.
\newblock \emph{arXiv preprint arXiv:2402.03982}, 2024{\natexlab{a}}.

\bibitem[Hong \& Lin(2024{\natexlab{b}})Hong and Lin]{hong2024revisiting}
Yusu Hong and Junhong Lin.
\newblock Revisiting convergence of adagrad with relaxed assumptions.
\newblock \emph{arXiv preprint arXiv:2402.13794}, 2024{\natexlab{b}}.

\bibitem[Kavis et~al.(2022)Kavis, Levy, and Cevher]{kavis2022high}
Ali Kavis, Kfir~Yehuda Levy, and Volkan Cevher.
\newblock High probability bounds for a class of nonconvex algorithms with adagrad stepsize.
\newblock \emph{arXiv preprint arXiv:2204.02833}, 2022.

\bibitem[Kingma \& Ba(2014)Kingma and Ba]{kingma2014adam}
Diederik~P Kingma and Jimmy Ba.
\newblock Adam: A method for stochastic optimization.
\newblock \emph{arXiv preprint arXiv:1412.6980}, 2014.

\bibitem[Koloskova et~al.(2023)Koloskova, Hendrikx, and Stich]{koloskova2023revisiting}
Anastasia Koloskova, Hadrien Hendrikx, and Sebastian~U Stich.
\newblock Revisiting gradient clipping: Stochastic bias and tight convergence guarantees.
\newblock In \emph{International Conference on Machine Learning}, pp.\  17343--17363. PMLR, 2023.

\bibitem[Kunstner et~al.(2023)Kunstner, Chen, Lavington, and Schmidt]{kunstner2023noise}
Frederik Kunstner, Jacques Chen, Jonathan~Wilder Lavington, and Mark Schmidt.
\newblock Noise is not the main factor behind the gap between sgd and adam on transformers, but sign descent might be.
\newblock \emph{arXiv preprint arXiv:2304.13960}, 2023.

\bibitem[Kushner \& Clark(2012)Kushner and Clark]{kushner2012stochastic}
Harold~Joseph Kushner and Dean~S Clark.
\newblock \emph{Stochastic approximation methods for constrained and unconstrained systems}, volume~26.
\newblock Springer Science \& Business Media, 2012.

\bibitem[Levy et~al.(2018)Levy, Yurtsever, and Cevher]{levy2018online}
Kfir~Y Levy, Alp Yurtsever, and Volkan Cevher.
\newblock Online adaptive methods, universality and acceleration.
\newblock \emph{Advances in neural information processing systems}, 31, 2018.

\bibitem[Li et~al.(2024)Li, Rakhlin, and Jadbabaie]{li2024convergence}
Haochuan Li, Alexander Rakhlin, and Ali Jadbabaie.
\newblock Convergence of adam under relaxed assumptions.
\newblock \emph{Advances in Neural Information Processing Systems}, 36, 2024.

\bibitem[Li \& Orabona(2019)Li and Orabona]{li2019convergence}
Xiaoyu Li and Francesco Orabona.
\newblock On the convergence of stochastic gradient descent with adaptive stepsizes.
\newblock In \emph{The 22nd international conference on artificial intelligence and statistics}, pp.\  983--992. PMLR, 2019.

\bibitem[Li et~al.(2021)Li, Choudhary, Wei, Yuan, Bhushanam, Zhao, and Lan]{li2021frequency}
Yan Li, Dhruv Choudhary, Xiaohan Wei, Baichuan Yuan, Bhargav Bhushanam, Tuo Zhao, and Guanghui Lan.
\newblock Frequency-aware sgd for efficient embedding learning with provable benefits.
\newblock \emph{arXiv preprint arXiv:2110.04844}, 2021.

\bibitem[Liu et~al.(2023{\natexlab{a}})Liu, Li, Hall, Liang, and Ma]{liu2023sophia}
Hong Liu, Zhiyuan Li, David Hall, Percy Liang, and Tengyu Ma.
\newblock Sophia: A scalable stochastic second-order optimizer for language model pre-training.
\newblock \emph{arXiv preprint arXiv:2305.14342}, 2023{\natexlab{a}}.

\bibitem[Liu et~al.(2023{\natexlab{b}})Liu, Nguyen, Nguyen, Ene, and Nguyen]{liu2023high}
Zijian Liu, Ta~Duy Nguyen, Thien~Hang Nguyen, Alina Ene, and Huy Nguyen.
\newblock High probability convergence of stochastic gradient methods.
\newblock In \emph{International Conference on Machine Learning}, pp.\  21884--21914. PMLR, 2023{\natexlab{b}}.

\bibitem[Loshchilov \& Hutter(2016)Loshchilov and Hutter]{loshchilov2016sgdr}
Ilya Loshchilov and Frank Hutter.
\newblock Sgdr: Stochastic gradient descent with warm restarts.
\newblock \emph{arXiv preprint arXiv:1608.03983}, 2016.

\bibitem[Loshchilov \& Hutter(2017)Loshchilov and Hutter]{loshchilov2017decoupled}
Ilya Loshchilov and Frank Hutter.
\newblock Decoupled weight decay regularization.
\newblock \emph{arXiv preprint arXiv:1711.05101}, 2017.

\bibitem[Maladkar et~al.(2024)Maladkar, Jiang, and Mokhtari]{maladkar2024convergence}
Devyani Maladkar, Ruichen Jiang, and Aryan Mokhtari.
\newblock Convergence analysis of adaptive gradient methods under refined smoothness and noise assumptions.
\newblock \emph{arXiv preprint arXiv:2406.04592}, 2024.

\bibitem[Moulines \& Bach(2011)Moulines and Bach]{moulines2011non}
Eric Moulines and Francis Bach.
\newblock Non-asymptotic analysis of stochastic approximation algorithms for machine learning.
\newblock \emph{Advances in neural information processing systems}, 24, 2011.

\bibitem[Muchnik et~al.(2013)Muchnik, Pei, Parra, Reis, Andrade~Jr, Havlin, and Makse]{muchnik2013origins}
Lev Muchnik, Sen Pei, Lucas~C Parra, Saulo~DS Reis, Jos{\'e}~S Andrade~Jr, Shlomo Havlin, and Hern{\'a}n~A Makse.
\newblock Origins of power-law degree distribution in the heterogeneity of human activity in social networks.
\newblock \emph{Scientific reports}, 3\penalty0 (1):\penalty0 1783, 2013.

\bibitem[Mukkamala \& Hein(2017)Mukkamala and Hein]{mukkamala2017variants}
Mahesh~Chandra Mukkamala and Matthias Hein.
\newblock Variants of rmsprop and adagrad with logarithmic regret bounds.
\newblock In \emph{International conference on machine learning}, pp.\  2545--2553. PMLR, 2017.

\bibitem[Nemirovski \& Yudin(1978)Nemirovski and Yudin]{nemirovski1978cezari}
Arkadi Nemirovski and D~Yudin.
\newblock On cezari's convergence of the steepest descent method for approximating saddle point of convex-concave functions.
\newblock In \emph{Soviet Mathematics. Doklady}, volume~19, pp.\  258--269, 1978.

\bibitem[Nemirovski et~al.(2009)Nemirovski, Juditsky, Lan, and Shapiro]{nemirovski2009robust}
Arkadi Nemirovski, Anatoli Juditsky, Guanghui Lan, and Alexander Shapiro.
\newblock Robust stochastic approximation approach to stochastic programming.
\newblock \emph{SIAM Journal on optimization}, 19\penalty0 (4):\penalty0 1574--1609, 2009.

\bibitem[Nemirovskij \& Yudin(1983)Nemirovskij and Yudin]{nemirovskij1983problem}
Arkadij~Semenovi{\v{c}} Nemirovskij and David~Borisovich Yudin.
\newblock Problem complexity and method efficiency in optimization.
\newblock 1983.

\bibitem[Nesterov et~al.(2018)]{nesterov2018lectures}
Yurii Nesterov et~al.
\newblock \emph{Lectures on convex optimization}, volume 137.
\newblock Springer, 2018.

\bibitem[Nguyen et~al.(2019)Nguyen, Nguyen, and van Dijk]{nguyen2019tight}
Phuong\_Ha Nguyen, Lam Nguyen, and Marten van Dijk.
\newblock Tight dimension independent lower bound on the expected convergence rate for diminishing step sizes in sgd.
\newblock \emph{Advances in Neural Information Processing Systems}, 32, 2019.

\bibitem[Orabona(2019)]{orabona2019modern}
Francesco Orabona.
\newblock A modern introduction to online learning.
\newblock \emph{arXiv preprint arXiv:1912.13213}, 2019.

\bibitem[Orabona \& P{\'a}l(2015)Orabona and P{\'a}l]{orabona2015scale}
Francesco Orabona and D{\'a}vid P{\'a}l.
\newblock Scale-free algorithms for online linear optimization.
\newblock In \emph{International Conference on Algorithmic Learning Theory}, pp.\  287--301. Springer, 2015.

\bibitem[Orabona \& P{\'a}l(2018)Orabona and P{\'a}l]{orabona2018scale}
Francesco Orabona and D{\'a}vid P{\'a}l.
\newblock Scale-free online learning.
\newblock \emph{Theoretical Computer Science}, 716:\penalty0 50--69, 2018.

\bibitem[Pan et~al.(2021)Pan, Ye, and Zhang]{pan2021eigencurve}
Rui Pan, Haishan Ye, and Tong Zhang.
\newblock Eigencurve: Optimal learning rate schedule for sgd on quadratic objectives with skewed hessian spectrums.
\newblock \emph{arXiv preprint arXiv:2110.14109}, 2021.

\bibitem[Pan et~al.(2022)Pan, Diao, Chen, and Zhang]{pan2022extremebert}
Rui Pan, Shizhe Diao, Jianlin Chen, and Tong Zhang.
\newblock Extremebert: A toolkit for accelerating pretraining of customized bert.
\newblock \emph{arXiv preprint arXiv:2211.17201}, 2022.

\bibitem[Pan et~al.(2023)Pan, Liu, Wang, and Zhang]{pan2023accelerated}
Rui Pan, Yuxing Liu, Xiaoyu Wang, and Tong Zhang.
\newblock Accelerated convergence of stochastic heavy ball method under anisotropic gradient noise.
\newblock \emph{arXiv preprint arXiv:2312.14567}, 2023.

\bibitem[Qian et~al.(2021)Qian, Wu, Zhuang, Wang, and Xiao]{qian2021understanding}
Jiang Qian, Yuren Wu, Bojin Zhuang, Shaojun Wang, and Jing Xiao.
\newblock Understanding gradient clipping in incremental gradient methods.
\newblock In \emph{International Conference on Artificial Intelligence and Statistics}, pp.\  1504--1512. PMLR, 2021.

\bibitem[Qu et~al.(2022)Qu, Gao, Hinder, Ye, and Zhou]{qu2022optimal}
Zhaonan Qu, Wenzhi Gao, Oliver Hinder, Yinyu Ye, and Zhengyuan Zhou.
\newblock Optimal diagonal preconditioning: Theory and practice.
\newblock \emph{arXiv preprint arXiv:2209.00809}, 2022.

\bibitem[Radford et~al.(2019)Radford, Wu, Child, Luan, Amodei, and Sutskever]{radford2019gpt2}
Alec Radford, Jeff Wu, Rewon Child, David Luan, Dario Amodei, and Ilya Sutskever.
\newblock Language models are unsupervised multitask learners.
\newblock 2019.

\bibitem[Rakhlin et~al.(2011)Rakhlin, Shamir, and Sridharan]{rakhlin2011making}
Alexander Rakhlin, Ohad Shamir, and Karthik Sridharan.
\newblock Making gradient descent optimal for strongly convex stochastic optimization.
\newblock \emph{arXiv preprint arXiv:1109.5647}, 2011.

\bibitem[Reddi et~al.(2019)Reddi, Kale, and Kumar]{reddi2019convergence}
Sashank~J Reddi, Satyen Kale, and Sanjiv Kumar.
\newblock On the convergence of adam and beyond.
\newblock \emph{arXiv preprint arXiv:1904.09237}, 2019.

\bibitem[Robbins \& Monro(1951)Robbins and Monro]{robbins1951stochastic}
Herbert Robbins and Sutton Monro.
\newblock A stochastic approximation method.
\newblock \emph{The annals of mathematical statistics}, pp.\  400--407, 1951.

\bibitem[Sagun et~al.(2016)Sagun, Bottou, and LeCun]{sagun2016eigenvalues}
Levent Sagun, Leon Bottou, and Yann LeCun.
\newblock Eigenvalues of the hessian in deep learning: Singularity and beyond.
\newblock \emph{arXiv preprint arXiv:1611.07476}, 2016.

\bibitem[Shamir \& Zhang(2013)Shamir and Zhang]{shamir2013stochastic}
Ohad Shamir and Tong Zhang.
\newblock Stochastic gradient descent for non-smooth optimization: Convergence results and optimal averaging schemes.
\newblock In \emph{International conference on machine learning}, pp.\  71--79. PMLR, 2013.

\bibitem[Streeter \& McMahan(2010)Streeter and McMahan]{streeter2010less}
Matthew Streeter and H~Brendan McMahan.
\newblock Less regret via online conditioning.
\newblock \emph{arXiv preprint arXiv:1002.4862}, 2010.

\bibitem[Taori et~al.(2023)Taori, Gulrajani, Zhang, Dubois, Li, Guestrin, Liang, and Hashimoto]{alpaca}
Rohan Taori, Ishaan Gulrajani, Tianyi Zhang, Yann Dubois, Xuechen Li, Carlos Guestrin, Percy Liang, and Tatsunori~B. Hashimoto.
\newblock Stanford alpaca: An instruction-following llama model.
\newblock \url{https://github.com/tatsu-lab/stanford_alpaca}, 2023.

\bibitem[Tieleman et~al.(2012)Tieleman, Hinton, et~al.]{tieleman2012lecture}
Tijmen Tieleman, Geoffrey Hinton, et~al.
\newblock Lecture 6.5-rmsprop: Divide the gradient by a running average of its recent magnitude.
\newblock \emph{COURSERA: Neural networks for machine learning}, 4\penalty0 (2):\penalty0 26--31, 2012.

\bibitem[Touvron et~al.()Touvron, Martin, and Stone]{touvron2023llama2}
Hugo Touvron, Louis Martin, and Kevin Stone.
\newblock Llama 2: {Open} {Foundation} and {Fine}-{Tuned} {Chat} {Models}.

\bibitem[Touvron et~al.(2023)Touvron, Lavril, Izacard, Martinet, Lachaux, Lacroix, Rozière, Goyal, Hambro, Azhar, Rodriguez, Joulin, Grave, and Lample]{touvron2023llama}
Hugo Touvron, Thibaut Lavril, Gautier Izacard, Xavier Martinet, Marie-Anne Lachaux, Timothée Lacroix, Baptiste Rozière, Naman Goyal, Eric Hambro, Faisal Azhar, Aurelien Rodriguez, Armand Joulin, Edouard Grave, and Guillaume Lample.
\newblock {LLaMA}: {Open} and {Efficient} {Foundation} {Language} {Models}.
\newblock 2023.
\newblock \doi{10.48550/ARXIV.2302.13971}.
\newblock URL \url{https://arxiv.org/abs/2302.13971}.
\newblock Publisher: arXiv Version Number: 1.

\bibitem[Vani \& Rao(2019)Vani and Rao]{vani2019experimental}
S~Vani and TV~Madhusudhana Rao.
\newblock An experimental approach towards the performance assessment of various optimizers on convolutional neural network.
\newblock In \emph{2019 3rd international conference on trends in electronics and informatics (ICOEI)}, pp.\  331--336. IEEE, 2019.

\bibitem[Vaswani et~al.(2020)Vaswani, Laradji, Kunstner, Meng, Schmidt, and Lacoste-Julien]{vaswani2020adaptive}
Sharan Vaswani, Issam~H Laradji, Frederik Kunstner, Si~Yi Meng, Mark Schmidt, and Simon Lacoste-Julien.
\newblock Adaptive gradient methods converge faster with over-parameterization (and you can do a line-search).
\newblock \emph{arXiv preprint arXiv:2006.06835}, 2020.

\bibitem[Verbraeken et~al.(2020)Verbraeken, Wolting, Katzy, Kloppenburg, Verbelen, and Rellermeyer]{verbraeken2020survey}
Joost Verbraeken, Matthijs Wolting, Jonathan Katzy, Jeroen Kloppenburg, Tim Verbelen, and Jan~S Rellermeyer.
\newblock A survey on distributed machine learning.
\newblock \emph{Acm computing surveys (csur)}, 53\penalty0 (2):\penalty0 1--33, 2020.

\bibitem[Wang et~al.(2022)Wang, Zhang, Zhang, Meng, Ma, Liu, and Chen]{wang2022provable}
Bohan Wang, Yushun Zhang, Huishuai Zhang, Qi~Meng, Zhi-Ming Ma, Tie-Yan Liu, and Wei Chen.
\newblock Provable adaptivity in adam.
\newblock \emph{arXiv preprint arXiv:2208.09900}, 2022.

\bibitem[Wang et~al.(2023)Wang, Zhang, Ma, and Chen]{wang2023convergence}
Bohan Wang, Huishuai Zhang, Zhiming Ma, and Wei Chen.
\newblock Convergence of adagrad for non-convex objectives: Simple proofs and relaxed assumptions.
\newblock In \emph{The Thirty Sixth Annual Conference on Learning Theory}, pp.\  161--190. PMLR, 2023.

\bibitem[Wang et~al.(2024)Wang, Fu, Zhang, Zheng, and Chen]{wang2024closing}
Bohan Wang, Jingwen Fu, Huishuai Zhang, Nanning Zheng, and Wei Chen.
\newblock Closing the gap between the upper bound and lower bound of adam's iteration complexity.
\newblock \emph{Advances in Neural Information Processing Systems}, 36, 2024.

\bibitem[Wang et~al.(2019)Wang, Lu, Tu, and Zhang]{wang2019sadam}
Guanghui Wang, Shiyin Lu, Weiwei Tu, and Lijun Zhang.
\newblock Sadam: A variant of adam for strongly convex functions.
\newblock \emph{arXiv preprint arXiv:1905.02957}, 2019.

\bibitem[Ward et~al.(2020)Ward, Wu, and Bottou]{ward2020adagrad}
Rachel Ward, Xiaoxia Wu, and Leon Bottou.
\newblock Adagrad stepsizes: Sharp convergence over nonconvex landscapes.
\newblock \emph{The Journal of Machine Learning Research}, 21\penalty0 (1):\penalty0 9047--9076, 2020.

\bibitem[Wu et~al.(2018)Wu, Ward, and Bottou]{wu2018wngrad}
Xiaoxia Wu, Rachel Ward, and L{\'e}on Bottou.
\newblock Wngrad: Learn the learning rate in gradient descent.
\newblock \emph{arXiv preprint arXiv:1803.02865}, 2018.

\bibitem[Wu et~al.(2020)Wu, Zhu, Wu, Wang, and Ge]{wu2020dissecting}
Yikai Wu, Xingyu Zhu, Chenwei Wu, Annie Wang, and Rong Ge.
\newblock Dissecting hessian: Understanding common structure of hessian in neural networks.
\newblock \emph{arXiv preprint arXiv:2010.04261}, 2020.

\bibitem[Yin et~al.(2012)Yin, Cui, Li, Yao, and Chen]{yin2012challenging}
Hongzhi Yin, Bin Cui, Jing Li, Junjie Yao, and Chen Chen.
\newblock Challenging the long tail recommendation.
\newblock \emph{arXiv preprint arXiv:1205.6700}, 2012.

\bibitem[You et~al.(2017{\natexlab{a}})You, Gitman, and Ginsburg]{you2017large}
Yang You, Igor Gitman, and Boris Ginsburg.
\newblock Large batch training of convolutional networks.
\newblock \emph{arXiv preprint arXiv:1708.03888}, 2017{\natexlab{a}}.

\bibitem[You et~al.(2017{\natexlab{b}})You, Zhang, Hsieh, Demmel, and Keutzer]{you2017100}
Yang You, Zhao Zhang, C~Hsieh, James Demmel, and Kurt Keutzer.
\newblock 100-epoch imagenet training with alexnet in 24 minutes.
\newblock \emph{arXiv preprint arXiv:1709.05011}, 8, 2017{\natexlab{b}}.

\bibitem[You et~al.(2018)You, Zhang, Hsieh, Demmel, and Keutzer]{you2018imagenet}
Yang You, Zhao Zhang, Cho-Jui Hsieh, James Demmel, and Kurt Keutzer.
\newblock Imagenet training in minutes.
\newblock In \emph{Proceedings of the 47th International Conference on Parallel Processing}, pp.\  1--10, 2018.

\bibitem[You et~al.(2019)You, Li, Reddi, Hseu, Kumar, Bhojanapalli, Song, Demmel, Keutzer, and Hsieh]{you2019large}
Yang You, Jing Li, Sashank Reddi, Jonathan Hseu, Sanjiv Kumar, Srinadh Bhojanapalli, Xiaodan Song, James Demmel, Kurt Keutzer, and Cho-Jui Hsieh.
\newblock Large batch optimization for deep learning: Training bert in 76 minutes.
\newblock \emph{arXiv preprint arXiv:1904.00962}, 2019.

\bibitem[Zeiler(2012)]{zeiler2012adadelta}
Matthew~D Zeiler.
\newblock Adadelta: an adaptive learning rate method.
\newblock \emph{arXiv preprint arXiv:1212.5701}, 2012.

\bibitem[Zhang et~al.(2020{\natexlab{a}})Zhang, Jin, Fang, and Wang]{zhang2020improved}
Bohang Zhang, Jikai Jin, Cong Fang, and Liwei Wang.
\newblock Improved analysis of clipping algorithms for non-convex optimization.
\newblock \emph{Advances in Neural Information Processing Systems}, 33:\penalty0 15511--15521, 2020{\natexlab{a}}.

\bibitem[Zhang et~al.(2019)Zhang, He, Sra, and Jadbabaie]{zhang2019gradient}
Jingzhao Zhang, Tianxing He, Suvrit Sra, and Ali Jadbabaie.
\newblock Why gradient clipping accelerates training: A theoretical justification for adaptivity.
\newblock \emph{arXiv preprint arXiv:1905.11881}, 2019.

\bibitem[Zhang et~al.(2020{\natexlab{b}})Zhang, Karimireddy, Veit, Kim, Reddi, Kumar, and Sra]{zhang2020adaptive}
Jingzhao Zhang, Sai~Praneeth Karimireddy, Andreas Veit, Seungyeon Kim, Sashank Reddi, Sanjiv Kumar, and Suvrit Sra.
\newblock Why are adaptive methods good for attention models?
\newblock \emph{Advances in Neural Information Processing Systems}, 33:\penalty0 15383--15393, 2020{\natexlab{b}}.

\bibitem[Zhang(2023)]{zhang_2023_ltbook}
Tong Zhang.
\newblock \emph{Mathematical Analysis of Machine Learning Algorithms}.
\newblock Cambridge University Press, 2023.
\newblock \doi{10.1017/9781009093057}.

\bibitem[Zhang et~al.(2022)Zhang, Chen, Shi, Sun, and Luo]{zhang2022adam}
Yushun Zhang, Congliang Chen, Naichen Shi, Ruoyu Sun, and Zhi-Quan Luo.
\newblock Adam can converge without any modification on update rules.
\newblock \emph{Advances in neural information processing systems}, 35:\penalty0 28386--28399, 2022.

\bibitem[Zhang et~al.(2024)Zhang, Chen, Ding, Li, Sun, and Luo]{zhang2024transformers}
Yushun Zhang, Congliang Chen, Tian Ding, Ziniu Li, Ruoyu Sun, and Zhi-Quan Luo.
\newblock Why transformers need adam: A hessian perspective.
\newblock \emph{arXiv preprint arXiv:2402.16788}, 2024.

\bibitem[Zhou et~al.(2023)Zhou, Li, Li, Yu, Liu, Wang, Zhang, Ji, Yan, He, et~al.]{zhou2023comprehensive}
Ce~Zhou, Qian Li, Chen Li, Jun Yu, Yixin Liu, Guangjing Wang, Kai Zhang, Cheng Ji, Qiben Yan, Lifang He, et~al.
\newblock A comprehensive survey on pretrained foundation models: A history from bert to chatgpt.
\newblock \emph{arXiv preprint arXiv:2302.09419}, 2023.

\bibitem[Zhuang et~al.(2022)Zhuang, Liu, Cutkosky, and Orabona]{zhuang2022understanding}
Zhenxun Zhuang, Mingrui Liu, Ashok Cutkosky, and Francesco Orabona.
\newblock Understanding adamw through proximal methods and scale-freeness.
\newblock \emph{arXiv preprint arXiv:2202.00089}, 2022.

\bibitem[Zipf(2016)]{zipf2016human}
George~Kingsley Zipf.
\newblock \emph{Human behavior and the principle of least effort: An introduction to human ecology}.
\newblock Ravenio Books, 2016.

\end{thebibliography}
\bibliographystyle{iclr2025_conference}


\appendix

\allowdisplaybreaks[3]

\section{More Experimental Details}
\label{appendix:exp_details}

\subsection{Convex Case}
For \texttt{a4a}, \texttt{a6a} and \texttt{a9a}, we run 100 epochs of optimization with SGD and Adagrad individually, starting from a uniform distribution initialization $\bw_0 \in \mathcal{U}(-0.05, 0.05)^d$. To find the best hyperparameter, grid searches are conducted for both algorithms, with the search space being initial learning rate $\eta \in \{ 10.0, 1.0, 0.1, 0.01\}$ and learning rate schedules being either constant $\eta_t \equiv \eta$ or inverse square root decay $\eta_t = \eta / \sqrt{t+1}$, the same choices as in our theorems. For large datasets \texttt{real-sim} and \texttt{rcv1.binary}, all settings stay the same, except for the number of epochs 3 and the initialization $\bw_0 \in \mathcal{U}(-1/d, 1/d)^d \times 10^{-2}$.

Since it is generally hard to obtain analytical closed-form solutions for logistic regressions, we run gradient descent for $10^6$ epochs to obtain an approximated optimum $w_*$ for small datasets \texttt{a4a}, \texttt{a6a} and \texttt{a9a}. For larger ones \texttt{real-sim} and \texttt{rcv1.binary}, we run $10^2$ epochs of Adagrad instead, since GD converges much slower in comparison. In addition, for computing $\|\bH\|_2$ in large datasets, we run 10 iterations of power iteration to approximate the largest eigenvalue, which quickly converge to desired precisions $\le 10^{-5}$.

\subsection{Nonconvex Case}

For all experiments, we run 3 epochs of optimization with SGD and Adagrad, which is one of the recommended settings of the Alpaca dataset~\citep{alpaca} (\url{https://github.com/tatsu-lab/stanford_alpaca?tab=readme-ov-file#fine-tuning}). We search the learning rate $\eta \in \{1.0, 10^{-1}, 10^{-2}, 10^{-3}, 10^{-4}, 10^{-5}, 10^{-6} \}$ and report the best result in training loss for both SGD and Adagrad. The maximum sequence length is set to 512, along with the learning rate schedule being set to cosine decay~\citep{loshchilov2016sgdr}. Other settings remain the same as default ones in huggingface \texttt{transformers} (\url{https://github.com/huggingface/transformers}). In all our implementations, we use the version \texttt{transformers==4.38.2}. All experiments are conducted on a single A40 GPU, where gradient accumulation is adopted for batch sizes larger than 128 to reduce memory cost.

In addition to the presented results, we include in Figure~\ref{fig:gpt2-alpaca_loss_curve} the loss curves of SGD and Adagrad under batch sizes 256 and 512. They represent the typical loss decrease tendency of all experiments, where Adagrad converges faster than SGD since the beginning, and converges to a point with smaller loss.

Regarding licenses, the Alpaca dataset is released under Creative Commons Attribution-NonCommercial 4.0 International Public
License (\url{https://github.com/tatsu-lab/stanford_alpaca/blob/main/DATA_LICENSE}), while GPT-2 is released under MIT License (\url{https://huggingface.co/openai-community/gpt2}). The code repository of huggingface `transformers` is released under Apache License 2.0 (\url{https://github.com/huggingface/transformers/blob/main/LICENSE}).

\begin{figure}
\vskip 0.2in
\centering
\includegraphics[width=0.45\linewidth]{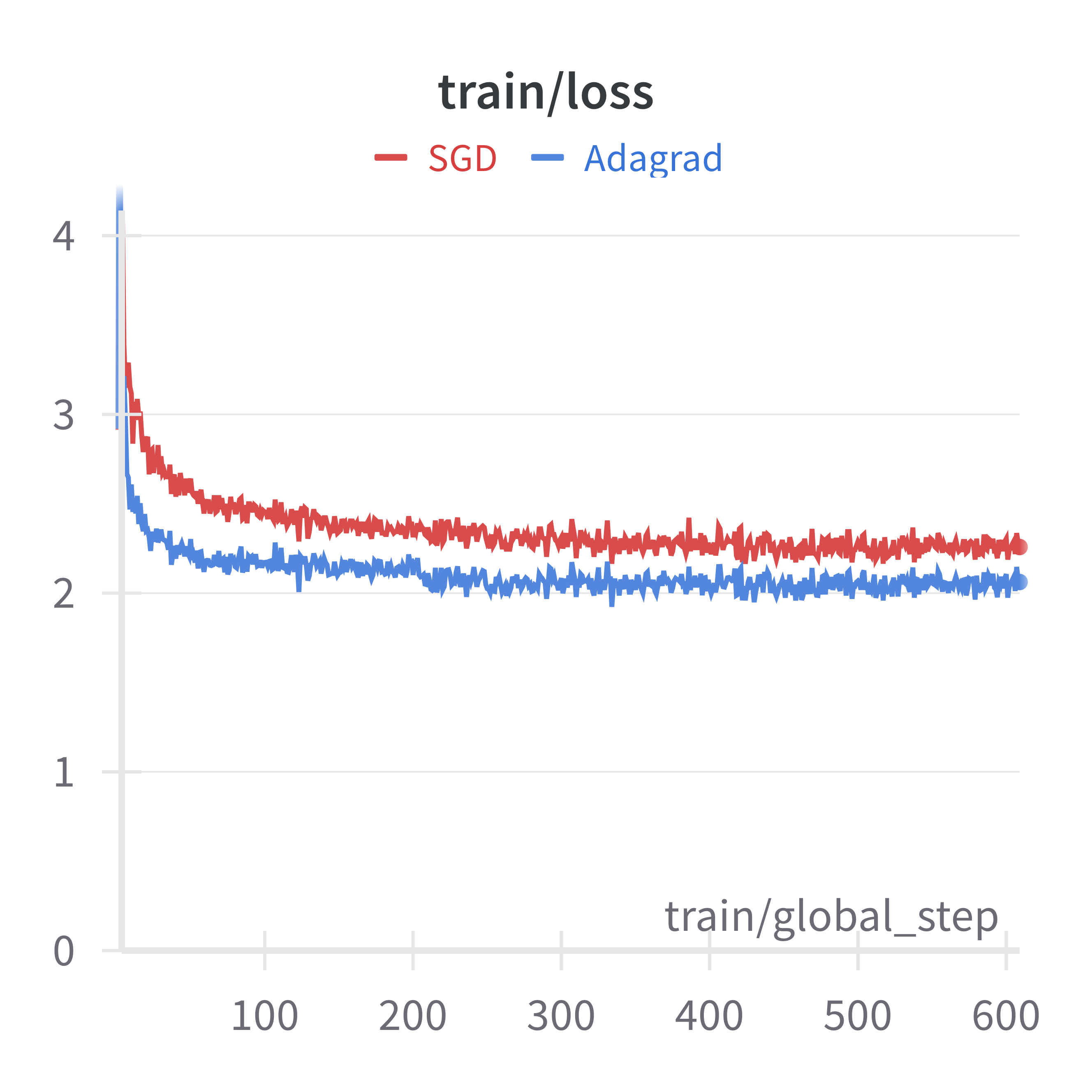}
\includegraphics[width=0.45\linewidth]{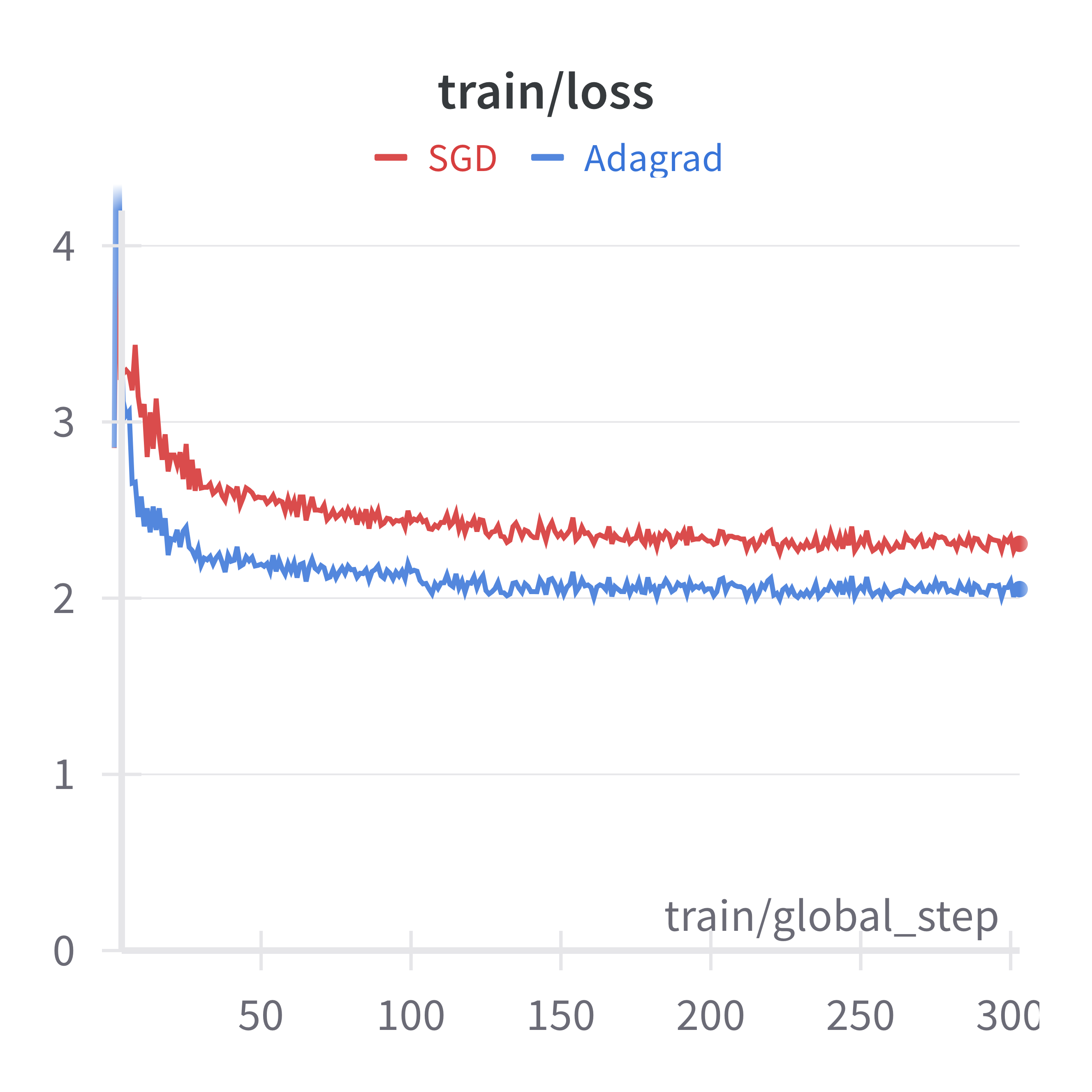}
\caption{Training loss curves of SGD and Adagrad for instruction following tasks on Alpaca with GPT2. \textbf{Left:} batch size 256, \textbf{Right:} batch size 512.}
\label{fig:gpt2-alpaca_loss_curve}
\end{figure}

\section{An Example in the Convex Case}\label{appendix:example_convex}
Following the SVM example in~\citet{duchi2011adaptive}, we provide Example~\ref{example:illustrative_example} to create a concrete case to help our illustration.

\begin{example}\label{example:illustrative_example}
    Consider a finite-sum optimization problem in a hypercube $\cW$:
    \begin{align*}
        f(\bw) = \frac{1}{n} \sum_{i=1}^n \phi_i(\bx_i^\top \bw),
    \end{align*}
    where $\phi_i:\RR\rightarrow\RR$ are convex and twice-differentiable functions and $\bx_i\in \RR^d, \bw\in\cW$. We assume that the first and second order derivatives are bounded such that $\Abs{\phi_i'(\bw)} \le G_1$ and $\phi_i''(\bw) \le G_2$ for all $i$ and $\bw \in \cW$.
    We also assume that the data points $\{\bx_i\}_{i=1}^n$ yield that
    \begin{align*}
        \left[\sum_{j=1}^d \left[{\frac{1}{n} \sum_{i=1}^n \bx_{i,j}^2}\right]^\frac{p}{2}\right]^{\frac{1}{p}} \le M_p,
    \end{align*}
    for $p\ge 1$ and denote $\bH = 1/n\sum_{i=1}^n \bx_i\bx_i^\top$.
    In this case, stochastic gradient yields that
    \begin{align*}
        \nabla_\bw f(\bw;\xi) = \bx_\xi \cdot \phi_\xi'(\bx_\xi^\top \bw) ,
    \end{align*}
    where $\xi$ is uniformly sampled from $\{1,\cdots, n\}$.
\end{example}

Note that Example~\ref{example:illustrative_example} includes a wide range of problems such as linear and logistic regression. 
We can check that the problem in Example~\ref{example:illustrative_example} yields Assumptions of Theorem~\ref{thm:sgd_convex_upper}~and~\ref{thm:smooth_convex_adagrad}. As the stochastic gradients are sampled uniformly with replacement, the variance of gradient noise is reduced by factor $M$, where $M$ is the batch size, as proven in Lemma~\ref{lem:variance_batch}.
We also assume an exponential tail to quantify the imbalance of Hessian in our following example.
\begin{assumption}\label{def:exponential_tail}
    We say the problem has an exponential tail when there exists some constants $\tau > 1/d$ such that
    \begin{align*}
        \bH_{j,j} = \frac{1}{n}\sum_{i=1}^n \bx_{i,j}^2 \propto \exp\left( -\tau j \right)
    \end{align*}
    for all $j=1,\cdots,d$,
    where without loss of generality, we assume that $\bH_{j,j} \ge \bH_{k,k}$ for all $j \le k$.
\end{assumption}
Note that the exponential tail is a typical example of highly imbalanced distributions of data, which is common in natural language modeling~\cite{zipf2016human}, social networks~\cite{muchnik2013origins}, and recommender systems~\cite{yin2012challenging}. 
Then recall the comparison between AdaGrad and SGD in the convex case:
\begin{align*}
    \text{AdaGrad:}& \quad \EE\left[ f(\bar{\bw}_T) - f(\bw_*) \right] = \cO\left( \frac{D_\infty \Norm{\bsigma}_1}{\sqrt{MT}} + \frac{\Norm{\bL}_1 D_\infty^2}{T} \right)  \\
    \text{SGD \& AdaGrad-Norm: }& \quad \EE\left[ f(\bar{\bw}_T) - f(\bw_*) \right] = \cO\left( \frac{D_2 \Norm{\bsigma}_2}{\sqrt{MT}} + \frac{\Norm{\bL}_\infty D_2^2}{T}  \right) 
\end{align*}
and we take the following ratios to denote the difference in bias and variance terms separately.
\begin{align*}
    R_1 \triangleq \frac{D_\infty \Norm{\bsigma}_1}{D_2 \Norm{\bsigma}_2}, \quad R_2 \triangleq \frac{\Norm{\bL}_1 D_\infty^2}{\Norm{\bL}_\infty D_2^2}.
\end{align*}
As we take $\cW$ to be a hypercube, it holds that $D_2^2 = d D_\infty^2$.
Concerning $R_1$, we have
\begin{align*}
    \Norm{\bsigma}_1 \le \max_{\bw\in\cW} \sum_{j=1}^d \sqrt{\frac{1}{n}\sum_{i=1}^n \left[\phi_i'(\bx_i^\top\bw)\right]^2\bx_{i,j}^2} \le G_1M_1
\end{align*}
and similarly,
\begin{align*}
    \Norm{\bsigma}_2 \le \max_{\bw\in\cW} \sqrt{ \frac{1}{n}\sum_{i=1}^n \sum_{j=1}^d \left[\phi_i'(\bx_i^\top\bw)\right]^2\bx_{i,j}^2} \le G_1M_2.
\end{align*}
Therefore, based on the heavy-tailed assumption~(Assumption~\ref{def:exponential_tail}), we can obtain that 
\begin{align*}
    R_1 = \sqrt{ \frac{\phi(\bsigma)}{\phi(D)} } = \frac{M_1 D_\infty}{M_2 D_2} = \frac{M_1}{\sqrt{d} M_2} \propto \frac{\sum_{j=1}^d \exp\left( - \frac{1}{2} \tau j \right)}{\sqrt{d \sum_{j=1}^d \exp\left( -\tau j \right)}} \le \frac{\sqrt{1 - \exp\left( - \tau  \right)}}{1 - \exp\left( - \frac{1}{2}\tau  \right)} \cdot \frac{1}{\sqrt{d}}.
\end{align*}
If we consider $\tau$ to be a mild constant, it should be clear that $R_1$ can be close to $1/\sqrt{d}$ and $R_1 \ll 1$.
On the other side, we have for all $\bw\in\cW$,
\begin{align*}
    \mathbf{0} \preceq \nabla^2 f(\bw) = \frac{1}{n} \sum_{i=1}^n \phi_i''(\bw) \cdot \bx_i\bx_i^\top \preceq G_2 \bH .
\end{align*}
Moreover, that the diagonal matrix $\bL_m=\text{diagonal} (\bL)$ satisfies $\bL_m \succeq G_2\bH$ is a sufficient condition for the smoothness assumption (Assumption~\ref{asm:smooth}). This is equivalent to that $\Norm{\bL_m^{-\frac{1}{2}} \bH \bL_m^{-\frac{1}{2}} }_2 \le 1/G_2$. Thus actually $\bL_m$ can be taken as the optimal solution to an optimization problem:
\begin{align*}
    \min_{\bL \in \RR_+^d} \Norm{\bL}_1 \quad  \text{s.t.} 
    \quad
    \Norm{\bL_m^{-\frac{1}{2}} \bH \bL_m^{-\frac{1}{2}} }_2 \le \frac{1}{G_2} ,
\end{align*}
which shares a similar form with the optimal diagonal preconditioner problem in solving linear systems, where an optimal preconditioner for a fixed matrix $\bH$ can result in much faster convergence as pointed out in~\cite{qu2022optimal}. As an example, when $\bH$ is diagonally dominant, we can choose $\bL_m = \text{diagonal}({2G_2 \bH})$ which takes the diagonal entries. Then we have $\Norm{\bL}_\infty=2 G_2 \bH_{1,1}$ and $\Norm{\bL}_1=2 G_2 \tr{\bH}$. In this case, based on the heavy-tailed assumption~(Assumption~\ref{def:exponential_tail}), we have
\begin{align*}
    R_2 = \frac{\phi(\bL)}{\phi(D)} = \frac{\tr{\bH} D_\infty^2}{\bH_{1,1} D_2^2} = \frac{\tr{\bH} }{d \bH_{1,1} } \propto \frac{\sum_{j=1}^d \exp(-\tau j)}{d \exp(-\frac{1}{2}\tau)}
    \le \frac{1}{1 - \exp\left( - \tau  \right)} \cdot \frac{1}{d} .
\end{align*}
When $\tau$ is a mild constant, it should be clear that $R_2$ can be close to $1/d$ and $R_2 \ll 1$.

To conclude, in this concrete example, we address that $R_1$ can be close to $1/\sqrt{d}$, $R_2$ can be close to $1/d$ and both $R_1,R_2 \ll 1$, showing that AdaGrad has better dimensional dependence than SGD and AdaGrad and the larger gap between bias terms than that of variance terms.

\section{Proof Preliminaries}\label{appendix:proof_preliminaries}
\paragraph{Notations.} In the appendix, we define
\begin{align}\label{eq:define_n_t}
    \bn_t \triangleq \frac{1}{M} \sum_{\xi\in \cB_t} \nabla_\bw f(\bw_t;\xi) - \nabla f(\bw_t) = \frac{1}{M} \sum_{\xi\in \cB_t} \bn(\bw_t;\xi)
\end{align}
to note the gradient noise at iteration $t$.
We further use $\nabla f_{t,j}$, $\bg_{t.j}$ and $\bn_{t,j}$ to denote the $j$-th coordinate of $\nabla f(\bw_t)$, $\bg_t$ and $\bn_t$, separately.
\begin{lemma}[Projection]\label{lem:proj}
    Suppose $\cW\subseteq \RR^d$ is a closed convex set and $\bLa\in\mathbb{R}^{d\times d}$ is symmetric and positive definite. Then for $\bw\in \mathbb{R}^d$, $\Bar{\bw}\in \Pi_{C}^{\bLa}[\bw]$ if and only if for all $\bz\in \cW$,
    \begin{align}\label{eq:lem_projection_1}
        \frac{1}{2} \Norm{\bw - \bz}_{\bLa}^2 \geq \frac{1}{2} \Norm{\bw - \Bar{\bw}}_{\bLa}^2
    \end{align}
    or equivalently, for all $\bz\in \cW$,
    \begin{align}\label{eq:lem_projection_2}
        \dotprod{\bz - \Bar{\bw}}{\bLa(\bw - \Bar{\bw})} \le 0.
    \end{align}
\end{lemma}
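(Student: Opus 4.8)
This is the standard first-order characterization of the projection with respect to the inner product $(\bx,\by)\mapsto\dotprod{\bx}{\bLa\by}$, and I would split the argument into the two stated equivalences. For the first equivalence I would simply unwind the definition: $\Bar{\bw}\in\Pi^{\bLa}_{\cW}[\bw]$ means that $\Bar{\bw}\in\cW$ attains $\min_{\bz\in\cW}\tfrac12\Norm{\bw-\bz}_\bLa^2$, which is literally the assertion that $\tfrac12\Norm{\bw-\bz}_\bLa^2\ge\tfrac12\Norm{\bw-\Bar{\bw}}_\bLa^2$ for every $\bz\in\cW$. Since $\bLa$ is symmetric positive definite, $\Norm{\cdot}_\bLa$ is a genuine norm and $\bz\mapsto\tfrac12\Norm{\bw-\bz}_\bLa^2$ is strongly convex, so on the closed convex set $\cW$ the minimizer exists and is unique; this is where the hypotheses on $\cW$ and $\bLa$ enter, but beyond that the first ``iff'' needs nothing more than the definition.

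For the second equivalence I would prove both directions. \emph{(Sufficiency.)} Assume $\dotprod{\bz-\Bar{\bw}}{\bLa(\bw-\Bar{\bw})}\le0$ for all $\bz\in\cW$. Expanding $\bw-\bz=(\bw-\Bar{\bw})+(\Bar{\bw}-\bz)$ in the $\bLa$-norm gives
\[
    \Norm{\bw-\bz}_\bLa^2=\Norm{\bw-\Bar{\bw}}_\bLa^2+2\dotprod{\bw-\Bar{\bw}}{\bLa(\Bar{\bw}-\bz)}+\Norm{\Bar{\bw}-\bz}_\bLa^2 ,
\]
and since the middle term equals $-2\dotprod{\bz-\Bar{\bw}}{\bLa(\bw-\Bar{\bw})}\ge0$ by hypothesis and the last term is nonnegative, we obtain $\Norm{\bw-\bz}_\bLa^2\ge\Norm{\bw-\Bar{\bw}}_\bLa^2$, i.e. \eqref{eq:lem_projection_1}. \emph{(Necessity.)} Conversely, suppose $\Bar{\bw}$ minimizes $\tfrac12\Norm{\bw-\cdot}_\bLa^2$ over $\cW$. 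Fix $\bz\in\cW$; by convexity of $\cW$ the point $\bw_\theta:=\Bar{\bw}+\theta(\bz-\Bar{\bw})$ lies in $\cW$ for all $\theta\in[0,1]$, so the scalar quadratic $g(\theta):=\tfrac12\Norm{\bw-\bw_\theta}_\bLa^2$ satisfies $g(\theta)\ge g(0)$ on $[0,1]$, forcing the right derivative $g'(0)\ge0$. A direct computation gives $g'(0)=-\dotprod{\bz-\Bar{\bw}}{\bLa(\bw-\Bar{\bw})}$, which is exactly \eqref{eq:lem_projection_2}. Chaining the two equivalences by transitivity then completes the proof.

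There is no genuine obstacle here — the lemma is textbook material (see, e.g., \citet{hazan2007logarithmic}). The only point needing a little care, rather than difficulty, is that in the necessity direction one only has the one-sided inequality $g(\theta)\ge g(0)$ for small $\theta\ge0$, so the conclusion must be read off from the \emph{right} derivative $g'(0)$; this is precisely why convexity of $\cW$ (which keeps $\bw_\theta$ feasible) is invoked, whereas positive definiteness of $\bLa$ is used only for the existence and uniqueness statement underlying the first equivalence.
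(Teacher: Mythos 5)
Your proof is correct and the necessity direction is essentially the paper's own argument: the paper applies \eqref{eq:lem_projection_1} at the point $\alpha\bz+(1-\alpha)\Bar{\bw}$, expands the square, and lets $\alpha\to0$, which is exactly your right-derivative computation $g'(0)\ge 0$ written out. You additionally supply the converse implication \eqref{eq:lem_projection_2}$\Rightarrow$\eqref{eq:lem_projection_1} via the expansion of $\Norm{\bw-\bz}_{\bLa}^2$ around $\Bar{\bw}$, which the paper asserts with the word ``equivalently'' but does not actually write down, so your version is the more complete one.
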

\begin{proof}
    Equation~\eqref{eq:lem_projection_1} simply follows the definition. To prove~\eqref{eq:lem_projection_2}, take any $\bz\in \cW$ and $\alpha\in(0,1)$ we know that $\alpha\bz+(1-\alpha)\Bar{\bw}\in \cW$ as $\cW$ is a convex set. Hence by \eqref{eq:lem_projection_1}, we have
    \begin{align*}
        \frac{1}{2} \Norm{\bw - \Bar{\bw}}_{\bLa}^2 \le& 
        \frac{1}{2} \Norm{\bw - [\alpha\bz+(1-\alpha)\Bar{\bw}]}_{\bLa}^2 \\
        \Longleftrightarrow \quad \frac{1}{2} \Norm{\bw - \Bar{\bw}}_{\bLa}^2 \le& 
        \frac{1}{2} \Norm{(\bw - \Bar{\bw}) - \alpha(\bz-\Bar{\bw})}_{\bLa}^2 \\
        \Longleftrightarrow \quad 0 \le& 
        \frac{\alpha^2}{2} \Norm{\bz-\Bar{\bw}}_{\bLa}^2 - \alpha \dotprod{\bz - \Bar{\bw}}{\bLa(\bw - \Bar{\bw})} \\
        \Longleftrightarrow \quad \dotprod{\bz - \Bar{\bw}}{\bLa(\bw - \Bar{\bw})} \le& 
        \frac{\alpha}{2} \Norm{\bz-\Bar{\bw}}_{\bLa}^2.
    \end{align*}
    As $\alpha$ can be arbitrarily close to $0$, we have that~\eqref{eq:lem_projection_2} holds.
\end{proof}

With the definition of \eqref{eq:define_n_t}, we can obtain the following lemma describing the variance reduced by batch size $M$.
\begin{lemma}[Variance reduced by batch size]\label{lem:variance_batch}
    Under Assumption~\ref{asm:unbiased_gradient}, \ref{asm:anisotropic_noise}, it holds that
    \begin{align}\label{eq:lem_variance_batch}
        \EE_t\left[ \bn_{t,j}^2 \right] \le \frac{\bsigma_j^2}{M} .
    \end{align}
\end{lemma}
\begin{proof}
    It holds that
    \begin{align*}
        \EE_t\left[ \bn_{t,j}^2 \right] =& \EE_t\left[ \left( \frac{1}{M} \sum_{\xi \in \mathcal{B}_t} \bn_j(\bw_t;\xi) \right)^2 \right] 
        \overset{\eqref{eq:asm_unbiased_gradient}}{=} \frac{1}{M^2} \sum_{\xi \in \mathcal{B}_t} \EE_t\left[ (\bn_j(\bw_t;\xi))^2 \right] \overset{\eqref{eq:asm_anisotropic_noise}}{\le} \frac{\bsigma_j^2}{M} ,
    \end{align*}
    where the second equality is based on the independence of $\bn(\bw_t;\xi)$.
\end{proof}

\subsection{Properties of Assumption~\ref{asm:generalized_smooth}}\label{appendix:properties_of_smoothness}
The following is the descent lemma based on Assumption~\ref{asm:generalized_smooth}. We basically follow similar proof techniques to \citet{zhang2019gradient,zhang2020improved,crawshaw2022robustness} but obtain results because of the difference in assumption.
\begin{lemma}[Descent lemma]\label{lem:descent_lemma}
    Under Assumption~\ref{asm:generalized_smooth}, for all $\bw,\bw'\in\cW$ such that $\Norm{\bw - \bw'}_{\bL_1} \le \sqrt{d}$, it holds that
    \begin{align}\label{eq:lem_descent_lemma}
        f(\bw') \le f(\bw) + \dotprod{\nabla f(\bw)}{\bw'-\bw} + \frac{1}{2} \sum_{j=1}^d \left( \bL_{0,j} + \bL_{1,j} \partial_j f(\bw) \right) \Abs{\bw' - \bw}_j^2 ,
    \end{align}
    where $\Abs{\bw'-\bw}_j$ denotes the absolute value of the $j$-th coordinate of $\bw' - \bw$.
\end{lemma}
\begin{proof}
    Denote $\bw_u = u\bw' + (1-u)\bw$ with $u\in[0,1]$. Based on the Taylor expansion, we have
    \begin{align*}
        f(\bw') =& f(\bw) + \int_0^1 \dotprod{\nabla f(\bw_u)}{\bw' - \bw} {\rm d} u \\
        =&
        f(\bw) + \dotprod{\nabla f(\bw)}{\bw' - \bw} + \int_0^1 \dotprod{\nabla f(\bw_u) - \nabla f(\bw)}{\bw' - \bw} {\rm d} u \\
        =&
        f(\bw) + \dotprod{\nabla f(\bw)}{\bw' - \bw} + \int_0^1 \sum_{j=1}^d {(\partial_j f(\bw_u) - \partial_j f(\bw)) (\bw' - \bw)_j } {\rm d} u \\
        \le&
        f(\bw) + \dotprod{\nabla f(\bw)}{\bw' - \bw} \\
        &+ \int_0^1 \sqrt{\sum_{j=1}^d \frac{\Abs{\partial_j f(\bw_u) - \partial_j f(\bw)}^2}{\left( \bL_{0,j} + \bL_{1,j} \Abs{\partial_j f(\bw)} \right)}} \cdot \sqrt{\sum_{j=1}^d \left( \bL_{0,j} + \bL_{1,j} \Abs{\partial_j f(\bw)} \right) \Abs{\bw_u - \bw}_j^2} {\rm d} u \\
        \overset{\eqref{eq:asm_smooth}}{\le}&
        f(\bw) + \dotprod{\nabla f(\bw)}{\bw' - \bw} + \int_0^1 \sum_{j=1}^d \left( \bL_{0,j} + \bL_{1,j} \Abs{\partial_j f(\bw)} \right) \Abs{\bw_u - \bw}_j^2 {\rm d} u \\
        =&
        f(\bw) + \dotprod{\nabla f(\bw)}{\bw' - \bw} + \sum_{j=1}^d \left( \bL_{0,j} + \bL_{1,j} \Abs{\partial_j f(\bw)} \right) \Abs{\bw' - \bw}_j^2 \cdot \int_0^1  u {\rm d} u \\
        =&
        f(\bw) + \dotprod{\nabla f(\bw)}{\bw' - \bw} + \frac{1}{2} \sum_{j=1}^d \left( \bL_{0,j} + \bL_{1,j} \Abs{\partial_j f(\bw)} \right) \Abs{\bw' - \bw}_j^2 ,
    \end{align*}
    where we use Cauchy-Schwarz inequality in the first inequality.
\end{proof}

\section{Standard Convergence of SGD }\label{sec:sgd}
\begin{algorithm}[tb]
   \caption{SGD}
   \label{alg:sgd}
\begin{algorithmic}[1]
   \STATE {\bfseries Input:} $\bw_0\in\RR^d$, $\{\eta_t\}_{t=0}^{T-1}\in\RR$, and batch size $M\in \mathbb{N}$
   \FOR{$t=0$ {\bfseries to} $T-1$}
   \STATE Sample mini-batch $\cB_t$ with $\Abs{\cB_t} \equiv M$ uniformly
   \STATE $\bg_t = \frac{1}{M} \sum_{\xi\in\cB_t} \nabla_\bw f(\bw_t;\xi)$
   \STATE 
   \textbf{Option I:}
   $\bw_{t+1} = \Pi^{\bI_d}_\cW(\bw_t - \eta_t\bg_t)$
   \STATE 
   \textbf{Option II:}
   $\bw_{t+1} = \bw_t - \eta_t\bg_t$
   \ENDFOR
   \STATE {\bfseries Output:} $1/T\sum_{t=0}^{T-1}\bw_t$
\end{algorithmic}
\end{algorithm}

\begin{algorithm}[tb]
   \caption{Adagrad-Norm}
   \label{alg:adagrad-norm}
\begin{algorithmic}[1]
   \STATE {\bfseries Input:} $\bw_0\in \RR^d$, $\{\eta_t\}_{t=0}^{T-1}\in\RR$, $\epsilon\in\RR$, and batch size $M\in \mathbb{N}$
   \STATE Initialize $v_{-1}= \epsilon^2$
   \FOR{$t=0$ {\bfseries to} $T-1$}
   \STATE Sample mini-batch $\cB_t$ with $\Abs{\cB_t} \equiv M$  uniformly
   \STATE $\bg_t = \frac{1}{M} \sum_{\xi\in\cB_t} \nabla_\bw f(\bw_t;\xi)$
   \STATE $v_t = v_{t-1} + \Norm{\bg_t}_2^2$
   \STATE $b_t = \sqrt{v_t} $

   \STATE 
   \textbf{Option I: } 
   $\bw_{t+1} = \Pi^{\bLa_t}_\cW(\bw_t - \frac{\eta_t}{b_t} \bg_t)$
   \STATE 
   \textbf{Option II: } 
   $\bw_{t+1} = \bw_t - \frac{\eta_t}{b_t} \bg_t$
   \ENDFOR
   \STATE {\bfseries Output:} $1/T\sum_{t=0}^{T-1}\bw_t$
\end{algorithmic}
\end{algorithm}

\begin{theorem}[Convex convergence of SGD]\label{thm:sgd_convex_upper}
    Under Assumption~\ref{asm:convex_set}, \ref{asm:smooth}, \ref{asm:unbiased_gradient}, \ref{asm:anisotropic_noise}, for the sequence $\{\bw_t\}_{t=1}^{T}$ generated by SGD (Algorithm~\ref{alg:sgd} with option I), if we appropriately take step size $\eta_t = \min\left\{ \frac{1}{\Norm{\bL}_\infty}, \sqrt{ \frac{D_2^2 M}{2\Norm{\bsigma}_2^2 (t+1)} } \right\}$, it holds that for $\Bar{\bw}_T \triangleq 1/T\sum_{t=0}^{T-1} \bw_t$,
    \begin{align*}
        \EE\left[ f(\Bar{\bw}_T) - f(\bw_*) \right] 
        = \cO\left( \frac{D_2 \Norm{\bsigma}_2}{\sqrt{MT}} + \frac{\Norm{\bL}_\infty D_2^2}{T} \right) .
    \end{align*}
\end{theorem}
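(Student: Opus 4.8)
The plan is to run the classical one-step contraction argument for projected SGD in the Euclidean geometry, and to observe that Assumptions~\ref{asm:smooth} and~\ref{asm:anisotropic_noise} enter only through the scalar surrogates $L:=\Norm{\bL}_\infty$ (standard smoothness) and $\EE\Norm{\bn(\bw;\xi)}_2^2\le\Norm{\bsigma}_2^2$; consequently the bound will match the textbook rate for smooth convex SGD with these substitutions. First I would derive the basic recursion: since $\bw_*\in\cW$ and $\Pi^{\bI_d}_\cW$ is nonexpansive in $\Norm{\cdot}_2$,
\begin{align*}
    \Norm{\bw_{t+1}-\bw_*}_2^2
    \;\le\; \Norm{\bw_t-\eta_t\bg_t-\bw_*}_2^2
    \;=\; \Norm{\bw_t-\bw_*}_2^2 - 2\eta_t\dotprod{\bg_t}{\bw_t-\bw_*} + \eta_t^2\Norm{\bg_t}_2^2 .
\end{align*}
Taking $\EE_t[\cdot]$, Assumption~\ref{asm:unbiased_gradient} gives $\EE_t[\bg_t]=\nabla f(\bw_t)$, and convexity gives $\dotprod{\nabla f(\bw_t)}{\bw_t-\bw_*}\ge f(\bw_t)-f(\bw_*)$; averaging over the $M$ independent samples in $\cB_t$ together with Assumption~\ref{asm:anisotropic_noise} yields the variance reduction $\EE_t\Norm{\bg_t-\nabla f(\bw_t)}_2^2\le\Norm{\bsigma}_2^2/M$, so $\EE_t\Norm{\bg_t}_2^2\le\Norm{\nabla f(\bw_t)}_2^2+\Norm{\bsigma}_2^2/M$. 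Finally the self-bounding property of $L$-smooth convex functions (Assumption~\ref{asm:smooth} with $L=\Norm{\bL}_\infty$, using $f^*=f(\bw_*)$) gives $\Norm{\nabla f(\bw_t)}_2^2\le 2L\bigl(f(\bw_t)-f(\bw_*)\bigr)$.

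Combining these and using $\eta_tL\le\tfrac12$ (the stated step size with the harmless constant $\tfrac12$, which only affects hidden factors) turns the recursion into
\begin{align*}
    \eta_t\,\EE\bigl[f(\bw_t)-f(\bw_*)\bigr] \;\le\; \EE\Norm{\bw_t-\bw_*}_2^2 - \EE\Norm{\bw_{t+1}-\bw_*}_2^2 + \eta_t^2\,\frac{\Norm{\bsigma}_2^2}{M} .
\end{align*}
Dividing by $\eta_t$, summing over $t=0,\dots,T-1$, and applying Jensen's inequality to $\Bar{\bw}_T$ reduces the theorem to estimating two sums. For $\sum_t\eta_t^{-1}\bigl(\EE\Norm{\bw_t-\bw_*}_2^2-\EE\Norm{\bw_{t+1}-\bw_*}_2^2\bigr)$ I would use Abel summation, the monotonicity of $t\mapsto\eta_t^{-1}$, and the diameter bound $\EE\Norm{\bw_t-\bw_*}_2^2\le D_2^2$; this telescopes to $D_2^2/\eta_{T-1}$. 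For $(\Norm{\bsigma}_2^2/M)\sum_t\eta_t$ I would plug in the decaying branch of the step size and use $\sum_{t=0}^{T-1}(t+1)^{-1/2}\le 2\sqrt T$, giving $\cO(D_2\Norm{\bsigma}_2\sqrt{T/M})$. Substituting $\eta_{T-1}^{-1}\le\Norm{\bL}_\infty+\sqrt{2}\,\Norm{\bsigma}_2\sqrt T/(D_2\sqrt M)$ and dividing by $T$ then yields exactly $\cO\bigl(D_2\Norm{\bsigma}_2/\sqrt{MT}+\Norm{\bL}_\infty D_2^2/T\bigr)$.

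I expect no genuine obstacle — this is the standard smooth–convex SGD analysis — but two points need care. First, the self-bounding step implicitly evaluates~\eqref{eq:asm_smooth} at the unconstrained point $\bw_t-\tfrac1L\nabla f(\bw_t)$, which is legitimate if $f$ is $\bL$-smooth on $\RR^d$ (or one replaces this step by an argument on the feasible projected-gradient direction); this should be made explicit. Second, and this is where the only real bookkeeping lies, the summation must accommodate the time-varying $\min$ step size: one has to carry out the Abel/telescoping argument with a nonconstant $\eta_t$ and check that the crossover between the constant branch $1/\Norm{\bL}_\infty$ and the decaying branch $\sqrt{D_2^2M/(2\Norm{\bsigma}_2^2(t+1))}$ produces precisely the two advertised terms and nothing worse.
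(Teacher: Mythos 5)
Your proposal is correct and follows essentially the same route as the paper: projection nonexpansiveness, the one-step recursion, convexity, batch variance reduction, Abel/telescoping with the diameter bound, and the same step-size substitution. The only (cosmetic) difference is that you absorb the $\eta_t^2\Norm{\nabla f(\bw_t)}_2^2$ term via the self-bounding inequality $\Norm{\nabla f(\bw_t)}_2^2\le 2\Norm{\bL}_\infty(f(\bw_t)-f(\bw_*))$ with $\eta_t\le 1/(2\Norm{\bL}_\infty)$, whereas the paper uses co-coercivity $\dotprod{\nabla f(\bw_t)}{\bw_t-\bw_*}\ge\Norm{\nabla f(\bw_t)}_2^2/\Norm{\bL}_\infty$ with $\eta_t\le 1/\Norm{\bL}_\infty$ — both steps carry the same implicit caveat about smoothness/optimality at $\bw_*$ that you already flag, and both yield the stated rate up to constants.
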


\begin{theorem}[Nonconvex convergence of SGD]\label{thm:sgd_nonconvex_upper}
    Under Assumption~\ref{asm:f_lower_bound}, \ref{asm:smooth}, \ref{asm:unbiased_gradient}, \ref{asm:anisotropic_noise}, for the sequence $\{\bw_t\}_{t=1}^{T}$ generated by SGD (Algorithm~\ref{alg:sgd} with option II), if we appropriately take step size $\eta_t \equiv \eta = \left\{ \frac{1}{2\Norm{\bL}_\infty}, \sqrt{\frac{2(f(\bw_0) - f^*)M}{\Norm{\bL}_\infty \Norm{\bsigma}_2^2 T}} \right\}$, it holds that
    \begin{align*}
        \frac{1}{T} \sum_{t=0}^{T-1} \EE\left[ \Norm{\nabla f(\bw_t)}_2^2 \right] = \cO\left( \frac{\sqrt{\Norm{\bL}_\infty (f(\bw_0) - f^*)} \Norm{\bsigma}_2 }{\sqrt{MT}} + \frac{\Norm{\bL}_\infty(f(\bw_0) - f^*)}{T} \right) .
    \end{align*}
\end{theorem}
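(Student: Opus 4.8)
The plan is to run the textbook ``descent lemma'' argument for nonconvex SGD, but carried out entirely in the anisotropic geometry $\Norm{\cdot}_\bL$, and only at the very end relax to the isotropic quantities $\Norm{\bL}_\infty$ and $\Norm{\bsigma}_2$ that appear in the statement. First I would apply Assumption~\ref{asm:smooth} to the pair $(\bw_t,\bw_{t+1})$ with $\bw_{t+1} = \bw_t - \eta\bg_t$ (option II) to obtain
\begin{align*}
    f(\bw_{t+1}) \le f(\bw_t) - \eta\dotprod{\nabla f(\bw_t)}{\bg_t} + \frac{\eta^2}{2}\Norm{\bg_t}_\bL^2 .
\end{align*}
Taking the conditional expectation $\EE_t[\cdot]$ and using Assumption~\ref{asm:unbiased_gradient} (so $\EE_t[\bg_t] = \nabla f(\bw_t)$) turns the inner product into $-\eta\Norm{\nabla f(\bw_t)}_2^2$, while the bias--variance split $\EE_t[\Norm{\bg_t}_\bL^2] = \Norm{\nabla f(\bw_t)}_\bL^2 + \EE_t[\Norm{\bg_t - \nabla f(\bw_t)}_\bL^2]$ isolates the stochastic part.

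Second, I would control the mini-batch noise. Since the $M$ samples in $\cB_t$ are drawn independently (Assumption~\ref{asm:unbiased_gradient}), the coordinatewise variance of $\bg_t$ is reduced by a factor $M$, so Assumption~\ref{asm:anisotropic_noise} gives $\EE_t[(\bg_t - \nabla f(\bw_t))_j^2] \le \sigma_j^2/M$ for every $j$, hence $\EE_t[\Norm{\bg_t - \nabla f(\bw_t)}_\bL^2] = \sum_j L_j \EE_t[(\bg_t - \nabla f(\bw_t))_j^2] \le \Norm{\bsigma}_\bL^2/M \le \Norm{\bL}_\infty\Norm{\bsigma}_2^2/M$. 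Likewise I would bound the signal term by $\Norm{\nabla f(\bw_t)}_\bL^2 \le \Norm{\bL}_\infty\Norm{\nabla f(\bw_t)}_2^2$. Plugging these in and using the step-size restriction $\eta \le 1/(2\Norm{\bL}_\infty)$ (so that $\tfrac{\eta^2}{2}\Norm{\bL}_\infty \le \tfrac{\eta}{4}$) lets me absorb the quadratic-in-$\eta$ gradient contribution into the $-\eta\Norm{\nabla f(\bw_t)}_2^2$ term, leaving an inequality of the shape $\EE_t[f(\bw_{t+1})] \le f(\bw_t) - \tfrac{3\eta}{4}\Norm{\nabla f(\bw_t)}_2^2 + \tfrac{\eta^2\Norm{\bL}_\infty\Norm{\bsigma}_2^2}{2M}$.

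Third, I would take total expectation, telescope over $t = 0,\dots,T-1$, and use Assumption~\ref{asm:f_lower_bound} ($\EE[f(\bw_T)] \ge f^*$) to get
\begin{align*}
    \frac{1}{T}\sum_{t=0}^{T-1}\EE\left[ \Norm{\nabla f(\bw_t)}_2^2 \right] \le \frac{4(f(\bw_0) - f^*)}{\eta T} + \frac{2\eta\Norm{\bL}_\infty\Norm{\bsigma}_2^2}{M} .
\end{align*}
Finally, substituting $\eta = \min\left\{ \frac{1}{2\Norm{\bL}_\infty}, \sqrt{\frac{2(f(\bw_0)-f^*)M}{\Norm{\bL}_\infty\Norm{\bsigma}_2^2 T}} \right\}$ and using $1/\min\{a,b\} \le 1/a + 1/b$ splits the first term into a $\Norm{\bL}_\infty(f(\bw_0)-f^*)/T$ piece and a $\sqrt{\cdot/(MT)}$ piece, while $\eta$ being at most the second candidate makes the second term at most the same $\sqrt{\cdot/(MT)}$ piece; collecting them yields exactly the claimed rate.

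There is no serious obstacle here — this is the standard nonconvex SGD analysis, and the only care needed is bookkeeping: applying the variance reduction factor $1/M$ coordinatewise \emph{before} passing to $\Norm{\bsigma}_2$, and performing the descent step in the $\Norm{\cdot}_\bL$ geometry rather than the Euclidean one. It is worth emphasizing that the argument deliberately discards the anisotropic structure only at the last moment, via $\Norm{\bsigma}_\bL^2 \le \Norm{\bL}_\infty\Norm{\bsigma}_2^2$ and $\Norm{\nabla f}_\bL^2 \le \Norm{\bL}_\infty\Norm{\nabla f}_2^2$; this lossy relaxation is precisely what a coordinatewise-adaptive method such as Adagrad is designed to avoid, which sets up the comparison made in the next section.
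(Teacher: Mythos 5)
Your proposal is correct and follows essentially the same argument as the paper's proof: descent lemma, unbiasedness to extract $-\eta\Norm{\nabla f(\bw_t)}_2^2$, bias--variance split of $\EE_t[\Norm{\bg_t}^2]$ with the $1/M$ variance reduction, absorption of the quadratic gradient term via the step-size cap, and telescoping. The only (cosmetic) difference is that you keep the $\Norm{\cdot}_\bL$ geometry until the bias--variance stage before relaxing to $\Norm{\bL}_\infty$ and $\Norm{\bsigma}_2$, whereas the paper passes to the implied standard $\Norm{\bL}_\infty$-smoothness immediately; both yield the same bound.
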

Theorem~\ref{thm:sgd_convex_upper} and \ref{thm:sgd_nonconvex_upper} are standard results of the convergence of SGD under $\Norm{\bL}_\infty$-smooth settings. Similar results have been extensively studied (e.g. \citep{orabona2019modern,garrigos2023handbook,ghadimi2013stochastic,bernstein2018signsgd}). We also include the proof below for completeness.

\begin{proof}[Proof of Theorem~\ref{thm:sgd_convex_upper}]
    For $\bw_*\in\cW$, it holds that
    \begin{align*}
        \EE_t\left[\Norm{\bw_{t+1} - \bw_*}_2^2\right] \overset{\eqref{eq:lem_projection_1}}{\le}& \EE_t\left[\Norm{\bw_t - \eta_t\bg_t - \bw_*}_2^2\right] \\
        \le&
        \Norm{\bw_{t} - \bw_*}_2^2 - 2\eta_t\EE_t\left[\dotprod{\bg_t}{\bw_t - \bw_*}\right] + \eta_t^2 \EE_t\left[\Norm{\bg_t}_2^2\right] \\
        =&
        \Norm{\bw_{t} - \bw_*}_2^2 - 2\eta_t\dotprod{\nabla f(\bw_t)}{\bw_t - \bw_*} + \eta_t^2 \Norm{\nabla f(\bw_t)}_2^2 + \eta_t^2\EE_t\left[\Norm{\bn_t}_2^2\right],
    \end{align*}
    where $\bn_t = \bg_t - \nabla f(\bw_t)$.
    With convexity and standard smoothness of $f(\cdot)$, we have
    \begin{align*}
        \dotprod{\nabla f(\bw_t)}{\bw_t - \bw_*} \overset{}{\ge} f(\bw_t) - f(\bw_*) \quad \text{and} \quad \dotprod{\nabla f(\bw_t)}{\bw_t - \bw_*} \overset{}{\ge} \frac{1}{\Norm{\bL}_\infty}\Norm{\nabla f(\bw_t)}_2^2.
    \end{align*}
    Then after rearranging, it holds that
    \begin{align*}
        (2 - \eta_t \Norm{\bL}_\infty)(f(\bw_t) - f(\bw_*)) \le \frac{1}{\eta_t}\Norm{\bw_{t} - \bw_*}_2^2 - \frac{1}{\eta_t}\EE_t\left[\Norm{\bw_{t+1} - \bw_*}_2^2\right] + \eta_t \EE_t\left[\Norm{\bn_t}_2^2\right].
    \end{align*}
    If we take summation and full expectation with $\eta_t \le 1/L$ and $\eta_t\le \eta_{t+1}$ for $t=1,\cdots,T$, it holds that
    \begin{align*}
        \sum_{t=0}^{T-1} \EE\left[ f(\bw_t) - f(\bw_*) \right] \le& \sum_{t=0}^{T-1} \EE\left[\frac{1}{\eta_t} \Norm{\bw_{t} - \bw_*}_2^2 - \frac{1}{\eta_t} \Norm{\bw_{t+1} - \bw_*}_2^2\right] + \sum_{t=0}^{T-1} \eta_t \EE\left[\Norm{\bn_t}_2^2\right] \\
        \overset{\eqref{eq:lem_variance_batch}}{\le}&
        \sum_{t=0}^{T-1} \EE\left[\frac{1}{\eta_t} \Norm{\bw_{t} - \bw_*}_2^2 - \frac{1}{\eta_t} \Norm{\bw_{t+1} - \bw_*}_2^2\right] + \sum_{t=0}^{T-1} \frac{\eta_t \Norm{\bsigma}_2^2}{M} \\
        =&
        \frac{1}{\eta_0} \Norm{\bw_{0} - \bw_*}_2^2 - \frac{1}{\eta_{T-1}} \EE\left[ \Norm{\bw_{T} - \bw_*}_2^2 \right] + \sum_{t=1}^{T-1} \left( \frac{1}{\eta_{t}} - \frac{1}{\eta_{t-1}} \right) \EE\left[ \Norm{\bw_t - \bw_*}_2^2 \right] \\
        &+ \sum_{t=0}^{T-1} \frac{\eta_t \Norm{\bsigma}_2^2}{M} \\
        \overset{\eqref{eq:asm_convex_set}}{\le}&
        \frac{D_2^2}{\eta_0} + \sum_{t=1}^{T-1} \left( \frac{1}{\eta_{t}} - \frac{1}{\eta_{t-1}} \right) D_2^2 + \sum_{t=0}^{T-1} \frac{\eta_t \Norm{\bsigma}_2^2}{M} \\
        =&
        \frac{D_2^2}{\eta_{T-1}} + \sum_{t=0}^{T-1} \frac{\eta_t \Norm{\bsigma}_2^2}{M}.
    \end{align*}
    Therefore, by having both sides of the inequality divided by $T$ and making use of the convexity, we conclude the proof that
    \begin{align*}
        \EE\left[ f(\Bar{\bw}_T) - f(\bw_*) \right] \overset{\text{Asm.}~\ref{asm:convex_set}}{\le} \frac{1}{T} \sum_{t=0}^{T-1} \EE\left[ f(\bw_t) - f(\bw_*) \right]
        \le \frac{D_2^2}{\eta_{T-1}T} + \frac{\Norm{\bsigma}_2^2}{MT}\sum_{t=0}^{T-1} \eta_t ,
    \end{align*}
    where $\Bar{\bw}_T = 1/T\sum_{t=0}^{T-1} \bw_t$.
    Then by taking 
    \begin{align*}
        \eta_t = \min\left\{ \frac{1}{\Norm{\bL}_\infty}, \sqrt{p \frac{D^2_2M}{\sigma^2_2(t+1)}} \right\},
    \end{align*}
    where $p$ is a constant,
    we can obtain that
    \begin{align*}
        \EE\left[ f(\bar{\bw}_T) - f(\bw_*) \right] \le& \frac{D_2^2}{\eta_{T-1}T} + \frac{\Norm{\bsigma}_2^2}{MT} \sum_{t=0}^{t-1} \eta_t \\
        \le&
        \frac{\Norm{\bL}_\infty D_2^2}{T} + \sqrt{\frac{D_2^2\Norm{\bsigma}_2^2}{MT}} \cdot \sqrt{\frac{1}{p}} + \sqrt{\frac{D_2^2\Norm{\bsigma}_2^2}{MT}} \cdot 2\sqrt{p},
    \end{align*}
    where the second inequality holds as
    \begin{align*}
        \sum_{t=0}^{T-1} \frac{1}{\sqrt{t+1}} \le  2\sum_{t=0}^{T-1} \frac{1}{\sqrt{t+1} + \sqrt{t}} = 2\sum_{t=0}^{T-1} \frac{\sqrt{t+1} - \sqrt{t}}{t+1-t} = 2\sum_{t=0}^{T-1} \sqrt{t+1} - \sqrt{t} = 2\sqrt{T}.
    \end{align*}
    Thus by taking $p = \frac{1}{2}$, we finish the proof.
\end{proof}

\begin{proof}[Proof of Theorem~\ref{thm:sgd_nonconvex_upper}]
    Based on the smoothness condition, it holds that
    \begin{align*}
        f(\bw_{t+1}) \le& f(\bw_t) - \dotprod{\nabla f(\bw_t)}{\bw_{t+1} - \bw_t} + \frac{\Norm{\bL}_\infty}{2} \Norm{\bw_{t+1} - \bw_t}_2^2 \\
        =&
        f(\bw_t) - \eta_t \dotprod{\nabla f(\bw_t)}{\bg_t} + \frac{\Norm{\bL}_\infty \eta_t^2}{2} \Norm{\bg_t}_2^2 \\
        \le&
        f(\bw_t) - \eta_t \dotprod{\nabla f(\bw_t)}{\bg_t} + \Norm{\bL}_\infty \eta_t^2 \Norm{\nabla f(\bw_t)}_2^2 + \Norm{\bL}_\infty \eta_t^2 \Norm{\bn_t}_2^2.
    \end{align*}
    By taking expectation, we have
    \begin{align*}
        \EE\left[ f(\bw_{t+1}) \right] \le& 
        \EE\left[ f(\bw_t) \right] - \eta_t \EE\left[ \dotprod{\nabla f(\bw_t)}{\bg_t} \right] + \frac{\Norm{\bL}_\infty \eta_t^2}{2} \EE\left[ \Norm{\bg_t}_2^2 \right] \\
        =&
        \EE\left[ f(\bw_t) \right] - \eta_t \EE\left[ \Norm{ \nabla f(\bw_t)}_2^2 \right] + \frac{\Norm{\bL}_\infty \eta_t^2}{2} \EE\left[ \Norm{\nabla f(\bw_t)}_2^2 \right] + \frac{\Norm{\bL}_\infty \eta_t^2}{2} \EE\left[ \Norm{\bn_t}_2^2 \right] \\
        \overset{\eqref{eq:lem_variance_batch}}{\le}&
        \EE\left[ f(\bw_t) \right] - \eta_t \EE\left[ \Norm{ \nabla f(\bw_t)}_2^2 \right] + \frac{\Norm{\bL}_\infty \eta_t^2}{2} \EE\left[ \Norm{\nabla f(\bw_t)}_2^2 \right] + \frac{\Norm{\bL}_\infty \eta_t^2 \Norm{\bsigma}_2^2}{2M} \\
        \le&
        \EE\left[ f(\bw_t) \right] - \frac{\eta_t}{2} \EE\left[ \Norm{ \nabla f(\bw_t)}_2^2 \right] + \frac{\Norm{\bL}_\infty \eta_t^2 \Norm{\bsigma}_2^2}{2M} ,
    \end{align*}
    where $\bn_t = \nabla f(\bw_t) - \bg_t$ and the last inequality holds as we take $\eta_t \le 1/\Norm{\bL}_\infty$. As $\eta_t \equiv \eta$, we have after rearranging that
    \begin{align*}
        \frac{1}{T}\sum_{t=0}^{T-1} \EE\left[ \Norm{ \nabla f(\bw_t)}_2^2 \right] \le& 
        \frac{2 \EE[f(\bw_0) - f(\bw_T)]}{\eta T} + \frac{\Norm{\bL}_\infty \eta \Norm{\bsigma}_2^2}{M} \\
        \le&
        \frac{2 (f(\bw_0) - f^*)}{\eta T} + \frac{\Norm{\bL}_\infty \eta \Norm{\bsigma}_2^2}{M} .
    \end{align*}
    Then by taking $\eta = \min\left\{ \frac{1}{\Norm{\bL}_\infty}, \sqrt{\frac{(f(\bw_0) - f^*) M}{\Norm{\bL}_\infty \Norm{\bsigma}_2^2 T}} \right\}$, we finish the proof.
\end{proof}

\section{Proof of Theorem~\ref{thm:smooth_convex_adagrad}}\label{appendix:convex_adagrad}
To prove Theorem~\ref{thm:smooth_convex_adagrad}, we first look at the standard AdaGrad convergence under the non-smooth convex stochastic optimization setting. 
\begin{theorem}[Convergence for convex nonsmooth AdaGrad]\label{thm:convex_adagrad}
    Under Assumption~\ref{asm:convex_set}, \ref{asm:unbiased_gradient}, for the sequence $\{\bw_t\}_{t=1}^T$ generated by Algorithm~\ref{alg:adagrad} with constant step size $\eta_t\equiv\eta$,
    it holds that
    \begin{equation}\label{eq:thm_convex_adagrad}
    \begin{aligned}
        \sum_{t=0}^{T-1} \EE[f(\bw_t)] - f(\bw_*) \le& \frac{1}{\sqrt{2}}\left( \eta + \frac{D_\infty^2}{\eta} \right) 
        \sum_{j=1}^d \EE\left[  \sqrt{\sum_{t=0}^{T-1} \nabla f_{t,j}^2  } + \sqrt{\sum_{t=0}^{T-1}\bn_{t,j}^2} \right] + \frac{\epsilon D_2^2}{2\eta}
    \end{aligned}
    \end{equation}
\end{theorem}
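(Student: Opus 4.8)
The plan is to run the classical Adagrad argument of \citet{duchi2011adaptive}, viewing Algorithm~\ref{alg:adagrad} (Option~I) as mirror descent with the time-varying quadratic regularizer induced by $\bLa_t$, while carefully tracking the stabilizer $\epsilon\bI_d$ and keeping the deterministic-gradient and noise contributions separate.

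First I would fix an optimum $\bw_*\in\cW$ and apply the projection inequality \eqref{eq:lem_projection_1} of Lemma~\ref{lem:proj} with $\bLa=\bLa_t$ to $\bw_{t+1}=\Pi^{\bLa_t}_\cW(\bw_t-\eta\bLa_t^{-1}\bg_t)$, which gives $\Norm{\bw_{t+1}-\bw_*}_{\bLa_t}^2\le\Norm{\bw_t-\eta\bLa_t^{-1}\bg_t-\bw_*}_{\bLa_t}^2$. Expanding the square, using $(\bLa_t^{-1}\bg_t)^\top\bLa_t(\bLa_t^{-1}\bg_t)=\bg_t^\top\bLa_t^{-1}\bg_t$, and rearranging yields the one-step bound
\[
\dotprod{\bg_t}{\bw_t-\bw_*}\le\frac{1}{2\eta}\left(\Norm{\bw_t-\bw_*}_{\bLa_t}^2-\Norm{\bw_{t+1}-\bw_*}_{\bLa_t}^2\right)+\frac{\eta}{2}\,\bg_t^\top\bLa_t^{-1}\bg_t .
\]
Summing over $t=0,\dots,T-1$, the bracketed sum fails to telescope because $\bLa_t$ changes, so I would regroup it as $\Norm{\bw_0-\bw_*}_{\bLa_0}^2-\Norm{\bw_T-\bw_*}_{\bLa_{T-1}}^2+\sum_{t=1}^{T-1}\Norm{\bw_t-\bw_*}_{\bLa_t-\bLa_{t-1}}^2$, drop the last (nonpositive) term, and exploit that $\bLa_t-\bLa_{t-1}=\mathrm{diag}(\sqrt{\bv_t}-\sqrt{\bv_{t-1}})$ is diagonal and coordinatewise nonnegative (since $\bv_t=\bv_{t-1}+\bg_t\odot\bg_t$ has no smaller coordinate than $\bv_{t-1}$ and $\epsilon$ cancels in the difference), together with $\Norm{\bw_t-\bw_*}_\infty\le D_\infty$ from Assumption~\ref{asm:convex_set}. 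This bounds each $\Norm{\bw_t-\bw_*}_{\bLa_t-\bLa_{t-1}}^2$ by $D_\infty^2\sum_j(\sqrt{\bv_{t,j}}-\sqrt{\bv_{t-1,j}})$, while at $t=0$ the diagonal part is $\le D_\infty^2\sum_j\sqrt{\bv_{0,j}}$ and the stabilizer part is $\epsilon\Norm{\bw_0-\bw_*}_2^2\le\epsilon D_2^2$; the $j$-sums telescope, leaving the clean expression $\epsilon D_2^2+D_\infty^2\sum_{j=1}^d\sqrt{\bv_{T-1,j}}$. This is precisely where the $\epsilon D_2^2/(2\eta)$ term of the statement comes from, and this $\epsilon$-bookkeeping is what prevents it from degrading into a dimension-dependent $d\epsilon D_\infty^2/(2\eta)$.

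Next I would bound $\sum_{t=0}^{T-1}\bg_t^\top\bLa_t^{-1}\bg_t=\sum_{j=1}^d\sum_{t=0}^{T-1}\frac{\bg_{t,j}^2}{\epsilon+\sqrt{\bv_{t,j}}}\le\sum_{j=1}^d\sum_{t=0}^{T-1}\frac{\bg_{t,j}^2}{\sqrt{\bv_{t,j}}}\le2\sum_{j=1}^d\sqrt{\bv_{T-1,j}}$, using the elementary scalar fact $\sum_t\frac{a_t}{\sqrt{a_0+\dots+a_t}}\le2\sqrt{a_0+\dots+a_{T-1}}$ (which follows from $a_t/\sqrt{b_t}\le2(\sqrt{b_t}-\sqrt{b_{t-1}})$). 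Combining the two estimates gives $\sum_{t=0}^{T-1}\dotprod{\bg_t}{\bw_t-\bw_*}\le\frac{\epsilon D_2^2}{2\eta}+\left(\frac{D_\infty^2}{2\eta}+\eta\right)\sum_{j=1}^d\sqrt{\bv_{T-1,j}}$. Since $\bv_{T-1,j}=\sum_{t=0}^{T-1}\bg_{t,j}^2$ and $\bg_{t,j}=\nabla f_{t,j}+\bn_{t,j}$, applying $(a+b)^2\le2a^2+2b^2$ and $\sqrt{x+y}\le\sqrt x+\sqrt y$ yields $\sqrt{\bv_{T-1,j}}\le\sqrt2\bigl(\sqrt{\sum_t\nabla f_{t,j}^2}+\sqrt{\sum_t\bn_{t,j}^2}\bigr)$, which produces the $\tfrac{1}{\sqrt2}(\eta+D_\infty^2/\eta)$ prefactor in \eqref{eq:thm_convex_adagrad}.

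Finally, to convert the left-hand side into a function-value gap I would take total expectation of the summed inequality: by unbiasedness (Assumption~\ref{asm:unbiased_gradient}) and the tower rule, $\EE[\dotprod{\bg_t}{\bw_t-\bw_*}]=\EE[\dotprod{\nabla f(\bw_t)}{\bw_t-\bw_*}]$, and by convexity (Assumption~\ref{asm:convex_set}) this is at least $\EE[f(\bw_t)-f(\bw_*)]$; summing gives the claim. The only step that genuinely needs care is the treatment of the time-varying preconditioner in the second paragraph — making the $\bLa_t$-weighted distances telescope and isolating the stabilizer contribution as exactly $\epsilon D_2^2$ — whereas the gradient-sum bound and the Jensen/convexity step are routine.
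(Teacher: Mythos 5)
Your proposal is correct and follows essentially the same route as the paper's proof: the $\bLa_t$-weighted projection inequality, the regrouping of the non-telescoping distance terms with $\Norm{\bw_t-\bw_*}^2_{\bLa_t-\bLa_{t-1}}\le D_\infty^2\,\tr{\bLa_t-\bLa_{t-1}}$ and the $\epsilon D_2^2$ contribution isolated at $t=0$, the bound $\sum_t \bg_t^\top\bLa_t^{-1}\bg_t\lesssim\sum_j\sqrt{\bv_{T-1,j}}$, and the final $\sqrt{2}$ split of $\bg_{t,j}$ into $\nabla f_{t,j}+\bn_{t,j}$. The only discrepancy is a factor of $2$ in the gradient-sum bound (your coefficient on the $\eta$-term comes out as $\sqrt{2}\eta$ rather than the stated $\eta/\sqrt{2}$); your constant is in fact the careful one, since the paper's step $\sum_j \bg_{t,j}^2/\bla_{t,j}\le\sum_j \bg_{t,j}^2/(\bla_{t,j}+\bla_{t-1,j})$ is reversed as written and needs a factor of $2$ to hold, and this constant is immaterial for the downstream $\cO(\cdot)$ statements.
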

\begin{proof}
    This proof is a stochastic optimization version of the proof of AdaGrad in the online convex learning scheme~\cite{duchi2011adaptive,streeter2010less,zhang_2023_ltbook}. We also include the proof here for completeness.

    First, we give an important result that the gradient norm can be expressed as~$\bLa_t-\bLa_{t-1}$.
    \begin{align}\label{eq:proof_thm_convex_adagrad_norm_trace}
        \Norm{\bg_t}_{\bLa_t^{-1}}^2 &= \sum_{j=1}^d \frac{\bg_{t,j}^2}{\bla_{t,j}} \le \sum_{j=1}^d \frac{\bg_{t,j}^2}{\bla_{t,j}+\bla_{t-1,j}} 
        = \sum_{j=1}^d \frac{\bla_{t,j}^2-\bla_{t-1,j}^2}{\bla_{t,j}+\bla_{t-1,j}} = \tr{\bLa_{t}-\bLa_{t-1}},
    \end{align}
    where we note $\bla_{t,j}$ is the $j$-th coordinate of $\bLa_t$. Then we start the proof.
    \begin{align*}
        \EE_{t}\left[ \Norm{\bw_{t+1} - \bw_*}^2_{\bLa_t} \right] \overset{\eqref{eq:lem_projection_1}}{\le}& \EE\left[ \Norm{\bw_t - \bw_* - \eta_t\bLa_t^{-1}\bg_t}_{\bLa_t}^2 \right] \\
        =&
        \EE_t\left[ \Norm{\bw_t - \bw_*}_{\bLa_t}^2 - 2\eta_t \dotprod{\bg_t}{\bw_t - \bw_*} + \eta_t^2\Norm{\bg_t}_{\bLa_t^{-1}}^2 \right] \\
        =&
        \EE_t\left[ \Norm{\bw_t - \bw_*}_{\bLa_{t}}^2 \right] - 2\eta_t \dotprod{\nabla f(\bw_t)}{\bw_t - \bw_*} + \EE_t\left[ \eta_t^2\Norm{\bg_t}_{\bLa_t^{-1}}^2 \right] \\
        \overset{\eqref{eq:proof_thm_convex_adagrad_norm_trace}}{\le}&
        \EE_t\left[ \Norm{\bw_t - \bw_*}_{\bLa_{t}}^2 \right] - 2\eta_t \dotprod{\nabla f(\bw_t)}{\bw_t - \bw_*} + \EE_t\left[ \eta_t^2\tr{\bLa_{t}-\bLa_{t-1}} \right] \\
        \overset{}{\le}&
        \Norm{\bw_t - \bw_*}_{\bLa_{t-1}}^2 - 2\eta_t \left( f(\bw_t) - f(\bw_*) \right) + \EE_t\left[ \eta_t^2\tr{\bLa_{t}-\bLa_{t-1}} \right] 
        \\ &+ \EE_t\left[ \Norm{\bw_t - \bw_*}_{\bLa_{t}-\bLa_{t-1}}^2 \right] \\
        \overset{\eqref{eq:asm_convex_set}}{\le}&
        \Norm{\bw_t - \bw_*}_{\bLa_{t-1}}^2 - 2\eta_t \left( f(\bw_t) - f(\bw_*) \right) + \EE_t\left[ \eta_t^2\tr{\bLa_{t}-\bLa_{t-1}} \right] 
        \\ &+ \EE_t\left[ D_{\infty}^2\tr{\bLa_{t}-\bLa_{t-1}} \right],
    \end{align*}
    where the third inequality holds as $f(\cdot)$ is convex.
    After taking $\eta_t=\eta$, summing up, taking full expectation and rearrangement, we can obtain that
    \begin{align*}
        \sum_{t=0}^{T-1} \EE\left[f(\bw_t) - f(\bw_*) \right] \le& \frac{1}{2\eta} \sum_{t=0}^{T-1} \EE\left[ \Norm{\bw_t - \bw_*}_{\bLa_{t-1}}^2 - \Norm{\bw_{t+1} - \bw_*}^2_{\bLa_t} \right] 
        \\ &+ \left( \frac{\eta}{2} + \frac{D_\infty^2}{2\eta} \right) \sum_{t=0}^{T-1} \EE\left[ \tr{\bLa_{t}-\bLa_{t-1}} \right] \\
        =&
        \frac{\epsilon \Norm{\bw_0 - \bw_*}_2^2}{2\eta} - \frac{ \Norm{\bw_{T} - \bw_*}^2_{\bLa_{T-1}}}{2\eta} + \left( \frac{\eta}{2} + \frac{D_\infty^2}{2\eta} \right) \EE\left[ \tr{\bLa_{T-1} - \epsilon\bI_d} \right] \\
        \overset{\eqref{eq:asm_convex_set}}{\le}&
        \frac{\epsilon D_2^2}{2\eta} + \left( \frac{\eta}{2} + \frac{D_\infty^2}{2\eta} \right) \sum_{j=1}^d \EE\left[  \sqrt{\sum_{t=0}^{T-1} \bg_{t,j}^2 } \right] \\
        \le&
        \frac{\epsilon D_2^2}{2\eta} + \sqrt{2}\left( \frac{\eta}{2} + \frac{D_\infty^2}{2\eta} \right) \sum_{j=1}^d \EE\left[  \sqrt{\sum_{t=0}^{T-1}\nabla f_{t,j}^2 } + \sqrt{\sum_{t=0}^{T-1}\bn_{t,j}^2} \right],
    \end{align*}
    where we take $\bLa_{-1}= \epsilon \bI_d$ and the last inequality holds as $\sqrt{x+y} \le \sqrt{x} + \sqrt{y}$ for all $x,y\ge0$.
\end{proof}

Then we consider giving a bound on the noise summation term.
\begin{lemma}[variance bound]\label{lem:variance_bound}
    Under Assumption~\ref{asm:unbiased_gradient} and \ref{asm:anisotropic_noise}, it holds that
    \begin{align*}
        \frac{1}{T}\sum_{j=1}^d \EE\left[ \sqrt{\sum_{t=0}^{T-1}\bn_{t,j}^2} \right] \le \frac{\Norm{\bsigma}_1}{\sqrt{MT}}.
    \end{align*}
\end{lemma}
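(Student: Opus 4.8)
The plan is to control $\sum_{j=1}^d \EE[\sqrt{\sum_{t=0}^{T-1}\bn_{t,j}^2}]$ coordinate by coordinate, pushing the expectation inside the square root via Jensen's inequality and then invoking Lemma~\ref{lem:variance_batch}. Fix a coordinate $j\in[d]$. Since $x\mapsto\sqrt{x}$ is concave on $[0,\infty)$, Jensen's inequality gives
\begin{align*}
    \EE\left[\sqrt{\sum_{t=0}^{T-1}\bn_{t,j}^2}\right] \le \sqrt{\EE\left[\sum_{t=0}^{T-1}\bn_{t,j}^2\right]} = \sqrt{\sum_{t=0}^{T-1}\EE\left[\bn_{t,j}^2\right]}.
\end{align*}
By Lemma~\ref{lem:variance_batch}, each term $\EE[\bn_{t,j}^2]\le \sigma_j^2/M$, so the sum over $t$ is at most $T\sigma_j^2/M$, and hence $\EE[\sqrt{\sum_{t=0}^{T-1}\bn_{t,j}^2}] \le \sigma_j\sqrt{T/M} = \sigma_j\sqrt{T}/\sqrt{M}$.

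Summing this bound over $j=1,\dots,d$ yields $\sum_{j=1}^d \EE[\sqrt{\sum_{t=0}^{T-1}\bn_{t,j}^2}] \le \sqrt{T/M}\sum_{j=1}^d\sigma_j = \sqrt{T/M}\,\Norm{\bsigma}_1$. Dividing both sides by $T$ gives exactly $\frac{1}{T}\sum_{j=1}^d \EE[\sqrt{\sum_{t=0}^{T-1}\bn_{t,j}^2}] \le \Norm{\bsigma}_1/\sqrt{MT}$, which is the claim.

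There is essentially no obstacle here: the only subtlety is making sure Jensen is applied to the right random variable (the inner sum $\sum_t \bn_{t,j}^2$, which is nonnegative), and that the bound from Lemma~\ref{lem:variance_batch} is stated as a bound on the total (not conditional) expectation, so that summing over $t$ is immediate and no martingale argument is needed. One could also note that the $\Norm{\cdot}_1$ on the right-hand side arises simply because we sum the per-coordinate bounds rather than bounding a norm of the full noise vector — this is where the anisotropic structure enters, and it is exactly why the Adagrad analysis produces $\Norm{\bsigma}_1$ in place of the $\Norm{\bsigma}_2$ appearing in the SGD bounds.
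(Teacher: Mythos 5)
Your proof is correct and follows essentially the same route as the paper: Jensen's inequality to pull the expectation inside the square root, the per-coordinate variance bound $\EE[\bn_{t,j}^2]\le\sigma_j^2/M$ from Lemma~\ref{lem:variance_batch}, and summation over coordinates to produce $\Norm{\bsigma}_1$. If anything, your version is marginally cleaner, since you apply Jensen once to the total expectation rather than first inserting the conditional expectations $\EE_t[\bn_{t,j}^2]$ inside the square root as the paper does.
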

\begin{proof}
    It holds that
    \begin{align*}
        & \sum_{j=1}^d \EE\left[ \sqrt{\sum_{t=0}^{T-1}\bn_{t,j}^2} \right] \le \sum_{j=1}^d \EE\left[ \sqrt{\sum_{t=0}^{T-1} \EE_t\left[\bn_{t,j}^2\right]} \right]
        \overset{\eqref{eq:lem_variance_batch}}{\le} \sum_{j=1}^d \sqrt{\sum_{t=0}^{T-1} \frac{\bsigma_j^2}{M} }
        = \frac{\sqrt{T}\Norm{\bsigma}_1}{\sqrt{M}},
    \end{align*}
    where the first inequality holds as Jensen's inequality. 
\end{proof}
Then we are ready to prove Theorem~\ref{thm:smooth_convex_adagrad}.

\begin{proof}[Proof of Theorem~\ref{thm:smooth_convex_adagrad}]
    From Theorem~\ref{thm:convex_adagrad}, we can obtain that
    \begin{align*}
        \sum_{t=0}^{T-1} \EE[f(\bw_t)] - f(\bw_*) \overset{\eqref{eq:thm_convex_adagrad}}{\le}& \frac{1}{\sqrt{2}}\left( \eta + \frac{D_\infty^2}{\eta} \right) 
        \sum_{j=1}^d \EE\left[  \sqrt{\sum_{t=0}^{T-1}\nabla f_{t,j}^2 } + \sqrt{\sum_{t=0}^{T-1}\bn_{t,j}^2} \right] + \frac{\epsilon D_2^2}{2\eta},
    \end{align*}
    and in Lemma~\ref{lem:variance_bound} we bound the noise term. Then we consider the bias term.
    It holds that
    \begin{align*}
        & \sum_{j=1}^d \EE\left[ \sqrt{\sum_{t=0}^{T-1} \nabla f_{t,j}^2} \right] \overset{\eqref{eq:lem_useful_inequality}}{\le} \EE\left[ \sqrt{\Norm{\bL}_1 \sum_{j=1}^d \sum_{t=0}^{T-1} \frac{\nabla f_{t,j}^2}{L_j}} \right] \\
        & \le
        \sqrt{ \Norm{\bL}_1 \sum_{t=0}^{T-1} \EE\left[ \sum_{j=1}^d \frac{\nabla f_{t,j}^2}{L_j} \right]}
        =
        \sqrt{ \Norm{\bL}_1 \sum_{t=0}^{T-1} \EE\left[ \Norm{\nabla f(\bw_t)}_{\bL^{-1}}^2 \right]} ,
    \end{align*}
    where the first inequality uses Lemma~\ref{lem:useful_inequality} and the second inequality holds as the Jensen's inequality. 
    Moreover, if we make use of Assumption~\ref{asm:smooth}, we can obtain that
    \begin{align*}
        \frac{1}{2T}{ \sum_{t=0}^{T-1} \EE\left[ \Norm{\nabla f(\bw_t)}_{\bL^{-1}}^2 \right]} \le \frac{1}{T} \sum_{t=0}^{T-1} \EE\left[ f(\bw_t) - f(\bw_*) \right].
    \end{align*}
    Thus if we denote $A=1/T{ \sum_{t=0}^{T-1} \EE\left[ f(\bw_t) - f(\bw_*) \right]}$, then combining~\eqref{eq:thm_convex_adagrad}, there is a simplified expression that
    \begin{align*}
        A - CC_1\sqrt{A} - CC_0 \le 0,
    \end{align*}
    where from Theorem~\ref{thm:convex_adagrad} and Lemma~\ref{lem:variance_bound},
    \begin{align*}
        C = \frac{1}{\sqrt{2}} \left( \eta + \frac{D_{\infty}^2}{\eta} \right) , \quad C_1 = \sqrt{\frac{2\Norm{\bL}_1}{T}}, \quad C_0 = \frac{\Norm{\bsigma}_1}{\sqrt{MT}} + \frac{\epsilon D_2^2}{2\eta T C} .
    \end{align*}
    Then we can solve this inequality by conducting simple derivation that
    \begin{align*}
        \sqrt{A} & \le \frac{1}{2}\left[ CC_1 + \sqrt{C^2C_1^2 + 4CC_0} \right] \\
        \Longrightarrow 
        A & \le \frac{1}{2}\left[ C^2C_1^2 + \left( C^2C_1^2 + 4CC_0 \right) \right] \\
        & =
        C^2C_1^2 + 2CC_0.
    \end{align*}
    If we replace $C,C_0$ and $C_1$ by their corresponding value, we can obtain that
    \begin{align*}
        \frac{1}{T} \sum_{t=0}^{T-1} \EE\left[ f(\bw_t) - f(\bw_*) \right] \le
        \frac{1}{2}\left( \eta + \frac{D_{\infty}^2}{\eta} \right)^2\frac{2\Norm{\bL}_1}{T} + \frac{1}{\sqrt{2}}\left( \eta + \frac{D_{\infty}^2}{\eta} \right) \frac{\Norm{\bsigma}_1}{\sqrt{MT}} + \frac{\epsilon D_2^2}{\eta T} 
    \end{align*}
    If we further take the optimal step size $\eta = D_{\infty}$ based on this bound, we can obtain that
    \begin{align*}
        \frac{1}{T} \sum_{t=0}^{T-1} \EE\left[ f(\bw_t) - f(\bw_*) \right] \le
        \frac{4\Norm{\bL}_1 D_{\infty}^2}{T} + \frac{\sqrt{2}D_{\infty}\Norm{\bsigma}_1}{\sqrt{MT}}
        + \frac{\epsilon D_2^2}{D_\infty T},
    \end{align*}
    which concludes the proof.
\end{proof}

\section{Proof of Theorem~\ref{thm:adagrad_generalized_nonconvex}}\label{appendix:proof_nonconvex}
In this section, we prove the convergence of AdaGrad in the nonconvex generalized smooth setting. Note that by setting $\bL_1 = 0$, the proof below can be directly applied to prove Theorem~\ref{thm:adagrad_nonconvex}.
Let us first give a brief overview. Based on Assumption~\ref{asm:smooth} and Lemma~\ref{lem:descent_lemma}, when we set $\eta_t \equiv \eta \le 1/\Norm{\bL_1}_\infty$, it holds that $\Norm{\bw_{t+1} - \bw_t}_{\bL_1} \le \sqrt{d}$ and
\begin{align*}
    f(\bw_{t+1}) \overset{\eqref{eq:lem_descent_lemma}}{\le}& f(\bw_t) + \dotprod{\nabla f(\bw_t)}{\bw_{t+1} - \bw_t} + \frac{1}{2} \sum_{j=1}^d \left( \bL_{0,j} + \bL_{1,j} \Abs{\nabla f_{t,j}} \right) \Abs{\bw_{t+1} - \bw_t}_j^2 \\
    =&
    f(\bw_t) - \eta\dotprod{\nabla f(\bw_t)}{\bLa_{t}^{-1}\bg_t} + \frac{\eta^2}{2}\sum_{j=1}^d \left( \bL_{0,j} + \bL_{1,j} \Abs{\nabla f_{t,j}} \right) \frac{\bg_{t,j}^2}{\bla_{t,j}}.
\end{align*}
Here a critical problem is it is nontrivial to straightforwardly transfer $\EE_t[\dotprod{\nabla f(\bw_t)}{\bLa_{t}^{-1}\bg_t}]$ to $\EE_t[\Norm{\nabla f(\bw)}]$ as both $\bLa_t$ and $\bg_t$ is relevant with $\bg_t$. To deal with this issue, we consider an auxiliary diagonal matrix $\bLat_t$ with each diagonal entry being
\begin{align}\label{eq:def_blat}
    \blat_{t,j}^2 = \bla_{t-1,j}^2 + \EE_t\left[ \bg_{t,j}^2 \right]
\end{align}
for all $j\in[d]$.
Note that this auxiliary sequence is the same as \citet{defossez2020simple}, but our proof technique is much different and obtains better results. Then we consider 
\begin{align*}
    - \dotprod{\nabla f(\bw_t)}{\bLa_{t}^{-1}\bg_t} =& - \dotprod{\nabla f(\bw_t)}{\bLat_{t}^{-1}\bg_t} + \dotprod{\nabla f(\bw_t)}{\left( \bLat_{t}^{-1} - \bLa_t^{-1} \right)\bg_t} 
\end{align*}
and attempt to bound the second term, which is described in Lemma~\ref{lem:bound_dotprod_error}.

Another problem is how to bound the additional terms introduced by the generalized smoothness, namely, $\sum_{j=1}^d \bL_{1,j} \Abs{\nabla f_{t,j}} \frac{\bg_{t,j}^2}{\bla_{t,j}}$. We use the divide-and-conquer strategy to have this additional term resolved by the existing terms, as described in Lemma~\ref{lem:bound_l1_smooth_term}.

With these issues solved, the final convergence property would be determined by
\begin{align*}
    \sum_{t=0}^{T-1} \EE\left[ \dotprod{\nabla f(\bw_t)}{\bLat_{t}^{-1}\bg_t} \right] ,
\end{align*}
which largely relies on $\EE\left[ \tr{\bLa_t} \right]$ that we bound in Lemma~\ref{lem:bound_bla}. Note that Lemma~\ref{lem:bound_bla} considers a different main line and is important for the proof. Then we are ready to complete the proof of Theorem~\ref{thm:adagrad_generalized_nonconvex}.

\begin{lemma}[Bound of $\dotprod{\nabla f(\bw_t)}{\left( \bLat_t^{-1} - \bLa_t^{-1}\right)\bg_t}$]\label{lem:bound_dotprod_error}
    Under the same setting as Theorem~\ref{thm:adagrad_generalized_nonconvex}, if we take diagonal matrix $\bLat_t$ as defined in \eqref{eq:def_blat},
    it holds that
    \begin{align}\label{eq:lem_bound_dotprod_error}
        \EE_t\left[\dotprod{\nabla f(\bw_t)}{\left( \bLat_t^{-1} - \bLa_t^{-1}\right)\bg_t}\right] \le \sum_{j=1}^d 2\bsigma_j\EE_t\left[ \frac{\bg_{t,j}^2}{\bla_{t,j}^2} \right] + \frac{1}{2} \dotprod{\nabla f(\bw_t)}{\bLat_{t}^{-1}\nabla f(\bw_t)}.
    \end{align}
\end{lemma}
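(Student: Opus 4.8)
The plan is to bound the error term $\dotprod{\nabla f(\bw_t)}{(\bLat_t^{-1} - \bLa_t^{-1})\bg_t}$ by exploiting the fact that $\bLat_t$ is $\cF_t$-measurable (it depends only on $\bw_t$ and the deterministic quantities $\nabla f_{t,j}$, $\sigma_j$, $\bla_{t-1,j}$), whereas $\bLa_t$ depends on the current stochastic gradient $\bg_t$. First I would write the difference of inverses coordinate-wise as
\begin{align*}
    \frac{1}{\blat_{t,j}} - \frac{1}{\bla_{t,j}} = \frac{\bla_{t,j}^2 - \blat_{t,j}^2}{\bla_{t,j}\blat_{t,j}(\bla_{t,j}+\blat_{t,j})},
\end{align*}
and use the recursions $\bla_{t,j}^2 = \bla_{t-1,j}^2 + \bg_{t,j}^2$ and $\blat_{t,j}^2 = \bla_{t-1,j}^2 + \nabla f_{t,j}^2 + \sigma_j^2$ to get $\bla_{t,j}^2 - \blat_{t,j}^2 = \bg_{t,j}^2 - \nabla f_{t,j}^2 - \sigma_j^2 = (\bn_{t,j}^2 + 2\nabla f_{t,j}\bn_{t,j}) - \sigma_j^2$. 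Then, using $\blat_{t,j} \le \bla_{t,j}+\blat_{t,j}$ and $\blat_{t,j}\ge \sigma_j$ (so $1/\blat_{t,j} \le 1/\sigma_j$), I would crudely bound the denominator factors to obtain something like
\begin{align*}
    \Abs*{\frac{1}{\blat_{t,j}} - \frac{1}{\bla_{t,j}}} \le \frac{\Abs{\bn_{t,j}^2 - \sigma_j^2} + 2\Abs{\nabla f_{t,j}}\Abs{\bn_{t,j}}}{\sigma_j\,\bla_{t,j}^2},
\end{align*}
or a variant tuned to produce exactly the two terms on the right-hand side of \eqref{eq:lem_bound_dotprod_error}.

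Next I would plug this into $\dotprod{\nabla f(\bw_t)}{(\bLat_t^{-1}-\bLa_t^{-1})\bg_t} = \sum_j \nabla f_{t,j}\bg_{t,j}(\blat_{t,j}^{-1} - \bla_{t,j}^{-1})$, substitute $\bg_{t,j} = \nabla f_{t,j} + \bn_{t,j}$, and take the conditional expectation $\EE_t[\cdot]$. The key point is that $\EE_t[\bn_{t,j}] = 0$ and $\EE_t[\bn_{t,j}^2] \le \sigma_j^2$ (Lemma~\ref{lem:variance_batch} with $M=1$), so terms that are odd in $\bn_{t,j}$ but multiplied by $\cF_t$-measurable factors vanish, while the surviving pieces can be controlled by $\sigma_j$. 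I expect that after expanding I will have terms of the form $\nabla f_{t,j}^2 \cdot (\text{noise-dependent})$ and $\nabla f_{t,j}\bn_{t,j}\cdot(\text{noise-dependent})$; for the former I would use Young's inequality to split off a $\tfrac12 \dotprod{\nabla f(\bw_t)}{\bLat_t^{-1}\nabla f(\bw_t)}$ piece (absorbing the first quadratic term into the right-hand side of the lemma), and for the latter, after bounding by $2\sigma_j\bg_{t,j}^2/\bla_{t,j}^2$ type expressions, I would collect everything into $\sum_j 2\sigma_j \EE_t[\bg_{t,j}^2/\bla_{t,j}^2]$.

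The main obstacle will be doing the algebra carefully enough that the noise terms really do collapse to exactly the stated form without leaving an uncontrolled residual: the subtlety is that $\bla_{t,j}$ itself depends on $\bn_{t,j}$, so $\EE_t[\nabla f_{t,j}\bn_{t,j}/\bla_{t,j}^2]$ is not zero and must be bounded, not dropped. The trick I would use is the standard one for Adagrad-type analyses — bound $\bn_{t,j}/\bla_{t,j} \le 1$ pointwise wherever a bare power of $\bn_{t,j}$ appears in a numerator over $\bla_{t,j}$, thereby trading a factor of $\bn_{t,j}$ for a constant, and control the remaining $\EE_t[\bn_{t,j}^2/\bla_{t,j}^2]$ or $\EE_t[\bg_{t,j}^2/\bla_{t,j}^2]$ without further manipulation (it is exactly the quantity that telescopes against $\tr{\bLa_t - \bLa_{t-1}}$ in the outer proof). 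A secondary check is to make sure the Young's inequality split is done with the right constant so that precisely $\tfrac12$, and not more, of $\dotprod{\nabla f(\bw_t)}{\bLat_t^{-1}\nabla f(\bw_t)}$ is needed; since $\bLat_t^{-1}$ appears on both sides this is just a matter of bookkeeping, but it is where an off-by-a-constant error would most easily creep in.
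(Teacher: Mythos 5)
Your overall architecture matches the paper's: the same identity $\blat_{t,j}^{-1}-\bla_{t,j}^{-1} = (\bla_{t,j}^2-\blat_{t,j}^2)/(\bla_{t,j}\blat_{t,j}(\bla_{t,j}+\blat_{t,j}))$ with numerator $\bg_{t,j}^2-\nabla f_{t,j}^2-\sigma_j^2$, followed by Young's inequality to peel off $\tfrac12\dotprod{\nabla f(\bw_t)}{\bLat_t^{-1}\nabla f(\bw_t)}$ and collect the rest into $2\sigma_j\EE_t[\bg_{t,j}^2/\bla_{t,j}^2]$. But the concrete mechanism you give for the hard step does not work. First, the pointwise bound $\Abs{\bn_{t,j}}/\bla_{t,j}\le 1$ is false: the update only guarantees $\bla_{t,j}\ge\Abs{\bg_{t,j}}$, and since $\bn_{t,j}=\bg_{t,j}-\nabla f_{t,j}$, the noise can be much larger than $\Abs{\bg_{t,j}}$ (take $\bg_{t,j}\approx 0$ with $\nabla f_{t,j}$ large). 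Second, your intermediate bound with denominator $\sigma_j\bla_{t,j}^2$ discards the factor $\blat_{t,j}$, which is exactly the handle you need: after expanding, you are left with terms like $\EE_t[\Abs{\nabla f_{t,j}}\bn_{t,j}^2/(\sigma_j\bla_{t,j})]$, which under the stated assumptions can only be controlled via $\bla_{t,j}\ge\bla_{t-1,j}$ (producing an uncontrolled $1/\epsilon$-type factor) or via a fourth moment of the noise, neither of which is available. A Young split on such a term also generates $(\bn_{t,j}^2-\sigma_j^2)^2$, again a fourth moment.

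The paper's proof keeps the full denominator and exploits that $\blat_{t,j}$ is $\cF_t$-measurable with $\blat_{t,j}\ge\sigma_j$ and $\blat_{t,j}\ge\Abs{\nabla f_{t,j}}$, while $\bla_{t,j}\ge\Abs{\bg_{t,j}}$. Using $\bla_{t,j}+\blat_{t,j}\ge\Abs{\bg_{t,j}}+\Abs{\nabla f_{t,j}}$ reduces $\Abs{\bg_{t,j}^2-\nabla f_{t,j}^2}$ to $\Abs{\Abs{\bg_{t,j}}-\Abs{\nabla f_{t,j}}}\le\Abs{\bn_{t,j}}$, and $\sigma_j^2/(\bla_{t,j}+\blat_{t,j})\le\sigma_j$, leaving only the two terms $\Abs{\nabla f_{t,j}}\Abs{\bg_{t,j}}\Abs{\bn_{t,j}}/(\blat_{t,j}\bla_{t,j})$ and $\Abs{\nabla f_{t,j}}\Abs{\bg_{t,j}}\sigma_j/(\blat_{t,j}\bla_{t,j})$. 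Each is then split by Lemma~\ref{lem:basic_inequality} with the specific choice $c=2\sigma_j^2/\blat_{t,j}$, $x=\Abs{\bg_{t,j}}/\bla_{t,j}$, so that only $\EE_t[\bn_{t,j}^2]\le\sigma_j^2$ (a second moment, with all other factors $\cF_t$-measurable) is needed, and the $1/(4\sigma_j^2)$ from Young's cancels it exactly to yield $\nabla f_{t,j}^2/(4\blat_{t,j})$ twice. You would need to replace your denominator bound and your $\Abs{\bn_{t,j}}\le\bla_{t,j}$ trick with this argument for the proof to go through.
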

\begin{proof}
    It holds that
    \begin{equation}\label{eq:proof_lemma_bound_error_1}
    \begin{aligned}
        \dotprod{\nabla f(\bw_t)}{\left( \bLat_t^{-1} - \bLa_t^{-1}\right)\bg_t} =& \sum_{j=1}^d \nabla f_{t,j} \bg_{t,j} \left( \frac{1}{\blat_{t,j}} - \frac{1}{\bla_{t,j}} \right) \\
        =&
        \sum_{j=1}^d \frac{\nabla f_{t,j} \bg_{t,j}\left( \bg_{t,j}^2 - \EE_t\left[ \bg_{t,j}^2 \right] \right)}{\blat_{t,j}\bla_{t,j}(\bla_{t,j}+\blat_{t,j})} \\
        \le&
        \sum_{j=1}^d \frac{\Abs{\nabla f_{t,j}} \Abs{\bg_{t,j}}\Abs{ \bg_{t,j}^2 - \EE_t\left[ \bg_{t,j}^2 \right] }}{\blat_{t,j}\bla_{t,j}(\bla_{t,j}+\blat_{t,j})} \\
        \le&
        \sum_{j=1}^d \frac{\Abs{\nabla f_{t,j}} \Abs{\bg_{t,j}}\Abs{ \Abs{\bg_{t,j}} - \sqrt{\EE_t\left[ \bg_{t,j}^2 \right]} }}{\blat_{t,j}\bla_{t,j}} ,
    \end{aligned}
    \end{equation}
    where in the first inequality we take the properties of absolute values. In the last inequality, we use the fact that
    \begin{align*}
        \blat_{t,j}^2 = \bla_{t-1,j}^2 + \EE_t\left[ \bg_{t,j}^2 \right] \ge \EE_t\left[ \bg_{t,j}^2 \right] \quad \text{and} \quad \bla_{t,j}^2 = \bla_{t-1,j}^2 + \bg_{t,j}^2 \ge \bg_{t,j}^2.
    \end{align*}
    Then we consider an arbitrary coordinate $j\in[d]$. 
    By applying inequality~\eqref{eq:lem_basic_inequality} with
    \begin{align*}
        c = \frac{\EE_t\left[ \left( \Abs{\bg_{t,j}} - \sqrt{\EE_t\left[ \bg_{t,j}^2 \right]} \right)^2 \right] }{\blat_{t,j}}, \quad x = \frac{\Abs{\bg_{t,j}}}{\bla_{t,j}}, \quad y = \frac{\Abs{\Abs{\bg_{t,j}} - \sqrt{\EE_t\left[ \bg_{t,j}^2 \right]}} \Abs{\nabla f_{t,j}}}{\blat_{t,j}},
    \end{align*}
    it holds that
    \begin{align*}
        \EE_t\left[ \frac{\Abs{ \nabla f_{t,j}} \Abs{\bg_{t,j} }\Abs{\Abs{\bg_{t,j}}-\sqrt{\EE_t\left[ \bg_{t,j}^2 \right]}}}{\blat_{t,j}\bla_{t,j}} \right] \overset{\eqref{eq:lem_basic_inequality}}{\le}& 
        \frac{c}{2} \EE_t\left[\frac{\bg_{t,j}^2}{\bla_{t,j}^2}\right] + \frac{1}{2c} \frac{\nabla f_{t,j}^2}{\blat_{t,j}^2} \EE_t\left[ \Abs{\Abs{\bg_{t,j}} - \sqrt{\EE_t\left[ \bg_{t,j}^2 \right]}}^2 \right] \\
        =&
        \frac{ \EE_t\left[ \left( \Abs{\bg_{t,j}} - \sqrt{\EE_t\left[ \bg_{t,j}^2 \right]} \right)^2 \right] }{2\blat_{t,j}} \EE_t\left[\frac{\bg_{t,j}^2}{\bla_{t,j}^2}\right] + \frac{\blat_{t,j}}{2} \frac{\nabla f_{t,j}^2}{\blat_{t,j}^2}  \\
        \le&
        \sqrt{ \EE_t\left[ \left( \Abs{\bg_{t,j}} - \sqrt{\EE_t\left[ \bg_{t,j}^2 \right]} \right)^2 \right] } \EE_t\left[\frac{\bg_{t,j}^2}{\bla_{t,j}^2}\right] + \frac{\nabla f_{t,j}^2}{2\blat_{t,j}}
        \\
        \overset{\eqref{eq:asm_anisotropic_noise}}{\le}& 
        2\bsigma_j \EE_t\left[\frac{\bg_{t,j}^2}{\bla_{t,j}^2}\right] + \frac{\nabla f_{t,j}^2}{2\blat_{t,j}} ,
    \end{align*}
    where in the second inequality we use the fact that
    \begin{align*}
        \EE_t\left[ \left( \Abs{\bg_{t,j}} - \sqrt{\EE_t\left[ \bg_{t,j}^2 \right]} \right)^2 \right] =& 2\EE_t\left[ \bg_{t,j}^2 \right] - 2 \EE_t\left[ \Abs{\bg_{t,j}} \right] \sqrt{\EE_t\left[ \bg_{t,j}^2 \right]} \le 2 \EE_t\left[ \bg_{t,j}^2 \right] \\ \le& 2 \blat_{t,j} \sqrt{ \EE_t\left[ \bg_{t,j}^2 \right] } 
    \end{align*}
    and the last inequality is based on the fact that
    \begin{align*}
        \EE_t\left[ \left( \Abs{\bg_{t,j}} - \sqrt{\EE_t\left[ \bg_{t,j}^2 \right]} \right)^2 \right] =& 2\EE_t\left[ \bg_{t,j}^2 \right] - 2 \EE_t\left[ \Abs{\bg_{t,j}} \right] \sqrt{\EE_t\left[ \bg_{t,j}^2 \right]} \\
        =&
        2\sqrt{\EE_t\left[ \bg_{t,j}^2 \right]} \left( \sqrt{\EE_t\left[ \bg_{t,j}^2 \right]} - \EE_t\left[ \Abs{\bg_{t,j}} \right] \right) \\
        =&
        2\sqrt{\EE_t\left[ \bg_{t,j}^2 \right]} \frac{\EE_t\left[ \bg_{t,j}^2 \right] - \EE_t\left[ \Abs{\bg_{t,j}} \right]^2}{\sqrt{\EE_t\left[ \bg_{t,j}^2 \right]} + \EE_t\left[ \Abs{\bg_{t,j}} \right]} \\
        \le&
        2\sqrt{\EE_t\left[ \bg_{t,j}^2 \right]} \frac{\EE_t\left[ \bg_{t,j}^2 \right] - \EE_t\left[ {\bg_{t,j}} \right]^2}{\sqrt{\EE_t\left[ \bg_{t,j}^2 \right]} + \EE_t\left[ \Abs{\bg_{t,j}} \right]} \\
        \overset{\eqref{eq:asm_anisotropic_noise}}{\le}&
        \frac{2\sqrt{\EE_t\left[ \bg_{t,j}^2 \right]} \bsigma_j^2}{\sqrt{\EE_t\left[ \bg_{t,j}^2 \right]} + \EE_t\left[ \Abs{\bg_{t,j}} \right]} \le 2\bsigma_j^2 .
    \end{align*}
    Thus by substituting into~\eqref{eq:proof_lemma_bound_error_1}, we can obtain that
    \begin{align*}
        \EE_t\left[ \dotprod{\nabla f(\bw_t)}{\left( \bLat_t^{-1} - \bLa_t^{-1}\right)\bg_t} \right] \le & 
        \EE_t\left[ \sum_{j=1}^d \frac{\Abs{\nabla f_{t,j}} \Abs{\bg_{t,j}}\Abs{ \Abs{\bg_{t,j}} - \sqrt{\EE_t\left[ \bg_{t,j}^2 \right]}}}{\blat_{t,j}\bla_{t,j}} \right] \\
        \le&
        \sum_{j=1}^d 2\bsigma_j\EE_t\left[ \frac{\bg_{t,j}^2}{\bla_{t,j}^2} \right] + \sum_{j=1}^d \frac{\nabla f_{t,j}^2}{2\blat_{t,j}} \\
        =&
        \sum_{j=1}^d 2\bsigma_j\EE_t\left[ \frac{\bg_{t,j}^2}{\bla_{t,j}^2} \right] + \frac{1}{2} \dotprod{\nabla f(\bw_t)}{\bLat_{t}^{-1}\nabla f(\bw_t)},
    \end{align*}
    which concludes the proof.
\end{proof}

\begin{lemma}[Bound of $\bL_{1,j} \Abs{\nabla f_{t,j}} \frac{\bg_{t,j}^2}{\bla_{t,j}^2}$]\label{lem:bound_l1_smooth_term}
    Under the same settings as Theorem~\ref{thm:adagrad_generalized_nonconvex}, it holds that
    \begin{align}\label{eq:lem_bound_l1_smooth_term}
        \EE_t\left[ \sum_{j=1}^d \bL_{1,j} \Abs{\nabla f_{t,j}} \frac{\bg_{t,j}^2}{\bla_{t,j}^2}\right] \le 
        2 \Norm{\bL_1}_\infty \dotprod{\nabla f(\bw_t)}{\bLat_t^{-1} \nabla f(\bw_t)} + \sum_{j=1}^d \bL_{1,j} \bsigma_j \EE_t\left[ \frac{\bg_{t,j}^2}{\bla_{t,j}^2} \right].
    \end{align}
\end{lemma}
\begin{proof}
    Let us first consider an arbitrary coordinate $j\in[d]$ and two cases regarding $\Abs{\nabla f_{t,j}}$. 
    
    \textbf{(1)} If $\Abs{\nabla f_{t,j}} \le \bsigma_j$, we have
    \begin{align*}
        \EE_t\left[ \bL_{1,j} \Abs{\nabla f_{t,j}} \frac{\bg_{t,j}^2}{\bla_{t,j}^2} \right]
        = \bL_{1,j} \Abs{\nabla f_{t,j}} \EE_t\left[ \frac{\bg_{t,j}^2}{\bla_{t,j}^2} \right]
        \le \bL_{1,j} \bsigma_j  \EE_t\left[ \frac{\bg_{t,j}^2}{\bla_{t,j}^2} \right] .
    \end{align*}

    \textbf{(2)} Else, we have $\Abs{\nabla f_{t,j}} \ge \bsigma_j$. In this case, we have
    \begin{align*}
        \EE_t\left[ \frac{\bg_{t,j}^2}{\bla_{t,j}^2}  \right] 
        =& 
        \EE_t\left[ \frac{\bg_{t,j}^2}{\bla_{t-1,j}^2 + \bg_{t,j}^2}  \right] \le \frac{\EE_t\left[ \bg_{t,j}^2 \right]}{\bla_{t-1,j}^2 + \EE_t\left[ \bg_{t,j}^2 \right]}
        \overset{\eqref{eq:asm_anisotropic_noise}}{\le}
        \frac{\nabla f_{t,j}^2 + \bsigma_j^2}{\bla_{t-1,j}^2 + \EE_t\left[ \bg_{t,j}^2 \right]} 
        \le
        \frac{2 \nabla f_{t,j}^2 }{\blat_{t,j}^2} ,
    \end{align*}
    where we use the Jensen inequality of convex function $h(x) = \frac{-x}{a^2 + x}$ in the first inequality. Therefore, 
    \begin{align*}
        \bL_{1,j} \Abs{\nabla f_{t,j}} \EE_t\left[ \frac{\bg_{t,j}^2}{\bla_{t,j}^2} \right] \le 2\bL_{1,j} \Abs{\nabla f_{t,j}} \frac{ \nabla f_{t,j}^2 }{\blat_{t,j}^2} \le
        2\bL_{1,j} \frac{ \nabla f_{t,j}^2 }{\blat_{t,j}},
    \end{align*}
    where in the last inequality we use the fact that $\Abs{\nabla f_{t,j}} = \Abs{\EE[\bg_{t,j}]} \le \sqrt{\EE\left[ \bg_{t,j}^2 \right]} \le \blat_{t,j}$.
    By combining the two cases, we have
    \begin{align*}
        \bL_{1,j} \Abs{\nabla f_{t,j}} \EE_t\left[ \frac{\bg_{t,j}^2}{\bla_{t,j}^2} \right] \le 2\bL_{1,j} \frac{ \nabla f_{t,j}^2 }{\blat_{t,j}} + \bL_{1,j} \bsigma_j  \EE_t\left[ \frac{\bg_{t,j}^2}{\bla_{t,j}^2} \right]
    \end{align*}
    and by summing up, we have
    \begin{align*}
        \EE_t\left[ \sum_{j=1}^d \bL_{1,j} \Abs{\nabla f_{t,j}} \frac{\bg_{t,j}^2}{\bla_{t,j}^2} \right] \le 2 \Norm{\bL_1}_\infty \dotprod{\nabla f(\bw_t)}{\bLat_t^{-1} \nabla f(\bw_t)} + \sum_{j=1}^d \bL_{1,j} \bsigma_j \EE_t\left[ \frac{\bg_{t,j}^2}{\bla_{t,j}^2} \right] .
    \end{align*}
\end{proof}

\begin{lemma}[Bound of $\sum_{t=0}^{T-1} \frac{\bg_{t,j}^2}{\bla_{t,j}^2}$]\label{lem:bound_sum_ratio_bla}
    Under the same settings of Theorem~\ref{thm:adagrad_generalized_nonconvex}, for all $j\in[d]$, it holds
    \begin{align}\label{eq:lem_bound_sum_ratio_bla}
        \EE\left[ \sum_{t=0}^{T-1} \frac{\bg_{t,j}^2}{\bla_{t,j}^2} \right] \le 2\ln\left( \EE\left[ \frac{\tr{\bLa_{T-1}}}{\epsilon} \right] \right) .
    \end{align}
\end{lemma}
\begin{proof}
    It holds that
    \begin{align*}
        \sum_{t=0}^{T-1} \EE\left[ \frac{\bg_{t,j}^2}{\bla_{t,j}^2} \right] \overset{\eqref{eq:lem_sum_ratio}}{\le} \EE\left[ \ln\left( \frac{\bla_{T-1,j}^2}{\epsilon^2} \right) \right] \le \ln\left( 2\EE\left[ \frac{\bla_{T-1,j}}{\epsilon} \right] \right) \le 2\ln\left( \EE\left[ \frac{\tr{\bLa_{T-1}}}{\epsilon} \right] \right) ,
    \end{align*}
    where the first inequality is based on Lemma~\ref{lem:sum_ratio} and the fact that $\bla_{t,j}^2 = \bla_{t-1,j}^2 + \bg_{t,j}^2$, and the second inequality is based on Jensen's inequality.
\end{proof}

\begin{lemma}[Bound of $\tr{\bLa_t}$]\label{lem:bound_bla}
    Under the same settings of Theorem~\ref{thm:adagrad_generalized_nonconvex}, it holds that
    \begin{align}\label{eq:lem_bound_bla}
    \begin{split}
        \EE\left[ \tr{\bLa_{T-1}} \right] \le& 2 d \epsilon + \frac{4}{\eta}(f(\bw_0) - f^*) \\
        &+ 5 \left( \eta \Norm{\bL_0}_1 + 3\sqrt{T} \Norm{\bsigma}_1 \right)
        \ln\left( \frac{2\eta \Norm{\bL_0}_1 + 5\sqrt{T} \Norm{\bsigma}_1}{\epsilon} + {\rm e} \right) .
    \end{split}
    \end{align}
\end{lemma}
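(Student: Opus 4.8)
\textbf{Proof proposal for Lemma~\ref{lem:bound_bla}.}

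The plan is to combine an $\bL$-smooth descent argument with the $\cF_t$-measurable surrogate $\bLat_t$ of \eqref{eq:def_blat} and then close a self-referential (logarithmic) inequality for $\EE[\tr{\bLa_{T-1}}]$. First I would start from Assumption~\ref{asm:smooth} applied to the Adagrad step $\bw_{t+1}=\bw_t-\eta\bLa_t^{-1}\bg_t$,
\[
f(\bw_{t+1}) \le f(\bw_t) - \eta\dotprod{\nabla f(\bw_t)}{\bLa_t^{-1}\bg_t} + \frac{\eta^2}{2}\sum_{j=1}^d L_j\,\frac{\bg_{t,j}^2}{\bla_{t,j}^2}.
\]
Since $\bLa_t$ depends on $\bg_t$, I would split the correlated term as $\dotprod{\nabla f(\bw_t)}{\bLa_t^{-1}\bg_t}=\dotprod{\nabla f(\bw_t)}{\bLat_t^{-1}\bg_t}-\dotprod{\nabla f(\bw_t)}{(\bLat_t^{-1}-\bLa_t^{-1})\bg_t}$, take $\EE_t[\cdot]$, use $\EE_t[\bg_t]=\nabla f(\bw_t)$ on the first piece (as $\bLat_t$ is $\cF_t$-measurable), and bound the second piece by Lemma~\ref{lem:bound_dotprod_error}. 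This gives the one-step inequality
\[
\EE_t[f(\bw_{t+1})] \le f(\bw_t) - \frac{\eta}{2}\dotprod{\nabla f(\bw_t)}{\bLat_t^{-1}\nabla f(\bw_t)} + \sum_{j=1}^d\Big(2\eta\sigma_j+\tfrac{\eta^2 L_j}{2}\Big)\EE_t\Big[\frac{\bg_{t,j}^2}{\bla_{t,j}^2}\Big]
\]
(for $M=1$; the general case follows by replacing $\sigma_j$ with $\sigma_j/\sqrt M$ via Lemma~\ref{lem:variance_batch}). Summing over $t=0,\dots,T-1$, taking total expectation, telescoping the function values, and using the standard logarithmic estimate $\sum_{t=0}^{T-1}\bg_{t,j}^2/\bla_{t,j}^2\le \ln(\bla_{T-1,j}^2/\epsilon^2)$ (from $1-u\le-\ln u$ with $\bla_{-1,j}=\epsilon$), I obtain
\[
\frac{\eta}{2}\sum_{t=0}^{T-1}\EE\big[\dotprod{\nabla f(\bw_t)}{\bLat_t^{-1}\nabla f(\bw_t)}\big] \le \EE[f(\bw_0)-f(\bw_T)] + \sum_{j=1}^d(4\eta\sigma_j+\eta^2 L_j)\,\EE\Big[\ln\frac{\bla_{T-1,j}}{\epsilon}\Big].
\]

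Next I would connect this to $\tr{\bLa_{T-1}}$. Using $\tr{\bLa_{T-1}}\le d\epsilon+\sum_j\sqrt{\sum_{t}\bg_{t,j}^2}$ and $\bg_{t,j}^2\le 2\nabla f_{t,j}^2+2\bn_{t,j}^2$, I split $\EE[\tr{\bLa_{T-1}}]$ into a gradient part and a noise part; the noise part is bounded by Jensen's inequality and Lemma~\ref{lem:variance_batch} by $\sqrt{2T}\Norm{\bsigma}_1$, and the gradient part by Lemma~\ref{lem:useful_inequality} and Jensen by $\sqrt{2\Norm{\bL}_1\,\EE[\sum_t\Norm{\nabla f(\bw_t)}_{\bL^{-1}}^2]}$. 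The crux is to lower bound the descent term $\sum_t\dotprod{\nabla f(\bw_t)}{\bLat_t^{-1}\nabla f(\bw_t)}=\sum_t\sum_j\nabla f_{t,j}^2/\blat_{t,j}$ — exploiting that $\blat_{t,j}^2=\bla_{t-1,j}^2+\nabla f_{t,j}^2+\sigma_j^2$ is a "cumulative" quantity controlled by $\bla_{T-1,j}^2$ up to lower-order terms — so that, together with the previous display and the estimate $\EE[\ln(\bla_{T-1,j}/\epsilon)]\le\ln(\EE[\tr{\bLa_{T-1}}]/\epsilon)$ (Jensen, monotonicity of $\ln$, and $\bla_{T-1,j}\le\tr{\bLa_{T-1}}$), everything collapses into a self-referential inequality of the form $x\le 2d\epsilon+\tfrac{2}{\eta}\EE[f(\bw_0)-f(\bw_T)]+a\,\ln(x/\epsilon+e)$ with $x=\EE[\tr{\bLa_{T-1}}]$ and $a=\Theta(\eta\Norm{\bL}_1+\sqrt T\Norm{\bsigma}_1)$.

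Finally I would resolve this implicit inequality: using $\ln u\le u/e$ (so that $a\ln(x/\epsilon)\le \tfrac{x}{\beta e}\cdot a + a\ln(\beta/\epsilon)\cdot$-type splitting with $\beta\propto a$), the term $a\ln(x/\epsilon+e)$ absorbs a constant fraction of $x$, yielding $\EE[\tr{\bLa_{T-1}}]\le 2d\epsilon+\tfrac{2}{\eta}\EE[f(\bw_0)-f(\bw_T)]+5(\eta\Norm{\bL}_1+2\sqrt T\Norm{\bsigma}_1)\ln\big(\tfrac{\eta\Norm{\bL}_1+\sqrt T\Norm{\bsigma}_1}{\epsilon}+e\big)$, which is the claim. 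I expect the main obstacle to be the self-bounding step: because $\blat_{t,j}$ contains the \emph{stochastic} gradients $\bg_{s,j}$ of earlier steps rather than the true gradients, establishing a lower bound on $\sum_t\dotprod{\nabla f(\bw_t)}{\bLat_t^{-1}\nabla f(\bw_t)}$ that is tight enough to close the loop — while ensuring the noise contribution inside $\bLat_t^{-1}$ does not generate a spurious $1/\epsilon$ factor (precisely what the $\sigma_j^2$-shift in \eqref{eq:def_blat} and Lemma~\ref{lem:bound_dotprod_error} are designed to avoid) — and then pushing the numerical constants through the telescoping, Cauchy--Schwarz/Jensen steps, and the logarithmic inversion to land on the stated coefficients $2$, $2/\eta$, $5$, is the delicate part.
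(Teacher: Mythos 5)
Your proposal has the right outer shell (smoothness descent, the log-sum estimate of Lemma~\ref{lem:sum_ratio}, Jensen to pull the expectation inside the logarithm, and the inversion of $x\le C_0+C_1\ln x$ via Lemma~\ref{lem:inequality_log}), but the mechanism by which $\tr{\bLa_{T-1}}$ is supposed to emerge is not the paper's, and as proposed it does not close. The paper never touches the surrogate $\bLat_t$ or Lemma~\ref{lem:bound_dotprod_error} in this lemma. Instead it writes $-\dotprod{\nabla f(\bw_t)}{\bLa_t^{-1}\bg_t}=-\dotprod{\bg_t}{\bLa_t^{-1}\bg_t}+\dotprod{\bn_t}{\bLa_t^{-1}\bg_t}$ and uses the \emph{pathwise} telescoping identity $\sum_{t}\dotprod{\bg_t}{\bLa_t^{-1}\bg_t}\ge\sum_j\sum_t\bg_{t,j}^2/(\bla_{t,j}+\bla_{t-1,j})=\tr{\bLa_{T-1}-\bLa_{-1}}$, so that $\tr{\bLa_{T-1}}$ appears directly on the left-hand side; the cross term $\sum_t\bn_{t,j}\bg_{t,j}/\bla_{t,j}$ is then controlled by Cauchy--Schwarz together with Lemma~\ref{lem:sum_ratio}, producing the $\sqrt{T}\Norm{\bsigma}_1\ln(\cdot)$ contribution. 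This is why the lemma can be proved \emph{before} (and independently of) the main nonconvex argument.

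Your route keeps the true-gradient descent term $\sum_t\dotprod{\nabla f(\bw_t)}{\bLat_t^{-1}\nabla f(\bw_t)}$ on the left and then needs a bridge to $\tr{\bLa_{T-1}}$. The bridge you propose --- bounding the gradient part of $\sum_j\sqrt{\sum_t\bg_{t,j}^2}$ by $\sqrt{2\Norm{\bL}_1\,\EE[\sum_t\Norm{\nabla f(\bw_t)}_{\bL^{-1}}^2]}$ --- fails in the nonconvex setting: unlike the convex proof of Theorem~\ref{thm:smooth_convex_adagrad}, here $\sum_t\Norm{\nabla f(\bw_t)}_{\bL^{-1}}^2$ does not telescope against function values and can only be bounded by $2T\Delta$ termwise, which injects a $\sqrt{T\Norm{\bL}_1\Delta}$ term that is not present in \eqref{eq:lem_bound_bla}. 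The only other way to use your descent term is to lower bound it by $\bigl(\EE[\sum_j\sqrt{\sum_t\nabla f_{t,j}^2}]\bigr)^2$ divided by a quantity comparable to $\EE[\tr{\bLa_{T-1}}]$ --- but that is exactly the argument of Theorem~\ref{thm:adagrad_nonconvex} itself (displays \eqref{eq:proof_thm_nonconvex_frac_left}--\eqref{eq:proof_thm_nonconvex_denominator}), whose denominator is controlled precisely by invoking Lemma~\ref{lem:bound_bla}; using it here is circular, and even if you untangle the resulting coupled quadratic system you will not land on the stated coefficients $2$, $2/\eta$, and $5$. The fix is to abandon $\bLat_t$ for this lemma and adopt the $\bg_t$-versus-$\bn_t$ decomposition so that the trace telescopes out of the quadratic term directly.
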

\begin{proof}
    As we take $\eta \le 1/\Norm{\bL_1}_\infty$, we have $\Norm{\bw_{t+1} - \bw_t}_{\bL_1} \le \sqrt{d}$. Then from Assumption~\ref{asm:smooth} and Lemma~\ref{lem:descent_lemma} it holds that
    \begin{align*}
        f(\bw_{t+1}) \overset{\eqref{eq:lem_descent_lemma}}{\le}& f(\bw_t) + \dotprod{\nabla f(\bw_t)}{\bw_{t+1} - \bw_t} + \frac{1}{2} \sum_{j=1}^d \left( \bL_{0,j} + \bL_{1,j} \Abs{\nabla f_{t,j}} \right) \Abs{\bw_{t+1} - \bw_t}_j^2 \\
        =&
        f(\bw_t) - \eta\dotprod{\nabla f(\bw_t)}{\bLa_{t}^{-1}\bg_t} + \frac{\eta^2}{2} \sum_{j=1}^d \left( \bL_{0,j} + \bL_{1,j} \Abs{\nabla f_{t,j}} \right) \frac{\bg_{t,j}^2}{\bla_{t,j}^2} \\
        =&
        f(\bw_t) - \eta\dotprod{\bg_t}{\bLa_{t}^{-1}\bg_t} + \frac{\eta^2}{2} \sum_{j=1}^d \left( \bL_{0,j} + \bL_{1,j} \Abs{\nabla f_{t,j}} \right) \frac{\bg_{t,j}^2}{\bla_{t,j}^2} + \eta\dotprod{\bn_t}{\bLa_{t}^{-1}\bg_t} \\
        \le&
        f(\bw_t) - \eta\dotprod{\bg_t}{\bLa_{t}^{-1}\bg_t} + \frac{\eta^2}{2} \sum_{j=1}^d \left( \bL_{0,j} + \bL_{1,j} (\Abs{\bg_{t,j}} + \Abs{\bn_{t,j}}) \right) \frac{\bg_{t,j}^2}{\bla_{t,j}^2} + \eta\dotprod{\bn_t}{\bLa_{t}^{-1}\bg_t} \\
        =&
        f(\bw_t) - \eta\dotprod{\bg_t}{\bLa_{t}^{-1}\bg_t} + \frac{\eta^2}{2} \sum_{j=1}^d \bL_{0,j} \frac{\bg_{t,j}^2}{\bla_{t,j}^2} + \frac{\eta^2}{2} \sum_{j=1}^d \bL_{1,j} \frac{\Abs{\bg_{t,j}}^3}{\bla_{t,j}^2} \\
        &+ \frac{\eta^2}{2} \sum_{j=1}^d \bL_{1,j} \Abs{\bn_{t,j}} \frac{{\bg_{t,j}}^2}{\bla_{t,j}^2} 
        + \eta\dotprod{\bn_t}{\bLa_{t}^{-1}\bg_t} \\
        \le&
        f(\bw_t) - \eta\dotprod{\bg_t}{\bLa_{t}^{-1}\bg_t} + \frac{\eta^2}{2} \sum_{j=1}^d \bL_{0,j} \frac{\bg_{t,j}^2}{\bla_{t,j}^2} + \frac{\eta^2}{2} \sum_{j=1}^d \bL_{1,j} \frac{\Abs{\bg_{t,j}}^3}{\bla_{t,j}^2} \\
        &+ \frac{\eta^2}{2} \sum_{j=1}^d \bL_{1,j} \Abs{\bn_{t,j}} \frac{\bg_{t,j}^2}{\bla_{t,j}^2} 
        + \eta \sum_{j=1}^d \Abs{\bn_{t,j}} \frac{\Abs{\bg_{t,j}}}{\bla_{t,j}},
    \end{align*}
    where we use absolute value inequality.
    Then we deal with the terms separately by
    \begin{align*}
        \frac{\eta^2}{2} \sum_{j=1}^d \bL_{1,j} \frac{\Abs{\bg_{t,j}}^3}{\bla_{t,j}^2} \le& \frac{\eta^2}{2} \Norm{\bL_1}_\infty \sum_{j=1}^d \frac{\Abs{\bg_{t,j}}^3}{\bla_{t,j}^2} \le \frac{\eta^2}{2} \Norm{\bL_1}_\infty \sum_{j=1}^d \frac{\bg_{t,j}^2}{\bla_{t,j}} 
        = \frac{\eta^2}{2} \Norm{\bL_1}_\infty \dotprod{\bg_t}{\bLa_{t}^{-1}\bg_t} \\
        \le&
        \frac{\eta}{2} \dotprod{\bg_t}{\bLa_{t}^{-1}\bg_t} ,
    \end{align*}
    where we use the fact that $\Abs{\bg_{t,j}} \le \bla_{t,j}$ and $\eta \le 1/\Norm{\bL_1}_\infty$.
    Similarly, we can also obtain that
    \begin{align*}
        \frac{\eta^2}{2} \sum_{j=1}^d \bL_{1,j} \Abs{\bn_{t,j}} \frac{\bg_{t,j}^2}{\bla_{t,j}^2} 
        \le \frac{\eta^2}{2} \sum_{j=1}^d \bL_{1,j} \Abs{\bn_{t,j}} \frac{\Abs{\bg_{t,j}}}{\bla_{t,j}} \le 
        \frac{\eta}{2} \sum_{j=1}^d \Abs{\bn_{t,j}} \frac{\Abs{\bg_{t,j}}}{\bla_{t,j}},
    \end{align*}
    where we use the fact that $\Abs{\bg_{t,j}} \le \bla_{t,j}$ and $\eta \le 1/\Norm{\bL_1}_\infty$.
    By combining the bounds of the two terms, we can obtain that
    \begin{align*}
        f(\bw_{t+1}) \le f(\bw_t) - \frac{\eta}{2} \dotprod{\bg_t}{\bLa_t^{-1} \bg_t} + \frac{\eta^2}{2} \sum_{j=1}^d \bL_{0,j} \frac{\bg_{t,j}^2}{\bla_{t,j}^2} + \frac{3 \eta}{2} \sum_{j=1}^d \Abs{\bn_{t,j}} \frac{\Abs{\bg_{t,j}}}{\bla_{t,j}}.
    \end{align*}
    
    Then after summation in $t$ and rearrangement, it holds that
    \begin{align*}
        \sum_{t=0}^{T-1} \frac{\eta}{2} \dotprod{\bg_t}{\bLa_{t}^{-1}\bg_t} \le& \sum_{t=0}^{T-1} [f(\bw_t) - f(\bw_{t+1})] + \frac{\eta^2}{2} \sum_{j=1}^d \bL_{0,j} \sum_{t=0}^{T-1} \frac{\bg_{t,j}^2}{\bla_{t,j}^2} + \frac{3 \eta}{2} \sum_{j=1}^d \sum_{t=0}^{T-1} \Abs{\bn_{t,j}} \frac{\Abs{\bg_{t,j}}}{\bla_{t,j}} \\
        \le&
        f(\bw_0) - f(\bw_T) + \frac{\eta^2}{2} \sum_{j=1}^d \bL_{0,j} \sum_{t=0}^{T-1} \frac{\bg_{t,j}^2}{\bla_{t,j}^2} + \frac{3 \eta}{2} \sum_{j=1}^d \sqrt{\sum_{t=0}^{T-1} \bn_{t,j}^2}  \sqrt{\sum_{t=0}^{T-1} \frac{\bg_{t,j}^2}{\bla_{t,j}^2} },
    \end{align*}
    where in the last inequality we use Cauchy-Schwarz Inequality. 
    Moreover, it holds that
    \begin{align*}
        \sum_{t=0}^{T-1} \dotprod{\bg_t}{\bLa_{t}^{-1}\bg_t} =& \sum_{j=1}^d \sum_{t=0}^{T-1} \frac{\bg_{t,j}^2}{\bla_{t,j}} \ge \sum_{j=1}^d \sum_{t=0}^{T-1} \frac{\bg_{t,j}^2}{\bla_{t,j} + \bla_{t-1,j}} = \sum_{t=0}^{T-1} \frac{\bla_{t,j}^2 - \bla_{t-1,j}^2}{\bla_{t,j} + \bla_{t-1,j}} \\
        =& \sum_{j=1}^d \sum_{t=0}^{T-1} (\bla_{t,j} - \bla_{t-1,j}) 
        = \tr{\bLa_{T-1}-\bLa_{-1}} ,
    \end{align*}
    where $\bLa_{-1} = \epsilon \bI_d$.
    Then by combining the two inequalities together with Lemma~\ref{lem:bound_sum_ratio_bla} and taking expectation, we can obtain that
    \begin{align*}
        \EE\left[\tr{\bLa_{T-1}}\right]
        \le&
        \tr{\bLa_{-1}} + \frac{2}{\eta}\EE[f(\bw_0) - f(\bw_T)] + 2 \eta \sum_{j=1}^d \bL_{0,j} \EE\left[ \sum_{t=0}^{T-1} \frac{\bg_{t,j}^2}{\bla_{t,j}^2} \right] \\
        &+
        3 \sum_{j=1}^d \EE\left[ \sqrt{\sum_{t=0}^{T-1} \bn_{t,j}^2} \cdot \sqrt{\sum_{t=0}^{T-1} \frac{\bg_{t,j}^2}{\bla_{t,j}^2} } \right] \\
        \overset{\eqref{eq:lem_bound_sum_ratio_bla}}{\le}& 
        d \epsilon + \frac{2}{\eta}\EE[f(\bw_0) - f(\bw_T)] + 2\eta \sum_{j=1}^d \bL_{0,j} \ln \left( \EE\left[ \frac{\tr{\bLa_{T-1}}}{\epsilon} \right] \right)  \\
        &+
        3 \sum_{j=1}^d \EE\left[ \sqrt{\sum_{t=0}^{T-1} \bn_{t,j}^2} \cdot \sqrt{\sum_{t=0}^{T-1} \frac{\bg_{t,j}^2}{\bla_{t,j}^2} } \right] \\
        \le&
        d \epsilon + \frac{2}{\eta}\EE[f(\bw_0) - f(\bw_T)] + 2\eta \sum_{j=1}^d \bL_{0,j} \ln \left( \EE\left[ \frac{\tr{\bLa_{T-1}}}{\epsilon} \right] \right) \\
        &+
        3 \sum_{j=1}^d \sqrt{\EE\left[ \sum_{t=0}^{T-1} \bn_{t,j}^2 \right]} \cdot \sqrt{\EE\left[ \sum_{t=0}^{T-1} \frac{\bg_{t,j}^2}{\bla_{t,j}^2} \right]} \\
        \overset{\eqref{eq:asm_anisotropic_noise},\eqref{eq:lem_bound_sum_ratio_bla}}{\le}&
        d \epsilon + \frac{2}{\eta}\EE[f(\bw_0) - f(\bw_T)] + 2\eta \sum_{j=1}^d \bL_{0,j} \ln \left( \EE\left[ \frac{\tr{\bLa_{T-1}}}{\epsilon} \right] \right) \\
        &+
        3 \sqrt{T} \Norm{\bsigma}_1 \cdot \sqrt{ 2 \EE\left[ \ln\left( \frac{\tr{\bLa_{T-1}}}{\epsilon} \right) \right]} \\
        \le&
        d \epsilon + \frac{2}{\eta}\EE[f(\bw_0) - f(\bw_T)] 
        +
        \left( 2\eta \Norm{\bL_0}_1 + 5\sqrt{T} \Norm{\bsigma}_1 \right) \ln\left( \frac{ \EE\left[ \tr{\bLa_{T-1}}^2 \right] }{\epsilon} \right) ,
    \end{align*}
    where the second inequality is based on the Jensen inequality and the third inequality is based on the Cauchy-Schwarz inequality.
    Then by taking 
    \begin{align*}
        x = \frac{\EE[\tr{\bLa_{T-1}}]}{\epsilon}, \quad C_1 = \frac{2\eta \Norm{\bL_0}_1 + 5\sqrt{T} \Norm{\bsigma}_1}{\epsilon}, \quad \text{and} \quad 
        C_0 = \frac{d \epsilon + \frac{1}{\eta}[f(\bw_0) - f(\bw_T)]}{\epsilon} 
    \end{align*}
    in Lemma~\ref{lem:inequality_log}, and using Assumption~\ref{asm:f_lower_bound}, we can obtain that
    \begin{align*}
        \EE\left[ \tr{\bLa_{T-1}} \right] \le& 2 d \epsilon + \frac{4}{\eta}(f(\bw_0) - f^*) \\
        &+ 5 \left( 2\eta \Norm{\bL_0}_1 + 5\sqrt{T} \Norm{\bsigma}_1 \right)
        \ln\left( \frac{2\eta \Norm{\bL_0}_1 + 5\sqrt{T} \Norm{\bsigma}_1}{\epsilon} + {\rm e} \right) ,
    \end{align*}
    which concludes the proof.
\end{proof}

Then we are ready to prove Theorem~\ref{thm:adagrad_generalized_nonconvex}.

\begin{proof}[Proof of Theorem~\ref{thm:adagrad_generalized_nonconvex}]
    As we take $\eta \le 1/\Norm{\bL_1}_\infty$, we have $\Norm{\bw_{t+1} - \bw_t}_{\bL_1} \le \sqrt{d}$. Then from Assumption~\ref{asm:smooth} and Lemma~\ref{lem:descent_lemma} it holds that
    \begin{align*}
        f(\bw_{t+1}) \overset{\eqref{eq:lem_descent_lemma}}{\le}& f(\bw_t) + \dotprod{\nabla f(\bw_t)}{\bw_{t+1} - \bw_t} + \frac{1}{2} \sum_{j=1}^d \left( \bL_{0,j} + \bL_{1,j} \Abs{\nabla f_{t,j}} \right) \Abs{\bw_{t+1} - \bw_t}_j^2 \\
        =&
        f(\bw_t) - \eta\dotprod{\nabla f(\bw_t)}{\bLa_{t}^{-1}\bg_t} + \frac{\eta^2}{2} \sum_{j=1}^d \left( \bL_{0,j} + \bL_{1,j} \Abs{\nabla f_{t,j}} \right) \frac{\bg_{t,j}^2}{\bla_{t,j}^2} .
    \end{align*}
    Then by taking expectation and summation on $t$ we can obtain that
    \begin{align}\label{eq:proof_thm_nonconvex_one_step}
    \begin{split}
        \sum_{t=0}^{T-1} \EE_t[f(\bw_{t+1})] 
        \le&
        \sum_{t=0}^{T-1} f(\bw_t) - \eta \sum_{t=0}^{T-1} \EE_t\left[ \dotprod{\nabla f(\bw_t)}{\bLa_{t}^{-1}\bg_t} \right] + \frac{\eta^2}{2} \sum_{j=1}^d \bL_{0,j} \sum_{t=0}^{T-1} \EE_t\left[ \frac{\bg_{t,j}^2}{\bla_{t,j}^2} \right] \\
        &+ \frac{\eta^2}{2}\sum_{j=1}^d \bL_{1,j} \sum_{t=0}^{T-1} \Abs{\nabla f_{t,j}} \EE_t\left[ \frac{\bg_{t,j}^2}{\bla_{t,j}^2} \right] \\
        \overset{\eqref{eq:lem_bound_l1_smooth_term}}{\le}&
        \sum_{t=0}^{T-1} f(\bw_t) - \eta \sum_{t=0}^{T-1} \EE_t\left[ \dotprod{\nabla f(\bw_t)}{\bLa_{t}^{-1}\bg_t} \right] + \frac{\eta^2}{2} \sum_{j=1}^d \bL_{0,j} \sum_{t=0}^{T-1} \EE_t\left[ \frac{\bg_{t,j}^2}{\bla_{t,j}^2} \right] \\
        &+ \eta^2 \Norm{\bL_1}_\infty \left( \sum_{t=0}^{T-1}  \dotprod{\nabla f(\bw_t)}{\bLat_t^{-1} \nabla f(\bw_t)} + \frac{1}{2}\sum_{j=1}^d \bsigma_j \sum_{t=0}^{T-1}  \EE_t\left[ \frac{\bg_{t,j}^2}{\bla_{t,j}^2} \right] \right). 
    \end{split}
    \end{align}
    We first consider the second term on the right hand side of \eqref{eq:proof_thm_nonconvex_one_step}. 
    It holds that
    \begin{align*}
        -\sum_{t=0}^{T-1} \EE\left[ \dotprod{\nabla f(\bw_t)}{\bLa_{t}^{-1}\bg_t} \right] =&
        -\sum_{t=0}^{T-1} \EE\left[ \dotprod{\nabla f(\bw_t)}{\bLat_{t}^{-1}\bg_t} \right] \\
        &+ \sum_{t=0}^{T-1} \EE\left[ \dotprod{\nabla f(\bw_t)}{\left(\bLat_{t}^{-1} - \bLa_{t}^{-1}\right)\bg_t} \right] \\
        \overset{\eqref{eq:lem_bound_dotprod_error}}{\le}&
        -\frac{1}{2}\sum_{t=0}^{T-1} \EE\left[ \dotprod{\nabla f(\bw_t)}{\bLat_{t}^{-1}\bg_t} \right] +\sum_{t=0}^{T-1}\sum_{j=1}^d 2\bsigma_j\EE\left[ \frac{\bg_{t,j}^2}{\bla_{t,j}^2} \right] 
    \end{align*}
    Therefore, after substituting the inequality and \eqref{eq:lem_bound_sum_ratio_bla} into \eqref{eq:proof_thm_nonconvex_one_step} and setting $\eta \le \frac{1}{4\Norm{\bL_1}_\infty}$, we have
        
    
    \begin{align}\label{eq:proof_thm_nonconvex_main}
    \begin{split}
        \sum_{t=0}^{T-1} \EE\left[ \dotprod{\nabla f(\bw_t)}{\bLat_{t}^{-1}\bg_t} \right] \le \frac{4}{\eta} \EE\left[ f(\bw_0) - f(\bw_{T}) \right] + \left( 12\Norm{\bsigma}_1 + 2\Norm{\bL_0}_1 \right) \ln\left( \EE\left[ \frac{(\tr{\bLa_{T-1})^2}}{\epsilon} \right] \right).
    \end{split}
    \end{align}
    Moreover, for the left hand side of \eqref{eq:proof_thm_nonconvex_main}, we have
    \begin{align*}
        \EE\left[ \dotprod{\nabla f(\bw_t)}{\bLat_{t}^{-1}\bg_t} \right] 
        =&
        \EE\left[ \dotprod{\nabla f(\bw_t)}{\bLat_{t}^{-1}\nabla f(\bw_t)} \right] =
        \sum_{j=1}^d \EE\left[ \frac{\Abs{\nabla f_{t,j}}^2}{\sqrt{\bla_{t-1,j}^2 + \EE_t\left[ \bg_{t,j} \right]^2}} \right] \\
        \overset{\eqref{eq:asm_anisotropic_noise}}{\ge}&
        \sum_{j=1}^d \EE\left[ \frac{\Abs{\nabla f_{t,j}}^2}{\sqrt{\bla_{t-1,j}^2 + \nabla f_{t,j}^2 + \bsigma_j^2}} \right] \\
        \ge&
        \sum_{j=1}^d \EE\left[ \frac{\Abs{\nabla f_{t,j}}^2}{\sqrt{\bla_{T-1,j}^2 + \sum_{s=0}^{T-1} \nabla f_{s,j}^2 + \bsigma_j^2}} \right] .
    \end{align*}
    Thus by taking summation we have
    \begin{align}\label{eq:proof_thm_nonconvex_frac_left}
    \begin{split}
        \sum_{t=0}^{T-1} \EE\left[ \dotprod{\nabla f(\bw_t)}{\bLat_{t}^{-1}\bg_t} \right] 
        \ge&
        \sum_{j=1}^d \sum_{t=0}^{T-1} \EE\left[ \frac{\Abs{\nabla f_{t,j}}^2}{\sqrt{\bla_{T-1,j}^2 + \sum_{s=0}^{T-1} \nabla f_{s,j}^2 + \bsigma_j^2}} \right] \\
        =& \sum_{j=1}^d \EE\left[ \frac{\sum_{t=0}^{T-1} \Abs{\nabla f_{t,j}}^2}{\sqrt{\bla_{T-1,j}^2 + \sum_{s=0}^{T-1} \nabla f_{s,j}^2 + \bsigma_j^2}} \right] \\
        \ge&
        \sum_{j=1}^d \frac{\EE\left[ \sqrt{\sum_{t=0}^{T-1} \Abs{\nabla f_{t,j}}^2} \right]^2}{\EE\left[ \sqrt{\bla_{T-1,j}^2 + \sum_{s=0}^{T-1} \nabla f_{s,j}^2 + \bsigma_j^2} \right]} \\
        \ge&
        \frac{\EE\left[ \sum_{j=1}^d \sqrt{\sum_{t=0}^{T-1} \Abs{\nabla f_{t,j}}^2} \right]^2}{\sum_{j=1}^d \EE\left[ \sqrt{\bla_{T-1,j}^2 + \sum_{s=0}^{T-1} \nabla f_{s,j}^2 + \bsigma_j^2} \right]},
    \end{split}
    \end{align}
    where in the second last and last inequality we apply Cauchy-Schwarz inequality
    \begin{align*}
        \EE[\Abs{XY}] \le \EE\left[ \Abs{X}^2 \right]^{\frac{1}{2}} \cdot \EE\left[ \Abs{Y}^2 \right]^{\frac{1}{2}} \quad \text{and} \quad 
        \sum_{j=1}^d \Abs{X_jY_j} \le \left( \sum_{j=1}^d X_j^2 \right)^{\frac{1}{2}} \left( \sum_{j=1}^d Y_j^2 \right)^{\frac{1}{2}}.
    \end{align*}
    We can further deal with the denominator such that for all $j \in [d]$,
    \begin{align}\label{eq:proof_thm_nonconvex_denominator}
    \begin{split}
        \EE\left[ \sqrt{\bla_{T-1,j}^2 + \sum_{s=0}^{T-1} \nabla f_{s,j}^2 + \bsigma_j^2} \right] \le& \EE\left[ \sqrt{\sum_{s=0}^{T-1} \bg_{s,j}^2 } \right] + \EE\left[ \sqrt{\sum_{s=0}^{T-1} \nabla f_{s,j}^2 + \bsigma_j^2} \right] \\
        =&
        \EE\left[ \sqrt{\sum_{s=0}^{T-1} \bg_{s,j}^2 } \right] + \EE\left[ \sqrt{\sum_{s=0}^{T-1} \left( \bg_{s,j} - \bn_{s,j} \right)^2 + \bsigma_j^2} \right] \\
        \le&
        \EE\left[ \sqrt{\sum_{s=0}^{T-1} \bg_{s,j}^2 } \right] + \EE\left[ \sqrt{\sum_{s=0}^{T-1} 2\bg_{s,j}^2 + 2\bn_{s,j}^2 + \bsigma_j^2} \right] \\
        \le&
        3 \EE\left[ \sqrt{\sum_{s=0}^{T-1} \bg_{s,j}^2 } \right] + \EE\left[ \sqrt{\sum_{s=0}^{T-1} 2\bn_{s,j}^2 + \bsigma_j^2} \right] \\
        \overset{\eqref{eq:asm_anisotropic_noise}}{\le}&
        3 \EE\left[ \sqrt{\sum_{s=0}^{T-1} \bg_{s,j}^2} \right] + \sqrt{T+1} \bsigma_j = 3\EE\left[ \bla_{t,j} \right] + \sqrt{T+1} \bsigma_j
    \end{split}
    \end{align}
    where the first and the third inequality holds as $\sqrt{a+b} \le \sqrt{a} + \sqrt{b}$ for $a,b>0$ and the second inequality holds as $(a+b)^2 \le 2a^2+2b^2$ for $a,b\in\RR$.
    Therefore, with Lemma~\ref{lem:bound_dotprod_error} bounding $\tr{\bLa_t}$, combining \eqref{eq:proof_thm_nonconvex_main} with \eqref{eq:proof_thm_nonconvex_frac_left} and \eqref{eq:proof_thm_nonconvex_denominator} we can obtain that
    \begin{align*}
        &\EE\left[ \sum_{j=1}^d \sqrt{\sum_{t=0}^{T-1} \Abs{\nabla f_{t,j}}^2} \right]^2 \\
        \overset{\eqref{eq:proof_thm_nonconvex_main},\eqref{eq:proof_thm_nonconvex_frac_left}}{\le}&  \sum_{j=1}^d \EE\left[ \sqrt{\bla_{T-1,j}^2 + \sum_{s=0}^{T-1} \nabla f_{s,j}^2} + \bsigma_j \right] \\
        &\cdot \left( \frac{4}{\eta} \EE [f(\bw_{0}) - f(\bw_T)] + \left(12 \Norm{\bsigma}_1 + 2\eta\Norm{\bL_0}_1\right) \ln\left( \EE\left[ \frac{(\tr{\bLa_{T-1})^2}}{\epsilon} \right] \right) \right) \\
        \overset{\eqref{eq:proof_thm_nonconvex_denominator}}{\le}&
        \left( 3\EE\left[ \tr{\bLa_{T-1}} \right] + \sqrt{T+1}  \Norm{\bsigma}_1 \right) \\
        & \cdot \left( \frac{4}{\eta} \EE [f(\bw_{0}) - f(\bw_T)] + \left(12 \Norm{\bsigma}_1 + 2\eta\Norm{\bL_0}_1\right) \ln\left( \EE\left[ \frac{(\tr{\bLa_{T-1})^2}}{\epsilon} \right] \right) \right) \\
        \overset{\eqref{eq:lem_bound_bla}}{\le}&
        \left( 30 B(\eta) + 6d \epsilon + 76 \sqrt{T+1} V \right) \cdot ( 4B(\eta) + 12V ) ,
    \end{align*}
    where we denote
    \begin{align*}
        &B(\eta) = \frac{1}{\eta} ( f(\bw_{0}) - f^* ) + \eta \Norm{\bL_0}_1 C_{\log}, \quad
        V = \Norm{\bsigma}_1 C_{\log} 
    \end{align*}
    and
    \begin{align*}
        C_{\log} =& \ln\Bigg( 2 d \epsilon + \frac{4}{\eta}(f(\bw_0) - f^*) \\
        &+ 5 \left( 2\eta \Norm{\bL_0}_1 + 5\sqrt{T} \Norm{\bsigma}_1 \right)
        \ln\left( \frac{2\eta \Norm{\bL_0}_1 + 5\sqrt{T} \Norm{\bsigma}_1}{\epsilon} + {\rm e} \right) \Bigg) \\
        =& \cO\left( \log T \right)
    \end{align*}
    based on Lemma~\ref{lem:bound_bla}.
    By taking $\eta = \min\left\{ \frac{1}{4\Norm{\bL_1}_\infty}, \sqrt{\frac{\Norm{\bL_0}_1}{\Delta}} \right\}$, where $\Delta \triangleq f(\bw_0) - f^*$, we can obtain that
    \begin{align*}
        B(\eta) = \Tilde{\cO}\left( \sqrt{\Norm{\bL_0}_1 \Delta} + \Norm{\bL_1}_\infty \Delta \right) .
    \end{align*}
    Then combining the fact proven in Lemma~\ref{lem:comparison_measure} that
    \begin{align*}
        \sum_{j=1}^d \sqrt{\sum_{t=0}^{T-1} \Abs{\nabla f_{t,j}}^2} \ge \sqrt{ \sum_{t=0}^{T-1} \Norm{\nabla f(\bw_t)}_1^2 }
    \end{align*}
    and dividing $T$ in both side, we can obtain that
    \begin{align*}
        \frac{1}{T} \left( \EE\left[ \sqrt{\sum_{t=0}^{t-1} \Norm{\nabla f(\bw_t)}_1^2 } \right] \right)^2  =& \Tilde{\cO}\left( \frac{\sqrt{\Norm{\bL_0}_1 \Delta } \Norm{\bsigma}_1 }{\sqrt{MT}} + \frac{\Norm{\bsigma}_1^2}{M\sqrt{T}} + \frac{ \Norm{\bL_0}_1 \Delta }{T} \right) 
        \\
        &+ \Tilde{\cO}\left( \frac{ \Norm{\bL_1}_\infty \Delta \Norm{\bsigma}_1 }{\sqrt{MT}} + \frac{\Norm{\bL_1}_\infty^2 \Delta^2 }{T} \right)
        \\
        &+ \Tilde{\cO}\left( \frac{d\epsilon \left( \sqrt{\Norm{\bL_0}_1 \Delta}  + \Norm{\bL_1}_\infty \Delta \right) }{T} + \frac{d\epsilon \Norm{\bsigma}_1 }{\sqrt{M}T}\right) ,
    \end{align*}
    which concludes the proof.
\end{proof}

\section{Useful Lemmas}

\begin{lemma}[A useful inequality]\label{lem:useful_inequality}
    Assume a non-negative sequence $\{x_j\}_{j=1}^n$ and a positive sequence $\{s_j\}_{j=1}^n$ with $S=\sum_{i=1}^ns_j$, it holds that
    \begin{align}\label{eq:lem_useful_inequality}
        \frac{1}{S}\sum_{j=1}^n x_j \le \sqrt{\frac{1}{S}\sum_{j=1}^n \frac{x_j^2}{s_j}}.
    \end{align}
    The inequality holds as an equality if and only if for all $i=1,\cdots,n$ and $j=1,\cdots,n$, $$\frac{x_i}{s_i}=\frac{x_j}{s_j}.$$ 
\end{lemma}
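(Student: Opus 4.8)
The plan is to obtain \eqref{eq:lem_useful_inequality} as a one-line consequence of the Cauchy--Schwarz inequality. First I would write each summand in the factored form $x_j = \dfrac{x_j}{\sqrt{s_j}}\cdot\sqrt{s_j}$, which is legitimate because every $s_j>0$, and then apply Cauchy--Schwarz to the two sequences $\left(\dfrac{x_j}{\sqrt{s_j}}\right)_{j=1}^n$ and $\left(\sqrt{s_j}\right)_{j=1}^n$:
\begin{align*}
\sum_{j=1}^n x_j \;=\; \sum_{j=1}^n \frac{x_j}{\sqrt{s_j}}\cdot\sqrt{s_j} \;\le\; \sqrt{\sum_{j=1}^n \frac{x_j^2}{s_j}}\cdot\sqrt{\sum_{j=1}^n s_j} \;=\; \sqrt{S}\,\sqrt{\sum_{j=1}^n \frac{x_j^2}{s_j}}.
\end{align*}
Dividing both sides by $S$ and using $\sqrt{S}/S = 1/\sqrt{S}$ gives exactly the claimed bound. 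Since the hypothesis $x_j\ge 0$ makes the left-hand side nonnegative, there is no sign ambiguity in this manipulation.

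For the equality characterization I would invoke the equality case of Cauchy--Schwarz: $\sum_j a_j b_j = \sqrt{\sum_j a_j^2}\,\sqrt{\sum_j b_j^2}$ holds iff the vectors $(a_j)$ and $(b_j)$ are linearly dependent. Here $b_j=\sqrt{s_j}>0$, so $(b_j)\ne 0$ and linear dependence means $a_j = \lambda b_j$ for some scalar $\lambda$ and all $j$. With $a_j = x_j/\sqrt{s_j}$ this reads $x_j/\sqrt{s_j}=\lambda\sqrt{s_j}$, i.e.\ $x_j/s_j=\lambda$ for every $j$; hence equality holds precisely when $x_i/s_i = x_j/s_j$ for all $i,j$, as asserted.

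An equally short alternative, which I would mention, is to apply Jensen's inequality to the convex function $u\mapsto u^2$ with the probability weights $w_j = s_j/S$ at the points $u_j = x_j/s_j$: one has $\sum_j w_j u_j = \frac1S\sum_j x_j$ and $\sum_j w_j u_j^2 = \frac1S\sum_j \frac{x_j^2}{s_j}$, so $\bigl(\frac1S\sum_j x_j\bigr)^2 \le \frac1S\sum_j \frac{x_j^2}{s_j}$, and taking square roots (both sides nonnegative) recovers the inequality, with strict convexity of $u\mapsto u^2$ yielding the same equality condition. There is no genuine obstacle here; the only points meriting a word of care are that the $s_j$ are strictly positive, so the divisions and the factorization are well defined, and that the $x_j$ are nonnegative, so that squaring and taking square roots is a reversible operation when reading off the equality case.
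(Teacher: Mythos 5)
Your proof is correct and is essentially the same argument as the paper's: the paper proves the inequality by expanding $S\sum_j x_j^2/s_j - (\sum_j x_j)^2$ into the sum of squares $\sum_{i\neq j}\bigl(\sqrt{s_i/s_j}\,x_j - \sqrt{s_j/s_i}\,x_i\bigr)^2 \ge 0$, which is just the Cauchy--Schwarz inequality verified by hand (and the paper says as much), with the equality case read off from the vanishing of each square exactly as in your linear-dependence argument.
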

\begin{proof}
    We first multiply $S$ by both sides and take the square, then the right-hand side minus the left-hand side will be
    \begin{align*}
        \sum_{j=1}^n \frac{S}{s_j}x_j^2 - \left( \sum_{j=1}^n x_j \right)^2 
        =& \sum_{j=1}^n \sum_{i=1}^n \frac{s_i}{s_j}x_j^2 - \sum_{j=1}^n\sum_{i=1}^n x_ix_j \\
        =&
        \sum_{j\neq i} \frac{s_i}{s_j}x_j^2 - \sum_{j\neq i} x_ix_j \\
        =&
        \sum_{j\neq i} \left( \sqrt{\frac{s_i}{s_j}}x_j - \sqrt{\frac{s_j}{s_i}}x_i \right)^2
        \geq 0,
    \end{align*}
    which concludes the proof. Note that this result is an application of the Cauchy-Schwarz inequality.
\end{proof}

\begin{lemma}[A basic inequality]\label{lem:basic_inequality}
    For $c\ge 0$ and $x,y\in\RR$, it holds that
    \begin{align}\label{eq:lem_basic_inequality}
        \Abs{xy} \le \frac{c}{2} x^2 + \frac{1}{2c} y^2.
    \end{align}
\end{lemma}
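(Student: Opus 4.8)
The plan is to derive \eqref{eq:lem_basic_inequality} by completing the square, i.e.\ it is simply Young's inequality packaged in a convenient parametrized form. First I would treat the generic case $c > 0$, which is the only one ever used downstream (for instance in the proof of Lemma~\ref{lem:bound_dotprod_error}, where $c = 2\sigma_j^2/\blat_{t,j} > 0$). There both $\sqrt{c}\,\Abs{x}$ and $\tfrac{1}{\sqrt{c}}\,\Abs{y}$ are well-defined nonnegative reals, so expanding the obvious nonnegative quantity gives
\begin{align*}
    0 \le \left( \sqrt{c}\,\Abs{x} - \tfrac{1}{\sqrt{c}}\,\Abs{y} \right)^2 = c\,x^2 - 2\Abs{xy} + \tfrac{1}{c}\,y^2 .
\end{align*}
Rearranging and dividing by $2$ yields $\Abs{xy} \le \tfrac{c}{2}x^2 + \tfrac{1}{2c}y^2$, which is exactly the claim; equivalently one may invoke AM--GM on the two nonnegative numbers $\tfrac{c}{2}x^2$ and $\tfrac{1}{2c}y^2$, whose geometric mean is $\sqrt{\tfrac{c}{2}x^2\cdot\tfrac{1}{2c}y^2} = \tfrac{1}{2}\Abs{xy}$, with equality iff $c\,x^2 = \tfrac{1}{c}y^2$.

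For the boundary case $c = 0$ the term $\tfrac{1}{2c}y^2$ is read as $+\infty$ when $y \neq 0$ (so the bound is vacuous) and both sides vanish when $y = 0$, so one may equivalently restrict the statement to $c > 0$. I do not anticipate any real obstacle here: the content is a one-line completion of the square, and the only reason to isolate it as a lemma is that this precise form — carrying a free parameter $c$ to be tuned pointwise — is what later lets Lemmas~\ref{lem:bound_dotprod_error} and~\ref{lem:bound_bla} split mixed products of the shape $\Abs{\nabla f_{t,j}\,\bg_{t,j}}\cdot(\cdots)$ into one piece controllable by $\EE_t[\bg_{t,j}^2/\bla_{t,j}^2]$ and another controllable by $\nabla f_{t,j}^2/\blat_{t,j}$.
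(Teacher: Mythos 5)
Your proof is correct and is exactly the standard completion-of-the-square (Young's inequality) argument that the paper implicitly intends — the paper states this lemma without proof. Your side remark is also apt: the hypothesis should really read $c>0$ (the case $c=0$ makes $\frac{1}{2c}y^2$ undefined), and indeed every downstream application, e.g.\ in Lemma~\ref{lem:bound_dotprod_error}, uses a strictly positive $c$.
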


\begin{lemma}[sum of ratios with the denominator being the sum of past numerators]\label{lem:sum_ratio}
    Assume a non-negative sequence $(a_n)$ and $\epsilon>0$. We define $b_n=\sum_{i=1}^n a_i$. Then it holds that
    \begin{align}\label{eq:lem_sum_ratio}
        \sum_{t=1}^{N} \frac{a_t}{b_t + \epsilon} \le \ln \left( \frac{b_N+\epsilon}{\epsilon} \right).
    \end{align}
\end{lemma}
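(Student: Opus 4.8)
The plan is to prove \eqref{eq:lem_sum_ratio} one summand at a time and then telescope; reading the denominator in \eqref{eq:lem_sum_ratio} as $b_t+\epsilon$ (this is how the lemma is invoked in the proof of Lemma~\ref{lem:bound_bla}, where $\bla_{t,j}^2=\epsilon+\sum_{s\le t}\bg_{s,j}^2$), the statement is the classical AdaGrad log-telescoping bound. Set $b_0:=0$, so that $b_t-b_{t-1}=a_t\ge 0$ and $b_{t-1}+\epsilon\ge\epsilon>0$ for every $t$. In particular every denominator occurring below is strictly positive, which also takes care of the degenerate case $a_t=0$ since then the per-term bound is $0\le 0$.

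The one elementary fact I would use is $\log(1+u)\ge u/(1+u)$ for all $u\ge 0$ — equivalently $-\log(1-v)\ge v$ for $v\in[0,1)$ — which is immediate from concavity of $\log$ or a one-line derivative check. Applying it with $u=a_t/(b_{t-1}+\epsilon)\ge 0$, and noting $u/(1+u)=a_t/(b_{t-1}+\epsilon+a_t)=a_t/(b_t+\epsilon)$ while $1+u=(b_t+\epsilon)/(b_{t-1}+\epsilon)$, gives
\[
\frac{a_t}{b_t+\epsilon}\ \le\ \log\!\left(\frac{b_t+\epsilon}{b_{t-1}+\epsilon}\right)\qquad\text{for each }t.
\]
Summing over $t=1,\dots,N$, the right-hand side telescopes to $\log\big((b_N+\epsilon)/(b_0+\epsilon)\big)=\log\big((b_N+\epsilon)/\epsilon\big)$, which is exactly \eqref{eq:lem_sum_ratio}.

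There is essentially no obstacle here; the only point that needs a moment of care is ensuring the denominators are strictly positive so the per-term inequality and the telescoping logarithms are well defined, and this is guaranteed by $\epsilon>0$ (and by adopting the convention $b_0=0$). If one prefers to avoid the $\log(1+u)$ estimate, an equivalent integral-comparison route is to write $a_t/(b_t+\epsilon)=\int_{b_{t-1}}^{b_t}\frac{ds}{b_t+\epsilon}\le\int_{b_{t-1}}^{b_t}\frac{ds}{s+\epsilon}$ (using $s\le b_t$ on the interval) and sum to get $\int_{0}^{b_N}\frac{ds}{s+\epsilon}=\log\big((b_N+\epsilon)/\epsilon\big)$; this is the same computation in disguise, and I would keep whichever presentation reads more cleanly in context.
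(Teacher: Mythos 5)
Your proof is correct and is essentially the same argument as the paper's: the paper bounds each term by $-\ln\bigl(1-\tfrac{a_t}{b_t+\epsilon}\bigr)=\ln\tfrac{b_t+\epsilon}{b_{t-1}+\epsilon}$ and telescopes, which is exactly your $\log(1+u)\ge u/(1+u)$ step in a different parametrization. You are also right that the denominator in the displayed statement should read $b_t+\epsilon$ rather than $b_t+b_t$; the paper's own proof (and its use in Lemma~\ref{lem:bound_bla}) confirms this.
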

\begin{proof}
    It holds that
    \begin{align*}
        \frac{a_t}{b_t + \epsilon} \le& - \ln\left( 1 - \frac{a_t}{\epsilon+ b_t} \right) \\
        =&
        \ln( b_t + \epsilon ) - \ln (b_t-a_t+\epsilon) \\
        =&
        \ln( b_t + \epsilon ) - \ln (b_{t-1} + \epsilon) ,
    \end{align*}
    where the inequality is based on the fact that $x \le \ln(1+x)$ for all $x > -1$ and we set $b_0=0$. Then by summing up for $1$ to $N$ we finish the proof.
\end{proof}

\begin{lemma}[Solving inequality $x\le C_0 + C_1 \ln x$]\label{lem:inequality_log}
    Assume $C_1 \ge C_0 \ge 0$ and $x > 0$. If $x \le C_0 + C_1 \ln x$ (where $\ln$ denotes the natural logarithm), it holds that for $\zeta \ge 5$,
    \begin{align}\label{eq:lem_inequality_log}
        x \le 2C_0 + \zeta C_1 \ln (C_1 + e) .
    \end{align}
\end{lemma}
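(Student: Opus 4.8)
The plan is to turn the self-referential inequality $x \le C_0 + C_1\ln x$ into an ordinary linear inequality in $x$ by replacing $\ln x$ with an affine upper bound taken at a suitably chosen scale. The only external fact I need is the elementary inequality $\ln t \le t/e$, valid for every $t>0$ (the function $t\mapsto t/e-\ln t$ attains its minimum value $0$ at $t=e$). An alternative that works just as well is $\ln t \le t-1$; I will use the former for concreteness.

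Concretely, fix a scale $a>0$, to be pinned down in a moment, and write $\ln x = \ln(x/a) + \ln a \le \frac{x}{ea} + \ln a$. Substituting into the hypothesis gives
\[
x \;\le\; C_0 + C_1\ln a + \frac{C_1}{ea}\,x .
\]
The role of $a$ is to force the coefficient $C_1/(ea)$ of $x$ on the right to be at most $\tfrac12$, so that this term can be absorbed into the left-hand side. Taking $a = \frac{2(C_1+e)}{e}$ does exactly this, since then $\frac{C_1}{ea} = \frac{C_1}{2(C_1+e)} \le \tfrac12$; moreover $a \ge 2$, so $\ln a \ge 0$ and the bracket $C_0 + C_1\ln a$ is nonnegative. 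Rearranging yields $\tfrac12 x \le C_0 + C_1\ln a$, i.e. $x \le 2C_0 + 2C_1\ln a$.

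It then remains to simplify $\ln a$: since $\ln a = \ln(C_1+e) + \ln 2 - 1$ and $\ln 2 < 1 \le \ln(C_1+e)$ (using $C_1+e\ge e$), we get $\ln a \le \ln(C_1+e)$, hence $x \le 2C_0 + 2C_1\ln(C_1+e)$. As $C_0$, $C_1$ and $\ln(C_1+e)$ are all nonnegative, this is dominated by $2C_0 + \zeta C_1\ln(C_1+e)$ for any $\zeta \ge 2$, and a fortiori for $\zeta \ge 5$, which is the claim. I do not expect a genuine obstacle here: the argument hinges entirely on choosing the scale $a\propto C_1+e$ so that it simultaneously drives the absorbed coefficient down to $\tfrac12$ and keeps $\ln a \le \ln(C_1+e)$, and the comfortable gap between the $\zeta=2$ one actually obtains and the $\zeta=5$ asserted in the statement means no constants need to be optimized. (Incidentally, this proof uses neither the hypothesis $C_1\ge C_0$ nor any lower bound on $x$ beyond $x>0$; the assumption $C_1\ge C_0$ is presumably included only because that is how Lemma~\ref{lem:bound_bla} invokes the result.)
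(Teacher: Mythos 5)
Your proof is correct, and it takes a genuinely different route from the paper's. The paper argues by monotonicity: it sets $g(x) = x - C_1\ln x - C_0$, notes $g$ is convex with minimum at $x=C_1$ and increasing thereafter, and then verifies directly that $g(z)\ge 0$ at the candidate point $z = 2C_0 + 5C_1\ln(C_1+e)$; that verification requires splitting off the $2C_0$ term and then a separate case analysis on an auxiliary function $h(y) = \zeta\ln(y+e) - \ln(\zeta y\ln(y+e))$ over $y\ge 1$ and $y\in[0,1)$, which is where the constant $\zeta=5$ comes from. You instead linearize the logarithm via $\ln t \le t/e$ at the scale $a = 2(C_1+e)/e$, absorb the resulting $\frac{C_1}{2(C_1+e)}x \le \frac{x}{2}$ term into the left-hand side, and are done in three lines. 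Your argument is more elementary (no monotonicity or convexity discussion, no auxiliary function), yields the stronger constant $\zeta = 2$ rather than $\zeta = 5$, and correctly identifies that the hypothesis $C_1 \ge C_0$ is never used --- it is indeed only an artifact of how Lemma~\ref{lem:bound_bla} invokes the result (and in fact the paper's own proof also only uses $C_0 \ge 0$ and $\zeta\ge 2$ in the step bounding $C_1\ln(2C_0 + \zeta C_1\ln(C_1+e))$). Since the lemma is only ever applied inside a $\Tilde{\cO}(\cdot)$ bound, the improved constant does not change anything downstream, but your derivation would be a cleaner replacement for the one in the paper.
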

\begin{proof}
    Denote $g(x) = x - C_1 \ln x - C_0$. Then we have $g$ is a convex function and attains uniform lower bound at $C_1$. For $x\ge C_1$, $g$ is a monotonically increasing function. Therefore, to verify \eqref{eq:lem_inequality_log}, it is equivalent to verify that if $x = 2C_0 + \zeta C_1 \ln (C_1 + e)$, where $\zeta=5$, then $x \ge C_0 + C_1\ln x$. We begin the verification then.

    Denote $z = 2C_0 + \zeta C_1 \ln (C_1 + e)$. Then we have
    \begin{align*}
        z - ( C_0 + C_1 \ln z ) =& z - \left( C_0 + C_1 \ln \left( 2C_0 + \zeta C_1 \ln (C_1 + e) \right) \right) \\
        \ge&
        z - \left( C_0 + \frac{1}{2} \cdot 2C_0 + C_1 \ln\left( \zeta C_1 \ln (C_1 + e) \right) \right) \\
        =&
         C_1 \left( \zeta \ln (C_1 + e) - \ln\left( \zeta C_1 \ln (C_1 + e) \right) \right) ,
    \end{align*}
    where the second inequality is based on the fact that 
    $$ C_1 \ln\left( 2C_0 + \zeta C_1 \ln (C_1 + e) \right) \le \frac{1}{2} C_0 + C_1 \ln\left( \zeta C_1 \ln (C_1 + e) \right) $$
    for $C_0 \ge 0$ and $\zeta \ge 2$.
    Then let us consider function $h(y) = \zeta \ln (y + e) - \ln\left( \zeta y \ln (y + e) \right)$ for $y \ge 0$ and we have $h(1) \ge 0$. We also have 
    \begin{align*}
        h'(y) = \frac{\zeta}{y+e} - \frac{1}{\zeta y \ln(y+e)} \left( \zeta \ln (y + e) + \frac{\zeta y}{y+e} \right) = 
        \frac{\zeta}{y+e} - \frac{1}{y} - \frac{1}{(y+e) \ln (y + e)} ,
    \end{align*}
    which implies that if $y \ge 1$, we have $h'(y) \ge 0$ and thus $h(y) \ge 0$ for $y \ge 1$. For $y \in [0,1)$, it is also straightforward to obtain that $h(y) \ge \zeta - \ln(\zeta \ln(1+e)) \ge 0$. Therefore, we conclude that $h(y) \ge 0$. By substituting $y = C_1$, we have
    \begin{align*}
        z - ( C_0 + C_1 \ln z ) \ge 0
    \end{align*}
    and thus $x \le C_0 + C_1 \ln x$ implies $x \le z$, which finishes the proof.
\end{proof}

\begin{lemma}[comparison of measures]\label{lem:comparison_measure}
    For a sequence $\{a_{t,j}\}$ with $t=1,...,T$ and $j=1,...,d$, it holds that
    \begin{align}\label{eq:lem_comparison_measure}
        \left( \sum_{j=1}^d \sqrt{\sum_{t=1}^T a_{t,j}^2} \right)^2 \ge \sum_{t=1}^T \left( \sum_{j=1}^d \Abs{a_{t,j}} \right)^2.
    \end{align}
\end{lemma}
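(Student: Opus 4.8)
The plan is to read the left-hand side of~\eqref{eq:lem_comparison_measure} as the square of a sum of Euclidean norms and then invoke the triangle inequality (Minkowski's inequality) for $\Normtwo{\cdot}$ on $\RR^T$. Concretely, for each $j\in[d]$ I would introduce the vector $x_j\in\RR^T$ whose $t$-th coordinate is $\Abs{a_{t,j}}$, so that $\Normtwo{x_j}=\sqrt{\sum_{t=1}^T a_{t,j}^2}$ and the left-hand side becomes exactly $\left(\sum_{j=1}^d \Normtwo{x_j}\right)^2$.

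Next I would apply the triangle inequality $d-1$ times to obtain $\sum_{j=1}^d \Normtwo{x_j}\ge \Normtwo{\sum_{j=1}^d x_j}$. Since the $t$-th coordinate of $\sum_{j=1}^d x_j$ is $\sum_{j=1}^d \Abs{a_{t,j}}$, we have $\Normtwo{\sum_{j=1}^d x_j}^2=\sum_{t=1}^T\left(\sum_{j=1}^d\Abs{a_{t,j}}\right)^2$, which is precisely the right-hand side of~\eqref{eq:lem_comparison_measure}. Squaring the displayed norm inequality then finishes the proof.

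If a self-contained argument is preferred, the same bound also follows by induction on $d$: the case $d=1$ is an identity, and in the inductive step, after expanding the square, the only nontrivial cross term reduces to the two-vector inequality $\sqrt{\sum_{t} u_t^2}\sqrt{\sum_{t} v_t^2}\ge \sum_{t}\Abs{u_t}\Abs{v_t}$, which is the Cauchy--Schwarz inequality applied coordinatewise in $t$. Either way the proof is only a few lines; I do not expect a real obstacle here, the single idea being to package the columns $(\Abs{a_{t,j}})_{t=1}^T$ as vectors in $\RR^T$ so that the claim becomes literally the triangle inequality for $\Normtwo{\cdot}$, followed by squaring both sides.
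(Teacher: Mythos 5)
Your proposal is correct, and it is essentially the paper's argument: the paper proves the lemma by expanding both squares, cancelling the diagonal terms, and reducing each cross term $j_1\neq j_2$ to the Cauchy--Schwarz inequality $\sum_t a_{t,j_1}^2\cdot\sum_t a_{t,j_2}^2\ge\left(\sum_t\Abs{a_{t,j_1}a_{t,j_2}}\right)^2$, which is exactly your second, self-contained variant. Your primary phrasing via Minkowski's inequality for $\Normtwo{\cdot}$ on $\RR^T$ is just the named packaging of the same computation, so there is no substantive difference.
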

\begin{proof}
    It holds that
    \begin{align*}
        \left( \sum_{j=1}^d \sqrt{\sum_{t=1}^T a_{t,j}^2} \right)^2 \ge& \sum_{t=1}^T \left( \sum_{j=1}^d \Abs{a_{t,j}} \right)^2 \\
        \Longleftrightarrow\quad 
        \sum_{j_1\neq j_2}^d \sqrt{\sum_{t=1}^T a_{t,j_1}^2} \cdot \sqrt{\sum_{t=1}^T a_{t,j_2}^2} \ge&
        \sum_{t=1}^T \sum_{j_1\neq j_2}^d \Abs{a_{t,j_1}a_{t,j_2}} \\
        \Longleftrightarrow \quad
        \sum_{t=1}^T a_{t,j_1}^2 \cdot \sum_{t=1}^T a_{t,j_2}^2 \ge&
        \left( \sum_{t=1}^T \Abs{a_{t,j_1}a_{t,j_2}} \right)^2, \quad \forall j_1\neq j_2.
    \end{align*}
    The last inequality follows from the Cauchy-Schwarz Inequality. Thus we finish the proof.
\end{proof}


\end{document}